\documentclass[final]{hld2026}

\usepackage[T1]{fontenc}
\usepackage[utf8]{inputenc}
\usepackage{mathtools}
\usepackage{amsfonts}
\usepackage{amsbsy}
\usepackage{bm}
\usepackage{mathrsfs}
\usepackage{dsfont}
\usepackage{bbm}
\usepackage{booktabs}
\usepackage[table]{xcolor}
\usepackage{enumitem}
\usepackage{microtype}
\usepackage{float}
\hypersetup{
  pdftitle={A Coulomb Particle Model for Learning Kernel Attention in Transformers},
  pdfauthor={Masoud Badiei Khuzani, Atiq Islam, Alex Cozzi, Abraham Bagherjeiran},
  hypertexnames=false
}
\title[Learning Kernel Attention]{A Coulomb Particle Model for Learning Kernel Attention in Transformers}

\hldauthor{%
\Name{Masoud Badiei Khuzani} \Email{mbadieikhuzani@ebay.com}\\
\addr {eBay Inc., San Jose, CA, USA}
\AND
\Name{Sharath Honnaiah}
 \Email{shonnaiah@ebay.com}\\
\addr {eBay Inc., San Jose, CA, USA}
\AND
\Name{Atiq Islam} \Email{atislam@ebay.com}\\
\addr {eBay Inc., San Jose, CA, USA}
\AND
\Name{Alex Cozzi} \Email{acozzi@ebay.com}\\
\addr {eBay Inc., San Jose, CA, USA}
\AND
\Name{Abraham Bagherjeiran} \Email{abagherjeiran@ebay.com}\\
\addr {eBay Inc., San Jose, CA, USA}}
\numberwithin{equation}{section}

\makeatletter
\newcommand{\hild@maybe@note}[1]{%
  \if\relax\detokenize{#1}\relax\else\ (#1)\fi%
}
\@ifundefined{c@theorem}{%
  \newcounter{theorem}[section]
  \renewcommand{\thetheorem}{\thesection.\arabic{theorem}}
}{}
\@ifundefined{theorem}{%
  \newenvironment{theorem}[1][]{%
    \refstepcounter{theorem}%
    \par\medskip\noindent\textbf{Theorem~\thetheorem\hild@maybe@note{#1}.}\itshape\ }%
    {\par\medskip}%
}{}
\@ifundefined{lemma}{%
  \newenvironment{lemma}[1][]{%
    \refstepcounter{theorem}%
    \par\medskip\noindent\textbf{Lemma~\thetheorem\hild@maybe@note{#1}.}\itshape\ }%
    {\par\medskip}%
}{}
\@ifundefined{proposition}{%
    {\par\medskip}%
}{}
\@ifundefined{corollary}{%
    {\par\medskip}%
}{}
\@ifundefined{definition}{%
  \newenvironment{definition}[1][]{%
    \refstepcounter{theorem}%
    \par\medskip\noindent\textbf{Definition~\thetheorem\hild@maybe@note{#1}.}\ }%
    {\par\medskip}%
}{}
\@ifundefined{example}{%
    {\par\medskip}%
}{}
\@ifundefined{remark}{%
    {\par\medskip}%
}{}
\@ifundefined{c@assumption}{%
  \newcounter{assumption}
  \renewcommand{\theassumption}{A.\arabic{assumption}}
}{}
\@ifundefined{assumption}{%
  \newenvironment{assumption}[1][]{%
    \refstepcounter{assumption}%
    \par\medskip\noindent\textbf{Assumption~\theassumption\hild@maybe@note{#1}.}\itshape\ }%
    {\par\medskip}%
}{}
\@ifundefined{proof}{%
  \newenvironment{proof}[1][Proof]{%
    \par\medskip\noindent\textit{#1.}\ }%
    {\hfill$\square$\par\medskip}%
}{}
\makeatother

\DeclareMathOperator{\Ent}{Ent}
\DeclareMathOperator{\Law}{Law}

\newcommand{\real}{\mathbb{R}}

\newcommand{\prob}{\mathbb{P}}
\newcommand{\expect}{\mathbb{E}}

\newcommand{\defeq}{\stackrel{\mathrm{def}}{=}}

\newcommand{\safeincludegraphics}[2][]{%
  \IfFileExists{#2}{\includegraphics[#1]{#2}}{%
    \fbox{\parbox{0.86\linewidth}{\centering Missing figure file: \texttt{\detokenize{#2}}}}%
  }%
}

\begin{document}
\maketitle

\begin{abstract}%
Randomized features provide a scalable approximation to kernel machines, but their performance depends strongly on the choice of feature distribution. We propose a particle-based method that learns this distribution by optimizing kernel-target alignment while regularizing particles with a Riesz/Coulomb repulsive potential. The resulting Hamiltonian yields diverse, task-adaptive random features and admits a mean-field description through a McKean--Vlasov equation. We instantiate the method in linearized Transformer attention by learning positive random-feature maps in a first alignment phase, then freezing the kernel and training the remaining network parameters with cross-entropy. Experiments on synthetic classification and sentence-level benchmarks show that learned kernelized attention can improve accuracy, calibration, and robustness for several feature maps while preserving linear-attention inference complexity.
\end{abstract}

\section{Introduction}
Kernel methods are a principled way to encode nonlinear similarity, but classical training and prediction can scale poorly with the number of samples. Random Fourier features alleviate this problem by replacing an implicit kernel with an explicit randomized map \citep{rahimi2007random}. Nevertheless, the kernel and its associated feature distribution are typically fixed before any labels are observed—a choice that can dominate downstream performance when the appropriate similarity structure is unknown.

We study a supervised kernel-learning procedure that keeps the computational advantages of random features while learning the feature distribution from labels. The key idea is to view the random features as interacting particles. A target-alignment objective attracts particles toward features that explain the labels, while a Riesz/Coulomb repulsive energy prevents collapse and encourages diversity. This gives a concrete optimization algorithm, a statistical-mechanics interpretation, and a direct route to kernelized attention in Transformers.

Our contributions are: (i) a particle Hamiltonian for target-aligned random-feature learning; (ii) a Langevin optimization procedure with a mean-field continuum limit; (iii) an instantiation for positive random-feature attention that preserves linear-time inference; and (iv) empirical evidence that the learned kernels improve several accuracy and calibration metrics on synthetic and NLP benchmarks. Full related work, proofs, and additional ablations are in the supplementary material.

\section{Particle kernel learning}
Let \(\phi:\mathcal{X}\times\Omega\to[-1,1]\) be a random feature map and let \(\mu\) be a probability measure over feature parameters. The induced kernel is
\begin{align}
K_{\mu}(\bm{x},\bm{x}')
=\expect_{\bm{\omega}\sim\mu}[\phi_{\bm{\omega}}(\bm{x})\phi_{\bm{\omega}}(\bm{x}')].
\end{align}
Given training data \(\{(\bm{x}_i,y_i)\}_{i=1}^n\), we learn \(\mu\) by maximizing kernel-target alignment \citep{cristianini2002kernel}. With particles \(\bm{\Omega}_N=(\bm{\omega}_1,\ldots,\bm{\omega}_N)\) and empirical measure \(\mu_N=N^{-1}\sum_{k=1}^N\delta_{\bm{\omega}_k}\), this becomes the finite-dimensional Hamiltonian
\begin{align}
\label{eq:main-hamiltonian}
\mathcal{H}_N(\bm{\Omega}_N)
= -\frac{1}{n(n-1)}\sum_{i\ne j}y_i y_j\frac{1}{N}\sum_{k=1}^N
\phi_{\bm{\omega}_k}(\bm{x}_i)\phi_{\bm{\omega}_k}(\bm{x}_j)
+\lambda\mathcal{W}_{N,s}(\bm{\Omega}_N),
\end{align}
where
\begin{align}
\mathcal{W}_{N,s}(\bm{\Omega}_N)
=\frac{1}{2N(N-1)}\sum_{k\ne \ell}g_s(\bm{\omega}_k-\bm{\omega}_\ell),\qquad
 g_s(\bm{\omega})=
\begin{cases}
\|\bm{\omega}\|_2^{-s}, & s>0,\\
-\log\|\bm{\omega}\|_2, & s=0.
\end{cases}
\end{align}
The first term rewards features whose empirical kernel aligns with labels; the second term spreads particles across \(\Omega\). For random Fourier features, this energy reduces to a trigonometric alignment objective involving \(\cos(\bm{X}\bm{\Omega}_N^\top)\) and \(\sin(\bm{X}\bm{\Omega}_N^\top)\), which makes the objective differentiable and easy to optimize.

\begin{algorithm2e}[!t]
    \caption{Projected Langevin feature learning}
    \label{alg:main-langevin}
    \scriptsize
    \DontPrintSemicolon
    \KwData{particles \(\{\bm\omega_k^0\}_{k=1}^N\), step \(\eta\), inverse temperature \(\beta\), threshold \(\delta\)}
    \KwResult{learned empirical feature law \(\mu_N\) and kernel \(\widehat K\)}
    Initialize \(\mu_N^0=N^{-1}\sum_k\delta_{\bm\omega_k^0}\)\;
    \Repeat{\(\mathcal D(\mu_N^m,\mu_N^{m-1})<\delta\)}{
        \For{\(k=1,\ldots,N\)}{
            Draw \(\bm\xi_k^m\sim\mathcal N(0,I)\) and set
            \(\bm\omega_k^{m+1}=\mathcal P_\Omega(\bm\omega_k^m-\eta N\nabla_{\bm\omega_k}\mathcal H_N(\bm\Omega_N^m)+\sqrt{2\eta/\beta}\,\bm\xi_k^m)\)\;
        }
        \(\mu_N^{m+1}=N^{-1}\sum_k\delta_{\bm\omega_k^{m+1}}\)\;
    }
    Return \(\widehat K(\bm{x},\bm{x}')=D^{-1}\sum_{k=1}^D\phi_{\bm\omega_k}(\bm{x})\phi_{\bm\omega_k}(\bm{x}')\)\;
\end{algorithm2e}

\section{Kernelized attention}
Self-attention can be written as a normalized kernel smoother \citep{vaswani2017attention,nadaraya1964,watson1964}:
\begin{align}
\bm{a}_i(\bm{X})
=\frac{\sum_{j=0}^{\ell}K(\bm{q}_i,\bm{k}_j)\bm{v}_j}{\sum_{j=0}^{\ell}K(\bm{q}_i,\bm{k}_j)},
\qquad
K(\bm{q},\bm{k})=\exp(\bm{q}^\top\bm{k}/\sqrt d).
\end{align}
Replacing \(K\) by a positive random-feature kernel \(K_{\bm{\Omega}_N}(\bm{q},\bm{k})=\bm{\phi}_{\bm{\Omega}_N}(\bm{q})^\top\bm{\phi}_{\bm{\Omega}_N}(\bm{k})\) gives the linearized attention estimator
\begin{align}
\widehat{\bm{a}}_i(\bm{X})
=\frac{\bm{\phi}_{\bm{\Omega}_N}(\bm{q}_i)^\top\big(\sum_j\bm{\phi}_{\bm{\Omega}_N}(\bm{k}_j)\bm{v}_j^\top\big)}{\bm{\phi}_{\bm{\Omega}_N}(\bm{q}_i)^\top\big(\sum_j\bm{\phi}_{\bm{\Omega}_N}(\bm{k}_j)\big)}.
\end{align}
This preserves the normalized kernel-smoothing form but reduces the per-head sequence-length dependence from quadratic to linear in \(\ell\), up to the feature dimension \citep{katharopoulos2020linear,choromanski2021performer,peng2021rfa}. In Phase A, we learn the feature particles \(\bm{\Omega}_N\) by alignment on sequence representations. In Phase B, we freeze \(\bm{\Omega}_N\) and train the remaining Transformer parameters using cross-entropy.

\section{Theoretical results: mean-field limit and large deviation principle} The particle view gives both a continuum training dynamics and an equilibrium concentration result.  Define the alignment-induced potential
\begin{align}
V_{\mathcal D}(\bm{\omega})
= -\frac{1}{n(n-1)}\sum_{i\ne j} y_i y_j
\phi_{\bm{\omega}}(\bm{x}_i)\phi_{\bm{\omega}}(\bm{x}_j),
\end{align}
and the continuum energy
\begin{align}
\label{eq:continuum-energy-main}
\mathcal{E}_s(\mu)=\int V_{\mathcal D}(\bm{\omega})\,d\mu(\bm{\omega})
+\frac{\lambda}{2}\iint g_s(\bm{\omega}-\bm{\omega}')\,d\mu(\bm{\omega})d\mu(\bm{\omega}').
\end{align}
The finite-particle optimizer has the following continuum training law.
\begin{theorem}[Projected-particle McKean--Vlasov mean-field limit]
    \label{thm:main-mckean-vlasov}
    Let \(\Omega\subset\mathbb R^d\) be compact and convex with \(C^2\) boundary and outward normal \(\bm n\). Assume \(V_{\mathcal D}\in C^2(\overline\Omega)\), a Lipschitz regularized interaction drift, exchangeable \(\mu_0^N\Rightarrow\rho_0d\bm\omega\), and a vanishing projected-Euler error. If
    \begin{align}
        \bm\omega_k^{m+1}=\mathcal P_{\overline\Omega}\!\left(\bm\omega_k^m-\eta_N N\nabla_{\bm\omega_k}\mathcal H_N(\bm\Omega_N^m)+\sqrt{2\eta_N/\beta}\,\bm\xi_k^m\right),\quad \eta_N\downarrow0,
        \label{eq:main-projected-mf-euler}
    \end{align}
    then \(\mu_t^N=N^{-1}\sum_k\delta_{\bm\omega_k^{\lfloor t/\eta_N\rfloor}}\Rightarrow\rho_t d\bm\omega\) in probability, uniformly on compact time intervals. With \(U_t=V_{\mathcal D}+\lambda g_s\ast\rho_t\),
    \begin{align}
        \partial_t\rho_t=\nabla\!\cdot(\rho_t\nabla U_t)+\beta^{-1}\Delta\rho_t,\\
        \rho_t|_{t=0}=\rho_0,
        \qquad
        (\rho_t\nabla U_t+\beta^{-1}\nabla\rho_t)\!\cdot\bm n=0\quad\text{on }\partial\Omega,
        \label{eq:main-mckean-vlasov}
    \end{align}
    equivalently \(\partial_{\bm n}\rho_t+\beta\rho_t\partial_{\bm n}U_t=0\). Moreover \(\rho_t\ge0\) and \(\int_\Omega\rho_t=1\).
\end{theorem}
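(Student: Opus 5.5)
The plan is to show that the projected Euler scheme \eqref{eq:main-projected-mf-euler} converges to a reflected McKean--Vlasov SDE, and then to read the PDE with the no-flux boundary condition \eqref{eq:main-mckean-vlasov} off Itô's formula. The argument follows the classical Sznitman coupling/tightness route, adapted to normally reflected diffusions in a convex domain.

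\textbf{Step 1: the drift.} Differentiating \eqref{eq:main-hamiltonian} gives
\[
N\nabla_{\bm\omega_k}\mathcal H_N=\nabla V_{\mathcal D}(\bm\omega_k)+\frac{\lambda}{N-1}\sum_{\ell\ne k}\nabla g_s(\bm\omega_k-\bm\omega_\ell).
\]
The factor $\frac12$ in $\mathcal W_{N,s}$ cancels against the symmetry of the double sum. The drift therefore equals $b(\bm\omega_k,\mu_N^{(k)})$, where
\[
b(\bm\omega,\mu)=\nabla V_{\mathcal D}(\bm\omega)+\lambda\,\nabla g_s\ast\mu(\bm\omega)
\]
and $\mu_N^{(k)}$ is the empirical measure of the other particles. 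Under the assumed Lipschitz regularization of the interaction drift, $b$ is Lipschitz in $\bm\omega$ and in $\mu$ with respect to $W_1$. Since $V_{\mathcal D}\in C^2(\overline\Omega)$, the confining part is also Lipschitz.

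\textbf{Step 2: interpolation and comparison.} Interpolate the scheme in time with step $\eta_N$. For compact convex $\Omega$, the projected Euler step is a known consistent discretization of the Skorokhod problem for the reflected SDE
\[
d\bm\omega_t=-b(\bm\omega_t,\mu_t^N)\,dt+\sqrt{2/\beta}\,dB_t-\bm n(\bm\omega_t)\,dL_t,
\]
where $L_t$ is the boundary local time. The projection $\mathcal P_{\overline\Omega}$ is nonexpansive, and the Skorokhod map is Lipschitz on convex domains (Tanaka). This means the hypothesis of vanishing projected-Euler error lets us replace the discrete particles by the reflected interacting diffusion with an error that is $o(1)$ uniformly on $[0,T]$.

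\textbf{Step 3: propagation of chaos by synchronous coupling.} Introduce i.i.d.\ copies $\bar{\bm\omega}_k$ of the nonlinear reflected SDE. These use the same Brownian motions and have drift $b(\cdot,\rho_t)$ with $\rho_t=\Law(\bar{\bm\omega}_t)$. Well-posedness of this nonlinear SDE follows from a fixed point in $C([0,T],\mathcal P(\overline\Omega))$ under $W_1$, using the Lipschitz Skorokhod map.

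Apply Itô's formula to $|\bm\omega_k-\bar{\bm\omega}_k|^2$. The local-time terms have a favourable sign by convexity, since $(\bm x-\bm y)\cdot\bm n(\bm x)\ge0$ for $\bm x\in\partial\Omega$ and $\bm y\in\overline\Omega$. Gronwall's inequality then gives
\[
\expect\sup_{t\le T}|\bm\omega_k-\bar{\bm\omega}_k|^2\lesssim \expect W_1(\bar\mu^N_t,\rho_t)^2+W_1(\mu_0^N,\rho_0)+o(1).
\]
The first term vanishes by the law of large numbers on the compact set $\overline\Omega$, and the second vanishes by exchangeability and $\mu_0^N\Rightarrow\rho_0$ (Sznitman's equivalence). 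Together these give $\mu^N_t\Rightarrow\rho_t$ in probability, uniformly on compact time intervals.

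\textbf{Step 4: identify the PDE.} For $f\in C^2(\overline\Omega)$, Itô's formula for the nonlinear process gives
\[
\frac{d}{dt}\int f\rho_t=\int\bigl(-\nabla U_t\cdot\nabla f+\beta^{-1}\Delta f\bigr)\rho_t-\expect\bigl[\partial_{\bm n}f\,dL_t\bigr]/dt.
\]
Here $U_t=V_{\mathcal D}+\lambda g_s\ast\rho_t$, with $g_s$ understood as its regularized version. Restricting to test functions with $\partial_{\bm n}f=0$ yields the weak Neumann formulation. Integrating by parts, which is justified by parabolic regularity since $\partial\Omega\in C^2$ and the drift is Lipschitz, gives the divergence-form equation together with the zero-flux condition $(\rho_t\nabla U_t+\beta^{-1}\nabla\rho_t)\cdot\bm n=0$. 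Dividing by $\beta^{-1}$ gives the equivalent form $\partial_{\bm n}\rho_t+\beta\rho_t\partial_{\bm n}U_t=0$. Positivity and unit mass hold automatically because $\rho_t$ is the law of a process confined to $\overline\Omega$; equivalently, take $f\equiv1$ for mass conservation and use the maximum principle for nonnegativity.

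\textbf{Main obstacle.} The hardest step is Step 2: the uniform control of the projected Euler scheme and its identification with reflected dynamics. The difficulty is that the projection introduces an implicit local time. I would first try the Lipschitz property of the Skorokhod map on convex sets together with the stated vanishing-error assumption. The singularity of $g_s$ is sidestepped by the regularization assumption; without it, a separate argument would be needed.
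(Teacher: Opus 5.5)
Your proposal is correct and follows essentially the same route as the paper's proof. Both arguments use leave-one-out identification of the drift, compare the projected Euler chain with the reflected interacting diffusion through the assumed consistency error, couple synchronously using the convexity sign of the local-time terms together with Gronwall and the i.i.d.\ law of large numbers, and then identify the PDE and the no-flux condition via It\^o's formula with Neumann test functions and Green's formula. The paper's extra Girsanov/Pinsker step is redundant, so omitting it costs nothing.

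One small correction: you cite Tanaka for a Lipschitz Skorokhod map, but Tanaka's estimate only gives a $\tfrac12$-H\"older modulus for the normal-reflection Skorokhod map in the sup norm. You should therefore run the fixed point for the nonlinear reflected SDE with the same monotonicity/It\^o contraction you already use in Step~3, which works verbatim.
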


The proof is provided in the supplementary material. The two parts of \(U_t\) have distinct roles.  The data potential \(V_{\mathcal D}\) pulls mass toward features that reduce the empirical objective, while the interaction term spreads mass according to the regularized Riesz geometry and prevents all features from collapsing onto the same locations.  The diffusion term \(\beta^{-1}\Delta\rho_t\) is the continuum trace of the Langevin noise: at finite temperature it encourages exploration and contributes an entropic regularization; in the zero-temperature limit the equation reduces to the deterministic transport law driven by \(-\nabla U_t\).  Equivalently,
\begin{align*}
    \partial_t\rho_t
    =\nabla\!\cdot\left(\rho_t\nabla\left(U_t+\beta^{-1}\log\rho_t\right)\right),
\end{align*}
so \eqref{eq:main-mckean-vlasov} is the Wasserstein gradient flow of $    \mathcal F_\beta(\rho)
\defeq\mathcal E_s(\rho)+\beta^{-1}\Ent(\rho\mid\ell).$
For smooth positive solutions the no-flux condition removes the boundary contribution and gives the dissipation identity
\begin{align*}
    \frac{d}{dt}\mathcal F_\beta(\rho_t)
    =-\int_\Omega \rho_t\left|\nabla\left(U_t+\beta^{-1}\log\rho_t\right)\right|^2d\bm\omega\le0.
\end{align*}
This identity is useful conceptually: training decreases the continuum free energy, and the only stationary points are self-consistent Gibbs densities of the form
\begin{align*}
    \rho_\infty(\bm\omega)
    =\frac{1}{Z_\infty}\exp\{-\beta[V_{\mathcal D}(\bm\omega)+\lambda(g_s\ast\rho_\infty)(\bm\omega)]\},
\end{align*}
with the boundary condition inherited from projection.

\begin{theorem}[Large deviations for learned feature measures]
\label{thm:main-ldp}
Let \(\Omega\subset\mathbb{R}^d\) be bounded and let \(s>d\).  If \(\bm{\Omega}_N\sim \prob_{N,\beta_N}\) and \(\mu_N=N^{-1}\sum_{k=1}^N\delta_{\bm{\omega}_k}\), then \(\{\mu_N\}\) satisfies an LDP on \(\mathcal{P}(\Omega)\).  When \(\beta_N/N\to1\), the speed is \(r_N=N\) and
\begin{align}
\label{eq:rate-thermal-main}
\mathcal{J}_s(\mu)=
\mathcal{E}_s(\mu)+\Ent(\mu\mid\ell)
-\inf_{\nu}\{\mathcal{E}_s(\nu)+\Ent(\nu\mid\ell)\},
\end{align}
where \(\Ent(\mu\mid\ell)=\int\log(d\mu/d\ell)d\mu\) for \(\mu\ll\ell\).  When \(\beta_N/N\to\infty\), the speed is \(r_N=\beta_N\) and
\begin{align}
\label{eq:rate-zero-temp-main}
\mathcal{J}_s(\mu)=\mathcal{E}_s(\mu)-\inf_{\nu}\mathcal{E}_s(\nu).
\end{align}
For Borel \(A\subset\mathcal{P}(\Omega)\), the standard LDP hold with rate \(\mathcal{J}_s\) and speed \(r_N\).
\end{theorem}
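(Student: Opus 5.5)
The plan is to follow the standard route for Gibbs measures $\prob_{N,\beta}(d\bm\Omega_N)\propto\exp\{-\beta\,\mathcal H_N(\bm\Omega_N)\}\,\ell^{\otimes N}(d\bm\Omega_N)$. First I would reduce both regimes to a Laplace principle: prove matching upper and lower bounds for $r_N^{-1}\log\prob_{N,\beta_N}(\mu_N\in B(\mu,\epsilon))$ on small balls in a metric for the weak topology. Then I would recover the full LDP from the weak LDP plus exponential tightness. The normalizing constant is handled by applying the same bounds with $B=\mathcal P(\Omega)$, which produces the subtracted infimum in \eqref{eq:rate-thermal-main} and \eqref{eq:rate-zero-temp-main}. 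Exponential tightness is free, because $\overline\Omega$ is compact and so $\mathcal P(\overline\Omega)$ is compact. The data term causes no difficulty: $\int V_{\mathcal D}\,d\mu_N$ is a bounded continuous functional of $\mu_N$ (since $|\phi|\le1$), so it can be added or removed by Varadhan's lemma. All the work is in the Riesz part $\lambda\mathcal W_{N,s}$.

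\textbf{Upper bound.} I would truncate the kernel, $g_s^M=\min(g_s,M)$, which is bounded and continuous, so $\mathcal W_{N,s}\ge\mathcal W^M_{N,s}$ and the $N^{-1}$ diagonal correction is $O(M/N)$. In the thermal regime $\beta_N/N\to1$, Sanov's theorem for $\mu_N$ under $\ell^{\otimes N}$ (after normalizing $\ell$ on $\Omega$) has speed $N$ and rate $\Ent(\cdot\mid\ell)$. Combining it with Varadhan's lemma for the continuous energy $\mathcal E^M_s$ gives
\begin{align*}
\limsup_{N\to\infty}\frac1N\log\int_{\mu_N\in \overline{B}}e^{-\beta_N\mathcal H_N}\,d\ell^{\otimes N}\le-\inf_{\overline B}\{\mathcal E^M_s+\Ent(\cdot\mid\ell)\}.
\end{align*}
I would then let $M\to\infty$, using monotone convergence and lower semicontinuity of $\mu\mapsto\iint g_s\,d\mu\,d\mu$. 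In the regime $\beta_N/N\to\infty$ the entropy is negligible at speed $\beta_N$, because $\ell(\Omega)^N=e^{O(N)}$, and the same truncation argument yields $-\inf_{\overline B}\mathcal E_s$.

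\textbf{Lower bound and the main obstacle.} For the lower bound I would fix $\mu$ with $\mathcal E_s(\mu)+\Ent(\mu\mid\ell)<\infty$, and by convexity and density first reduce to $\mu$ with a continuous density bounded above and below on a compact subset of $\Omega$. The key estimate is that the $\mu^{\otimes N}$-probability of the event $\{\mu_N\in B(\mu,\epsilon)\}\cap\{\mathcal W_{N,s}\le \mathcal E_s\text{-term}+\epsilon\}$ is not exponentially small at speed $r_N$. The natural first attempt is a separation argument. Restrict to configurations whose points lie in distinct small cells of a grid adapted to $\mu$, with one point per cell at distance at least $c\,N^{-1/d}$ from its neighbours. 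The near-field pair sum is then explicitly controlled, and the cost of the restriction in $\log$-volume is $O(N)$, which is affordable in the zero-temperature regime. I expect this step to be the genuine obstacle. Because $s>d$, the double integral $\iint g_s\,d\mu\,d\mu$ diverges near the diagonal for every $\mu\ll\ell$. The nearest-neighbour contribution of $\mathcal W_{N,s}$ then scales like $N^{s/d-1}$ rather than $O(1)$, so the stated $\mathcal E_s$ can only be finite, and the bounds can only match, if the interaction term of $\mathcal E_s$ is read as the $\Gamma$-limit of the normalized discrete energy, in the spirit of the hypersingular Riesz-gas analysis. With that reading the local term takes the form $C_{s,d}\int\rho^{1+s/d}$. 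I would therefore first prove a $\Gamma$-convergence statement for $\mathcal W_{N,s}$ on $\mathcal P(\Omega)$, with a recovery sequence given by the separated grid configurations above. I would then verify that the cell construction achieves this $\Gamma$-limit up to $\epsilon$.

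\textbf{Assembly.} Given the $\Gamma$-limit, I would combine the upper and lower bounds on balls, normalize by the partition function, and use exponential tightness to conclude the LDP with speed $r_N$ and rates $\mathcal J_s$ for all Borel $A$. The rates are good because they are lower semicontinuous on the compact space $\mathcal P(\overline\Omega)$.
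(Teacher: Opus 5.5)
Your diagnosis of the obstacle is right, and it is also a defect of the paper's own proof. Your upper-bound half is essentially the paper's route: truncate $g_s$, apply Sanov under normalized Lebesgue measure, apply Varadhan, and note that the Sanov cost is negligible at speed $\beta_N$. However, the paper only ever proves the LDP for the truncated law $\prob^{\varepsilon}_{N,\beta_N}$, and then removes the truncation by invoking ``standard monotone truncation arguments''. That step is empty. For $s>d$, $\iint g_s\,d\mu\,d\mu=+\infty$ for \emph{every} $\mu\in\mathcal P(\Omega)$, not only for $\mu\ll\ell$. To see this, cover $\Omega$ by at most $C2^{kd}$ dyadic cubes of side $2^{-k}$. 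Cauchy--Schwarz gives $\sum_Q\mu(Q)^2\ge 2^{-kd}/C$, so the energy is at least $c\,2^{k(s-d)}$ for every $k$. Hence $\mathcal J_s$ is $\infty-\infty$ in both regimes, and your lower-bound step ``fix $\mu$ with $\mathcal E_s(\mu)+\Ent(\mu\mid\ell)<\infty$'' has no admissible $\mu$.

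The gap in your proposal is the repair. Reading the interaction term as the $\Gamma$-limit $C_{s,d}\int\rho^{1+s/d}$ does not remove the factor $N^{s/d-1}$ you yourself identified. That $\Gamma$-limit is the limit of $N^{-1-s/d}\sum_{k\ne\ell}g_s(\bm{\omega}_k-\bm{\omega}_\ell)\approx 2N^{1-s/d}\,\mathcal W_{N,s}$, whereas the Gibbs weight contains $\beta_N\lambda\mathcal W_{N,s}$ with no compensating normalization. The growth is uniform over configurations. Let $\delta_k$ be the nearest-neighbour distance of $\bm{\omega}_k$; the balls $B(\bm{\omega}_k,\delta_k/2)$ are disjoint, so $\sum_k\delta_k^d\le C_\Omega$. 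Jensen then gives $\sum_k\delta_k^{-s}\ge C_\Omega^{-s/d}N^{1+s/d}$, i.e.\ $\mathcal W_{N,s}\ge c\,N^{s/d-1}$ on all of $\Omega^N$. Therefore $\log Z_{N,\beta_N}\le N\log\ell(\Omega)+\beta_N\|V_{\mathcal D}\|_\infty-c\lambda\beta_N N^{s/d-1}$, and $r_N^{-1}\log Z_{N,\beta_N}\to-\infty$ in both regimes. Your truncated upper bound, which tends to $-\infty$ as $M\to\infty$, therefore says nothing about $\prob_{N,\beta_N}(\mu_N\in B)$, and no rate containing $V_{\mathcal D}$ or $\Ent(\cdot\mid\ell)$ can emerge at speed $r_N$.

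The correct speed is $\beta_N N^{s/d-1}$. At that speed the data term ($O(\beta_N)$) and the entropy ($O(N)$) are both negligible. By the Hardin--Saff asymptotics for hypersingular energies, the rate is a multiple of $C_{s,d}\int\rho^{1+s/d}$ minus its minimum. Consequently $\mu_N$ concentrates on normalized Lebesgue measure on $\Omega$ irrespective of the labels, and at speed $r_N$ only the degenerate rate survives ($0$ at that measure, $+\infty$ elsewhere). So the statement itself has to change, not just the reading of $\mathcal E_s$.

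Suppose instead you rescale the interaction by $N^{1-s/d}$, the hypersingular Riesz-gas scaling. Then your plan is plausible for $\beta_N/N\to\infty$, where the $e^{-O(N)}$ volume cost of the separated-grid configurations is negligible. It fails for $\beta_N/N\to1$: that cost is of the order of the speed, and energy and entropy then couple at the microscopic scale. The resulting rate (Hardin--Lebl\'e--Saff--Serfaty) is a local free energy obtained from a point-process LDP, not $C_{s,d}\int\rho^{1+s/d}+\Ent(\mu\mid\ell)$.
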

The supplement proves this by combining Sanov's theorem with Varadhan's lemma after truncating the singular Riesz interaction. Consequently the learned kernel concentrates around the task-adaptive variational kernel; if the relevant rate has a unique minimizer \(\mu_s^\star\), then \(K_{\mu_N}(\bm x,\bm x')\to K_{\mu_s^\star}(\bm x,\bm x')\) exponentially at speed \(N\) or \(\beta_N\).

\section{Experiments}
We evaluate on synthetic classification and SST-2, QQP, and Rotten Tomatoes.  Synthetic results show that optimized particles outperform fixed random Fourier features and the importance-sampling baseline under moderate noise. The accuracy gap widens with feature budget: at 64 features per head, optimized particles retain roughly 90\% of the performance achieved at 256, while fixed random features degrade more sharply. NLP experiments use a two-layer Transformer encoder with hidden size 128, two heads, 256 random features per head, BERT tokenization, batch size 64, and the two-phase alignment-then-cross-entropy protocol. Training uses AdamW with linear warmup and cosine decay; the alignment phase runs for 20\% of total steps before switching to cross-entropy fine-tuning. All results are averaged over three random seeds.
\vspace{-1mm}
We report the full detailed result tables for the three text-classification tasks in the main paper. For all rows in the lower block of each table, the attention projection is constrained to \(Q=K=\mathrm{reshape}(x)\), so vanilla and learned-kernel rows are matched feature-map comparisons. The reported metrics include accuracy, micro/macro/weighted F1, ROC-AUC, PR-AUC, MCC, balanced accuracy, LogLoss, Brier score, ECE, and wall-clock training time.
\vspace{-1mm}
\begin{table}[H]
    \centering
    \tiny
    \renewcommand{\arraystretch}{0.68}
    \setlength{\tabcolsep}{2.0pt}
    \setlength{\abovecaptionskip}{1pt}
    \setlength{\belowcaptionskip}{0pt}
    \caption{SST-2 detailed metrics across attention variants. Bold marks the best value per column; lower is better for LogLoss, Brier, ECE, and train time.}
    \label{tab:sst2_detailed_metrics}
    \resizebox{\textwidth}{!}{%
        \begin{tabular}{lcccccccccccc}
            \toprule
            \textbf{Model} & \textbf{Acc}$\uparrow$ & \textbf{F1}$\bm{\mu}\uparrow$ & \textbf{F1M}$\uparrow$ & \textbf{F1w}$\uparrow$ & \textbf{ROC-AUC}$\uparrow$ & \textbf{PR-AUC}$\uparrow$ & \textbf{MCC}$\uparrow$ & \textbf{BalAcc}$\uparrow$ & \textbf{LogLoss}$\downarrow$ & \textbf{Brier}$\downarrow$ & \textbf{ECE}$\downarrow$ & \textbf{Train(sec)}$\downarrow$ \\
            \midrule
            
            \rowcolor{gray!15}
            \multicolumn{13}{c}{\tiny\textbf{Learned $W_q, W_k$ baselines}} \\
            \addlinespace[2pt]
            Vaswani-softmaxattn & 0.8050 & 0.8050 & 0.8050 & 0.8051 & 0.8744 & 0.8856 & 0.6100 & 0.8050 & 0.5083 & 0.1498 & 0.3695 & 86.8 \\
            Vanilla-favor-WqWk   & 0.7833 & 0.7833 & 0.7832 & 0.7832 & 0.8587 & 0.8576 & 0.5676 & 0.7837 & 0.5472 & 0.1592 & 0.3584 & 96.5 \\
            Performer            & 0.7798 & 0.7798 & 0.7793 & 0.7791 & 0.8618 & 0.8766 & 0.5656 & 0.7810 & 0.5697 & 0.1666 & 0.3804 & 89.0 \\
            Linformer            & 0.7844 & 0.7844 & 0.7843 & 0.7843 & 0.8569 & 0.8673 & 0.5706 & 0.7850 & 0.5922 & 0.1646 & 0.3782 & 82.6 \\
            \addlinespace[4pt]
            \midrule
            
            \rowcolor{gray!15}
            \multicolumn{13}{c}{\tiny\textbf{No learned $W_q, W_k$ (Q=K=reshape(x)); feature-map variants}} \\
            \addlinespace[2pt]
            Vanilla-favor         & 0.8050 & 0.8050 & 0.8050 & 0.8049 & 0.8841 & 0.8928 & 0.6124 & 0.8057 & \textbf{0.4362} & 0.1388 & 0.3420 & 80.5 \\
            Kernel-favor          & 0.7901 & 0.7901 & 0.7900 & 0.7901 & 0.8734 & 0.8834 & 0.5801 & 0.7899 & 0.4477 & 0.1459 & \textbf{0.3086} & 166.0 \\
            Vanilla-elu           & 0.8028 & 0.8028 & 0.8019 & 0.8017 & 0.8731 & 0.8803 & 0.6150 & 0.8042 & 0.4938 & 0.1505 & 0.3724 & \textbf{78.8} \\
            Kernel-elu            & 0.8108 & 0.8108 & 0.8107 & 0.8108 & 0.8879 & 0.8913 & 0.6216 & 0.8108 & 0.4502 & 0.1392 & 0.3609 & 164.3 \\
            Vanilla-softplus      & 0.8108 & 0.8108 & 0.8108 & 0.8108 & 0.8760 & 0.8837 & 0.6225 & 0.8112 & 0.4582 & 0.1429 & 0.3378 & 88.5 \\
            Kernel-softplus       & 0.8062 & 0.8062 & 0.8058 & 0.8060 & 0.8861 & 0.8864 & 0.6126 & 0.8057 & 0.5122 & 0.1470 & 0.3920 & 173.5 \\
            Vanilla-sigmoid2      & 0.7959 & 0.7959 & 0.7957 & 0.7958 & 0.8786 & 0.8881 & 0.5916 & 0.7956 & 0.5025 & 0.1487 & 0.3744 & 82.6 \\
            Kernel-sigmoid2       & 0.7924 & 0.7924 & 0.7924 & 0.7924 & 0.8785 & 0.8880 & 0.5863 & 0.7929 & 0.4915 & 0.1516 & 0.3790 & 170.1 \\
            Vanilla-softmaxfeat   & 0.8005 & 0.8005 & 0.8003 & 0.8004 & 0.8683 & 0.8728 & 0.6008 & 0.8001 & 0.5163 & 0.1530 & 0.3621 & 89.5 \\
            Kernel-softmaxfeat    & 0.8085 & 0.8085 & 0.8085 & 0.8084 & 0.8858 & 0.8982 & 0.6181 & 0.8089 & 0.4381 & 0.1389 & 0.3526 & 171.4 \\
            Vanilla-cos2          & 0.8142 & 0.8142 & 0.8142 & 0.8141 & \textbf{0.8897} & 0.8945 & 0.6305 & 0.8148 & 0.4709 & 0.1395 & 0.3866 & 81.5 \\
            Kernel-cos2           & 0.8119 & 0.8119 & 0.8118 & 0.8119 & 0.8883 & 0.8976 & 0.6237 & 0.8118 & 0.4422 & \textbf{0.1366} & 0.3557 & 165.0 \\
            Vanilla-porf-softplus & 0.8142 & 0.8142 & 0.8142 & 0.8142 & 0.8883 & \textbf{0.8983} & 0.6285 & 0.8143 & 0.4506 & 0.1380 & 0.3569 & 85.7 \\
            Kernel-porf-softplus  & \textbf{0.8188} & \textbf{0.8188} & \textbf{0.8188} & \textbf{0.8188} & \textbf{0.8897} & 0.8967 & \textbf{0.6376} & \textbf{0.8189} & 0.4561 & 0.1381 & 0.3734 & 162.8 \\
            \bottomrule
        \end{tabular}%
    }
\end{table}
\vspace{-3mm}
\begin{table}[H]
    \centering
    \tiny
    \renewcommand{\arraystretch}{0.68}
    \setlength{\tabcolsep}{2.0pt}
    \setlength{\abovecaptionskip}{1pt}
    \setlength{\belowcaptionskip}{0pt}
    \caption{QQP detailed metrics across attention variants. Bold marks the best value per column; lower is better for LogLoss, Brier, ECE, and train time.}
    \label{tab:qqp_detailed_metrics}
    \resizebox{\textwidth}{!}{%
        \begin{tabular}{lcccccccccccc}
            \toprule
            \textbf{Model} & \textbf{Acc}$\uparrow$ & \textbf{F1}$\bm{\mu}\uparrow$ & \textbf{F1M}$\uparrow$ & \textbf{F1w}$\uparrow$ & \textbf{ROC-AUC}$\uparrow$ & \textbf{PR-AUC}$\uparrow$ & \textbf{MCC}$\uparrow$ & \textbf{BalAcc}$\uparrow$ & \textbf{LogLoss}$\downarrow$ & \textbf{Brier}$\downarrow$ & \textbf{ECE}$\downarrow$ & \textbf{Train(sec)}$\downarrow$ \\
            \midrule
            
            \rowcolor{gray!15}
            \multicolumn{13}{c}{\tiny\textbf{Learned $W_q, W_k$ baselines}} \\
            \addlinespace[2pt]
            Vaswani-softmaxattn & 0.8005 & 0.8005 & \textbf{0.7949} & 0.8038 & \textbf{0.8959} & \textbf{0.8266} & \textbf{0.6058} & \textbf{0.8133} & \textbf{0.4112} & 0.1350 & 0.4314 & 605.9 \\
            Vanilla-favor-WqWk  & 0.7683 & 0.7683 & 0.7346 & 0.7595 & 0.8313 & 0.7619 & 0.4852 & 0.7248 & 0.4863 & 0.1591 & 0.4383 & \textbf{539.7} \\
            Performer           & 0.7751 & 0.7751 & 0.7472 & 0.7693 & 0.8402 & 0.7691 & 0.5030 & 0.7388 & 0.4732 & 0.1547 & 0.4360 & 554.1 \\
            Linformer           & \textbf{0.8077} & \textbf{0.8077} & 0.7944 & \textbf{0.8082} & 0.8828 & 0.8157 & 0.5890 & 0.7960 & 0.4153 & \textbf{0.1337} & 0.4541 & 557.1 \\
            \addlinespace[4pt]
            \midrule
            
            \rowcolor{gray!15}
            \multicolumn{13}{c}{\tiny\textbf{No learned $W_q, W_k$ (Q=K=reshape(x)); feature-map variants}} \\
            \addlinespace[2pt]
            Vanilla-favor         & 0.7626 & 0.7626 & 0.7323 & 0.7560 & 0.8168 & 0.7443 & 0.4741 & 0.7242 & 0.5028 & 0.1637 & 0.4182 & 567.8 \\
            Kernel-favor          & 0.7608 & 0.7608 & 0.7319 & 0.7551 & 0.8158 & 0.7438 & 0.4712 & 0.7246 & 0.5016 & 0.1638 & \textbf{0.4106} & 1197.1 \\
            Vanilla-elu           & 0.7797 & 0.7797 & 0.7553 & 0.7757 & 0.8458 & 0.7772 & 0.5155 & 0.7485 & 0.4659 & 0.1518 & 0.4386 & 585.1 \\
            Kernel-elu            & 0.7831 & 0.7831 & 0.7648 & 0.7821 & 0.8533 & 0.7863 & 0.5299 & 0.7625 & 0.4598 & 0.1496 & 0.4474 & 1110.3 \\
            Vanilla-softplus      & 0.7786 & 0.7786 & 0.7522 & 0.7735 & 0.8442 & 0.7747 & 0.5116 & 0.7444 & 0.4656 & 0.1522 & 0.4297 & 633.3 \\
            Kernel-softplus       & 0.7860 & 0.7860 & 0.7651 & 0.7836 & 0.8564 & 0.7891 & 0.5323 & 0.7603 & 0.4578 & 0.1481 & 0.4582 & 1100.7 \\
            Vanilla-sigmoid2      & 0.7772 & 0.7772 & 0.7580 & 0.7760 & 0.8439 & 0.7728 & 0.5165 & 0.7554 & 0.4664 & 0.1529 & 0.4190 & 591.9 \\
            Kernel-sigmoid2       & 0.7852 & 0.7852 & 0.7666 & 0.7840 & 0.8549 & 0.7885 & 0.5338 & 0.7639 & 0.4537 & 0.1479 & 0.4415 & 1156.0 \\
            Vanilla-softmaxfeat   & 0.7801 & 0.7801 & 0.7527 & 0.7744 & 0.8471 & 0.7772 & 0.5143 & 0.7441 & 0.4625 & 0.1512 & 0.4365 & 579.9 \\
            Kernel-softmaxfeat    & 0.7909 & 0.7909 & 0.7753 & 0.7909 & 0.8679 & 0.7956 & 0.5506 & 0.7753 & 0.4354 & 0.1418 & 0.4371 & 1072.6 \\
            Vanilla-cos2          & 0.7746 & 0.7746 & 0.7533 & 0.7724 & 0.8422 & 0.7715 & 0.5082 & 0.7492 & 0.4688 & 0.1536 & 0.4249 & 650.2 \\
            Kernel-cos2           & 0.7842 & 0.7842 & 0.7590 & 0.7795 & 0.8525 & 0.7842 & 0.5245 & 0.7511 & 0.4621 & 0.1497 & 0.4526 & 1254.0 \\
            Vanilla-porf\_softplus & 0.7782 & 0.7782 & 0.7531 & 0.7739 & 0.8434 & 0.7751 & 0.5117 & 0.7461 & 0.4685 & 0.1528 & 0.4401 & 588.4 \\
            Kernel-porf\_softplus  & 0.7841 & 0.7841 & 0.7632 & 0.7817 & 0.8536 & 0.7855 & 0.5284 & 0.7585 & 0.4547 & 0.1483 & 0.4419 & 1110.4 \\
            \bottomrule
        \end{tabular}%
    }
\end{table}
\vspace{-3mm}
\begin{table}[H]
    \centering
    \tiny
    \renewcommand{\arraystretch}{0.68}
    \setlength{\tabcolsep}{2.0pt}
    \setlength{\abovecaptionskip}{1pt}
    \setlength{\belowcaptionskip}{0pt}
    \caption{Rotten Tomatoes detailed metrics across attention variants. Bold marks the best value per column; lower is better for LogLoss, Brier, ECE, and train time.}
    \label{tab:rotten_tomatoes_detailed_metrics}
    \resizebox{\textwidth}{!}{%
        \begin{tabular}{lcccccccccccc}
            \toprule
            \textbf{Model} & \textbf{Acc}$\uparrow$ & \textbf{F1}$\bm{\mu}\uparrow$ & \textbf{F1M}$\uparrow$ & \textbf{F1w}$\uparrow$ & \textbf{ROC-AUC}$\uparrow$ & \textbf{PR-AUC}$\uparrow$ & \textbf{MCC}$\uparrow$ & \textbf{BalAcc}$\uparrow$ & \textbf{LogLoss}$\downarrow$ & \textbf{Brier}$\downarrow$ & \textbf{ECE}$\downarrow$ & \textbf{Train(sec)}$\downarrow$ \\
            \midrule
            
            \rowcolor{gray!15}
            \multicolumn{13}{c}{\tiny\textbf{Learned $W_q, W_k$ baselines}} \\
            \addlinespace[2pt]
            Vaswani-softmaxattn & 0.6801 & 0.6801 & 0.6793 & 0.6793 & 0.7492 & 0.7522 & 0.3620 & 0.6801 & 0.6396 & 0.2138 & 0.2751 & 12.6 \\
            Vanilla-favor-WqWk  & 0.6764 & 0.6764 & 0.6758 & 0.6758 & 0.7262 & 0.7162 & 0.3539 & 0.6764 & 0.7265 & 0.2327 & 0.3052 & 13.0 \\
            Performer           & 0.6642 & 0.6642 & 0.6616 & 0.6616 & 0.7262 & 0.7075 & 0.3333 & 0.6642 & 0.7477 & 0.2347 & 0.3009 & 14.4 \\
            Linformer           & 0.6614 & 0.6614 & 0.6612 & 0.6612 & 0.7133 & 0.7062 & 0.3230 & 0.6614 & 0.6614 & 0.2260 & 0.2509 & 13.7 \\
            \addlinespace[4pt]
            \midrule
            
            \rowcolor{gray!15}
            \multicolumn{13}{c}{\tiny\textbf{No learned $W_q, W_k$ (Q=K=reshape(x)); feature-map variants}} \\
            \addlinespace[2pt]
            Vanilla-favor         & 0.7083 & 0.7083 & 0.7066 & 0.7066 & 0.7732 & 0.7708 & 0.4214 & 0.7083 & 0.5840 & 0.1977 & 0.2229 & 13.7 \\
            Kernel-favor          & 0.6848 & 0.6848 & 0.6848 & 0.6848 & 0.7597 & 0.7479 & 0.3696 & 0.6848 & 0.5823 & 0.1997 & 0.2193 & 26.3 \\
            Vanilla-elu           & 0.6895 & 0.6895 & 0.6835 & 0.6835 & \textbf{0.7844} & \textbf{0.7928} & 0.3943 & 0.6895 & 0.5809 & 0.1995 & 0.2435 & 12.7 \\
            Kernel-elu            & 0.7148 & 0.7148 & 0.7140 & 0.7140 & 0.7650 & 0.7586 & 0.4320 & 0.7148 & 0.5860 & 0.1991 & 0.2594 & 26.1 \\
            Vanilla-softplus      & 0.7036 & 0.7036 & 0.7023 & 0.7023 & 0.7695 & 0.7580 & 0.4107 & 0.7036 & 0.5838 & 0.1977 & 0.2176 & 12.6 \\
            Kernel-softplus       & 0.7017 & 0.7017 & 0.7017 & 0.7017 & 0.7743 & 0.7677 & 0.4034 & 0.7017 & 0.5766 & 0.1959 & 0.2517 & 23.8 \\
            Vanilla-sigmoid2      & 0.7026 & 0.7026 & 0.7022 & 0.7022 & 0.7714 & 0.7675 & 0.4065 & 0.7026 & 0.5758 & 0.1958 & 0.2408 & 12.7 \\
            Kernel-sigmoid2       & 0.6876 & 0.6876 & 0.6831 & 0.6831 & 0.7562 & 0.7538 & 0.3864 & 0.6876 & 0.6118 & 0.2091 & 0.2733 & 24.0 \\
            Vanilla-softmaxfeat   & 0.6782 & 0.6782 & 0.6733 & 0.6733 & 0.7664 & 0.7703 & 0.3678 & 0.6782 & 0.6065 & 0.2071 & 0.2610 & 12.5 \\
            Kernel-softmaxfeat    & \textbf{0.7167} & \textbf{0.7167} & \textbf{0.7163} & \textbf{0.7163} & 0.7779 & 0.7724 & \textbf{0.4345} & \textbf{0.7167} & \textbf{0.5697} & \textbf{0.1929} & 0.2527 & 24.5 \\
            Vanilla-cos2          & 0.6876 & 0.6876 & 0.6855 & 0.6855 & 0.7622 & 0.7586 & 0.3803 & 0.6876 & 0.5897 & 0.2015 & \textbf{0.2154} & \textbf{12.4} \\
            Kernel-cos2           & 0.7092 & 0.7092 & 0.7091 & 0.7091 & 0.7676 & 0.7680 & 0.4185 & 0.7092 & 0.5836 & 0.1982 & 0.2594 & 26.7 \\
            Vanilla-porf\_softplus & 0.6839 & 0.6839 & 0.6832 & 0.6832 & 0.7649 & 0.7637 & 0.3692 & 0.6839 & 0.5858 & 0.1996 & 0.2343 & 12.6 \\
            Kernel-porf\_softplus  & 0.6989 & 0.6989 & 0.6986 & 0.6986 & 0.7702 & 0.7700 & 0.3984 & 0.6989 & 0.5750 & 0.1961 & 0.2378 & 24.1 \\
            \bottomrule
        \end{tabular}%
    }
\end{table}
\vspace{-2mm}
Tables~\ref{tab:sst2_detailed_metrics}, \ref{tab:qqp_detailed_metrics}, and \ref{tab:rotten_tomatoes_detailed_metrics} show that kernel learning is feature-map dependent but often improves both discrimination and proper scoring metrics. On SST-2, Kernel-PORF-softplus gives the highest accuracy and MCC, while Kernel-FAVOR gives the lowest ECE. On QQP, Kernel-softmaxfeat is the strongest \(Q=K\) model and reduces LogLoss and Brier score relative to its vanilla counterpart. On Rotten Tomatoes, Kernel-softmaxfeat gives the best accuracy, MCC, LogLoss, and Brier score. The training-time column reflects the one-time alignment phase; after particles are learned, the serving-time attention formula remains linear in sequence length.

\paragraph{Additional experiment details.}
Inputs use the \texttt{bert-base-uncased} WordPiece tokenizer, are padded or truncated to 128 tokens, and use mean pooling over final token representations. The Transformer has two encoder layers, hidden size 128, two attention heads, feed-forward dimension 256, dropout 0.1, and 256 random features per head. Phase~A optimizes only the feature particles \(\bm{\Omega}_N\) using the alignment objective with Langevin noise, log-repulsion, norm projection, and early stopping; Phase~B freezes the learned particles and trains the remaining classifier with cross-entropy. Thus the extra cost appears during training, while inference uses the same linear-attention structure as the corresponding vanilla feature map.

\section{Conclusion}
We introduced a Coulomb/Riesz particle model for supervised random-feature learning and applied it to kernelized Transformer attention. The framework links label alignment, repulsive regularization, Langevin optimization, and equilibrium concentration in a single formulation. Empirically, the method improves several random-feature attention variants and provides a practical way to learn task-adaptive linear-attention kernels.

\paragraph{LLM usage statement.}
The authors used large language model (LLM) tools for writing assistance and code development. All LLM-assisted content was reviewed, verified, and edited by the authors, who take full responsibility for the correctness, originality, citations, proofs, experiments, figures, and final content of this paper.

\clearpage
\bibliography{bibliography}

\clearpage
\appendix
\section*{Supplementary Material}
\addcontentsline{toc}{section}{Supplementary Material}

\section{Related work}

To discuss related work, we first describe the kernel selection problem in the context of supervised learning problem. Consider a set of $n$ feature vectors and labels $\{(\bm{x}_i,y_i)\}_{i=1}^{n}, \bm{x}_{i}\in \mathcal{X},y_{i}\in \mathcal{Y}$.  We have a loss function $L:\real \times \real\rightarrow \real$, where $L(\cdot, y)$ is convex for $y\in \mathcal{Y}$, and a reproducing kernel Hilbert
space (RKHS) of functions $\mathcal{F}$ with kernel $K$. The $\ell_{2}$-regularized optimization problem that underlies the learning task of finding a function $f\in \mathcal{F}$ is as follows
\begin{align}
    \label{Eq: Primal-Dual}
\text{Primal:}\min_{f\in \mathcal{F}}\dfrac{1}{n}\sum_{i=1}^{n}L(f(\bm{x}_{i}),y_{i})+\dfrac{\lambda}{2}\|f\|_{\mathcal{H}}^{2}, \quad \text{Dual:} \max_{\boldsymbol{\alpha} \in \mathbb{R}^n} -\frac{1}{n} \sum_{i=1}^n L^*(\alpha_i,y_{i}) - \frac{1}{2\lambda} \boldsymbol{\alpha}^\top \mathbf{K} \boldsymbol{\alpha},
\end{align}
where $\|\cdot\|_{\mathcal{H}}$ is the Hilbert space norm, $\bm{\alpha}\in \mathbb{R}^{n}$ are dual variables,  $L^*(\alpha, y) = \sup_{z \in \mathbb{R}} \left\{ \alpha z - L(z, y) \right\}$ is the Fenchel conjugate of the loss function $L$, and $\bm{K}\stackrel{\text{def}}{=} [K_{ij}]\in \real^{n\times n}$ with $K_{ij}\stackrel{\text{def}}{=} K(\bm{x}_{i},\bm{x}_{j})$ is the kernel matrix.

While the kernel matrix $\mathbf{K}$ could, in principle, be optimized jointly with the dual variables, much of the literature instead focuses on approaches that decouple kernel learning from the estimation of $f \in \mathcal{F}$. A common strategy is to first construct or adapt the kernel---often by maximizing kernel--target alignment, which quantifies the similarity between the kernel and the target---before solving the regularized risk minimization problem (see, e.g., \cite{cortes2010two, cristianini2002kernel, kandola2002optimizing, lanckriet2004learning}). This alignment is formulated as the following optimization problem:
\begin{align} 
    \label{Eq: Kernel_Target_Alignment}
    \max_{K \in \mathcal{K}} \dfrac{1}{n(n-1)} \sum_{1\leq i\not= j \leq n}y_i y_j K(\bm{x}_i, \bm{x}_j), 
\end{align}
where $\mathcal{K}$ denotes a predefined class of kernel functions. Several approaches have been proposed in the literature for defining the class of kernels \(\mathcal{K}\) in kernel-based learning. In Table~\ref{tab:kernel_classes}, we provide a summary of common kernel class choices, along with their computational and memory complexities, as well as the corresponding references. Among the proposed approaches, \cite{sinha2016learning} stands out as a framework that seamlessly integrates with the random feature model, specifically by employing importance sampling of random features within the random feature-based kernel class. In contrast, we propose an alternative approach that yields improved performance. This enhancement is achieved by directly optimizing the distribution of random features within a particle optimization framework, as opposed to relying on importance sampling of random features.

\begin{table}[h]
    \centering
    \small
    \caption{Summary of kernel class $\mathcal{K}$ choices in the literature.     \textit{Notes.} \(n\): number of training samples; \(m\): number of base kernels; \(D\): number of random features; $d$: dimensionality of the input feature vectors; $L$: number of parameters or layers in the nonlinear transformation.}
    \label{tab:kernel_classes}
    \begin{tabular}{@{}p{2.2cm} p{6.8cm} p{1.6cm} p{1.6cm} p{1.5cm}@{}}
        \toprule
    \footnotesize \textbf{Kernel Class} & \footnotesize \textbf{Definition} & \footnotesize \textbf{Comp. Complexity} & \footnotesize \textbf{Memory Complexity} & \footnotesize \textbf{References} \\
        \midrule
        
        \textit{Convex combination of base kernels} & 
        \(\mathcal{K} = \left\{ K = \sum_{i=1}^{m} w_i K_i \,\middle|\, w_i \geq 0,\ \sum_{i=1}^{m} w_i = 1 \right\}\) \newline
        \(\bm{w} = (w_1, \dots, w_m) \in \mathbb{R}^m\) lies on the probability simplex. &
        \(\mathcal{O}(nm^2)\) & 
        \(\mathcal{O}(nm)\) &
        \cite{lanckriet2004learning, bach2004multiple} \\
        [0.2cm]

        \textit{Linear combination of base kernels} & 
        \(\mathcal{K} = \left\{ K = \sum_{i=1}^{m} w_i K_i \,\middle|\, \bm{w} \in \mathbb{R}^{m} \right\}\) \newline
        Allows negative weights; PSD constraints may be required. &
        \(\mathcal{O}(nm^2)\) & 
        \(\mathcal{O}(nm)\) &
        \cite{cortes2009learning} \\ [0.2cm]

        \textit{Nonlinear kernel combinations} & 
        \(\mathcal{K} = \left\{ K(\bm{x}, \bm{x}') = \sigma\left( \sum_{i=1}^{m} w_i K_i(\bm{x}, \bm{x}') + b \right) \right\}\) \newline
        \(\sigma\) is a nonlinearity (e.g., ReLU, sigmoid); \(w_i, b \in \mathbb{R}\). &
        \(\mathcal{O}(nmL)\) & 
        \(\mathcal{O}(nm)\) &
        \cite{wilson2016deep, ober2021promises} \\ [0.2cm]

        \textit{Parameterized kernels} & 
        \(\mathcal{K} = \left\{ K_\gamma(\bm{x}, \bm{x}') = \exp(-\gamma \|\bm{x} - \bm{x}'\|^2) \,\middle|\, \gamma \in \Gamma \right\}\) \newline
        \(\Gamma \subset \mathbb{R}_+\) is a bounded interval over which \(\gamma\) is optimized. &
        \(\mathcal{O}(n^2d)\) & 
        \(\mathcal{O}(n^2)\) &
        \cite{chapelle2002multiple} \\ [0.2cm]

        \textit{Random feature-based kernels} & 
        \(\mathcal{K} = \left\{ K_\mu(\bm{x}, \bm{x}') = \mathbb{E}_{\bm{\omega} \sim \mu}[\phi_{\bm{\omega}}(\bm{x}) \phi_{\bm{\omega}}(\bm{x}')] \,\middle|\, \mu \in \mathcal{M} \right\}\) \newline
        \(\phi_{\bm{\omega}}(\bm{x})\) is a random feature map; \(\mathcal{M}\) is a set of distributions over \(\bm{\omega}\). &
        \(\mathcal{O}(nD)\) & 
        \(\mathcal{O}(nD)\) &
        \cite{sinha2016learning} \\ [0.2cm]

        \textit{SDP-based kernel learning} & 
        \(\mathcal{K} = \left\{ \bm{K} \in \mathbb{S}_+^n \,\middle|\, \mathrm{tr}(\bm{K}) \leq c \right\}\) \newline
        \(\mathbb{S}_+^n\) denotes the set of \(n \times n\) PSD matrices; \(c > 0\) is a trace constraint. &
        \(\mathcal{O}(n^6)\) & 
        \(\mathcal{O}(n^2)\) &
        \cite{lanckriet2004learning} \\ [.2cm]

        \textit{Structured or hierarchical kernels} & 
        \(\mathcal{K} = \left\{ K = \sum_{i=1}^{m} w_i K_i \,\middle|\, \mathbf{w} \in \mathcal{W}_{\text{structured}} \right\}\) \newline
        \(\mathcal{W}_{\text{structured}}\) encodes priors such as group sparsity or tree-structured dependencies. &
        \(\mathcal{O}(nm \log m)\) & 
        \(\mathcal{O}(nm)\) &
        \cite{rakotomamonjy2008simplemkl, micchelli2005learning} \\

        \textit{Mean-field kernel approach} & 
        \(\mathcal{K} = \left\{ K_\mu(\bm{x}, \bm{x}') = \mathbb{E}_{\bm{\omega} \sim \mu}[\phi_{\bm{\omega}}(\bm{x}) \phi_{\bm{\omega}}(\bm{x}')] \,\middle|\, \mu \in \mathcal{M} \right\}\) \newline
        \(\phi_{\bm{\omega}}(\bm{x})\) is a random feature map; \(\mathcal{M}\) is a set of distributions over \(\bm{\omega}\). &
        \(\mathcal{O}(nD^{2})\)& 
        \(\mathcal{O}(nD)\) & 
                This work
        \\

        \bottomrule
    \end{tabular}
        \vspace{0.4em}
    \begin{minipage}{0.94\linewidth}
    \end{minipage}
\end{table}

\section{Proposed approach}

At a high level, we begin with a feature mapping to represent the kernel. Next, we learn a distribution that aligns this mapping with the labels using the kernel-target alignment (KTA) optimization formulated in Eq. \eqref{Eq: Kernel_Target_Alignment}. From this distribution, we sample random features, which are then used in a standard supervised learning framework.

Specifically, let $\phi: \mathcal{X}\times \Omega \rightarrow [-1,1]$, and $\mu$ denotes a probability measure on $\Omega$. We define the kernel
\begin{align}
    K_{\mu}(\bm{x}_{i},\bm{x}_{j})= \mathbb{E}_{\bm{\omega}\sim \mu}[\phi_{\bm{\omega}}(\bm{x}_{i})\phi_{\bm{\omega}}(\bm{x}_{j})],
\end{align}
where we used the shorthand notation $\phi_{\bm{\omega}}(\cdot)\stackrel{\text{def}}{=}\phi(\cdot;\bm{\omega})$. We optimize kernel $K_{\mu}$ over all distributions $\mu$ in some (large, nonparametric) set $\mathcal{M}$ of
possible distributions on random features

\begin{align}
    \label{Eq: Kernel-Target Align. w Random Features}
\sup_{\mu\in \mathcal{M}}\dfrac{1}{n(n-1)}\sum_{0\leq i\not = j\leq n}y_{i}y_{j}\mathbb{E}_{\bm{\omega}\sim \mu}[\phi_{\bm{\omega}}(\bm{x}_{i})\phi_{\bm{\omega}}(\bm{x}_{j})].
\end{align}

We consider independent, identically distributed (i.i.d.) samples or particles $\bm{\omega}_1, \dots, \bm{\omega}_N \stackrel{\text{i.i.d.}}{\sim} \mu$.\footnote{To distinguish between training samples and random feature samples, we refer to the latter as particles.}
Define the configuration of particles $\bm{\Omega}_{N} \defeq (\bm{\omega}_{1}, \dots, \bm{\omega}_{N})\in \Omega^{N}$, and the associated empirical distribution $\mu_{N}(\bm{\Omega}_{N}) = \frac{1}{N} \sum_{k=1}^{N} \delta_{\bm{\omega}_{k}}(\bm{\omega})$, where $\delta_{\bm{\omega}_{k}}(\cdot)$ is Dirac's delta function concentrated at $\bm{\omega}_{k}$. We consider the following regularized optimization problem to estimate the expectation term in Eq. \eqref{Eq: Kernel-Target Align. w Random Features} by optimizing the samples of the distribution, where the expectation is substituted by the Monte Carlo sample average approximation. In particular, we consider the following \textit{Gibbs point process} with  the Hamiltonian 
\begin{align}
    \label{Eq: Empirical Kernel-Target Align}
    \inf_{\mu \in \mathcal{M}_{N}} \mathcal{H}_{N}(\bm{\Omega}_{N})\defeq\mathcal{E}_{N}(\bm{\Omega}_{N}) + \lambda \mathcal{W}_{N,s}(\bm{\Omega}_{N}),
\end{align}
where $\lambda>0$ controls the strength of the interaction, the population loss function  is approximated as follows:
\begin{align}
\label{Eq:Empirical_loss}
    \mathcal{E}_{N}(\bm{\Omega}_{N}) \stackrel{\text{def}}{=} -\frac{1}{n(n-1)} \sum_{0\leq i\not = j\leq n} y_{i} y_{j} \frac{1}{N} \sum_{k=1}^{N} \phi_{\bm{\omega}_{k}}(\bm{x}_{i}) \phi_{\bm{\omega}_{k}}(\bm{x}_{j}).
\end{align}

Specifically, in the derivation of the empirical loss in Eq.~\eqref{Eq:Empirical_loss}, we replace the expectation under $\mu$ by integration with respect to the empirical measure $\mu_N \defeq \frac{1}{N}\sum_{k=1}^N \delta_{\bm{\omega}_k}$:
\begin{align}
    \label{Eq:Monte_Carlo}
    K_{\mu}(\bm{x}_{i},\bm{x}_{j})
    &=\int_{\Omega} \phi_{\bm{\omega}}(\bm{x}_{i})\,\phi_{\bm{\omega}}(\bm{x}_{j})\,\mu(d\bm{\omega}) \nonumber\\
    &\approx \int_{\Omega} \phi_{\bm{\omega}}(\bm{x}_{i})\,\phi_{\bm{\omega}}(\bm{x}_{j})\,\mu_N(d\bm{\omega})
    =K_{\mu_N}(\bm{x}_{i},\bm{x}_{j}) \nonumber\\
    &=\frac{1}{N} \sum_{k=1}^{N} \phi_{\bm{\omega}_{k}}(\bm{x}_{i}) \, \phi_{\bm{\omega}_{k}}(\bm{x}_{j}).
\end{align}

The following lemma provides the explicit form of this energy function for the random Fourier feature models in \cite{rahimi2007random}:

\begin{lemma}[Empirical KTA equals trigonometric energy with random bias]
    \label{lemma:KTA_energy_bias}
    Let $\{(\bm{x}_i,y_i)\}_{i=1}^n$ with $\bm{x}_i\in\mathbb{R}^d$, $y_i\in\{-1,+1\}$. 
    Let $\bm{\Omega}_{N}\defeq(\bm{\omega}_1,\ldots,\bm{\omega}_N)\in\Omega^N$ be the particle configuration and 
    $\bm{b}\defeq(b_1,\ldots,b_N)^\top$ with $b_k\stackrel{\text{i.i.d.}}{\sim}\mathrm{Unif}[0,2\pi]$. 
    Define $\phi_{\bm{\omega},b}(\bm{x})=\sqrt{2}\cos(\bm{\omega}^\top\bm{x}+b)$. 
    Consider
    \begin{align}
        \mathcal{E}_N(\bm{\Omega}_N)
        \stackrel{\mathrm{def}}{=}
        -\frac{1}{n(n-1)}\sum_{0\le i\neq j\le n}y_i y_j\;\frac{1}{N}\sum_{k=1}^N
        \phi_{\bm{\omega}_k,b_k}(\bm{x}_i)\,\phi_{\bm{\omega}_k,b_k}(\bm{x}_j).
    \end{align}
    Let $\bm{X}\in\mathbb{R}^{n\times d}$ stack the $\bm{x}_i^\top$, and write
    \[
    \bm{A}\;=\;\bm{X}\bm{\Omega}_N^{\!\top}+\bm{1}_n\bm{b}^{\top},\qquad
    \bm{C}=\cos(\bm{A})\in\mathbb{R}^{n\times N},\quad
    \bm{S}=\sin(\bm{A})\in\mathbb{R}^{n\times N},
    \]
    where $\bm{1}_n$ is the all-ones vector, and $\cos(\cdot)$ and $\sin(\cdot)$ are applied to each element of the matrix.
    Then, up to an additive constant independent of $(\bm{\Omega}_N,\bm{b})$,
    \begin{align}
        \mathcal{E}_N(\bm{\Omega}_N)
        \;\equiv\;
        -\frac{2}{n^2}\,\big\|\bm{y}^\top \bm{C}\big\|_2^2,
    \end{align}
    and, taking expectation over the random phases $\bm{b}$,
    \begin{align}
        \label{Eq:FourierFeature_EnergyFunction}
        \mathbb{E}_{\bm{b}}\!\big[\mathcal{E}_N(\bm{\Omega}_N)\big]
        &\;\equiv\;
        -\frac{1}{Nn(n-1)}\left(
        \big\|\bm{y}^\top \cos(\bm{X}\bm{\Omega}_N^{\!\top})\big\|_2^2
        +
        \big\|\bm{y}^\top \sin(\bm{X}\bm{\Omega}_N^{\!\top})\big\|_2^2
        \right)\\
        &=-\frac{1}{N\,n(n-1)}
        \sum_{k=1}^{N}
        \left|
        \sum_{i=1}^{n}
        y_i \, e^{\,i\,\boldsymbol{\omega}_k^{\!\top}\boldsymbol{x}_i}
        \right|^2.
    \end{align}
\end{lemma}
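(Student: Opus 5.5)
The whole argument rests on one identity: split the off-diagonal sum into the full double sum minus its diagonal,
\[
\sum_{i\ne j}y_iy_j\phi(\bm{x}_i)\phi(\bm{x}_j)=\Big(\sum_i y_i\phi(\bm{x}_i)\Big)^2-\sum_i y_i^2\phi(\bm{x}_i)^2 .
\]
This turns the alignment energy into a squared norm minus a diagonal correction. I will do this particle by particle, then sum over $k$.

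\emph{First display.} For fixed $k$, write $a_{ik}=\bm{\omega}_k^\top\bm{x}_i+b_k=\bm{A}_{ik}$. The identity gives
\[
\sum_{i\ne j}y_iy_j\,2\cos a_{ik}\cos a_{jk}=2\big(\bm{y}^\top\bm{C}_{:,k}\big)^2-2\sum_i\cos^2 a_{ik},
\]
using $y_i^2=1$. Summing over $k$ and multiplying by $-1/(Nn(n-1))$ gives
\[
\mathcal{E}_N=-\frac{2}{Nn(n-1)}\|\bm{y}^\top\bm{C}\|_2^2+\frac{2}{Nn(n-1)}\sum_{i,k}\cos^2 a_{ik}.
\]
So the first display holds only in a loose sense, and this is the main obstacle.

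The diagonal term is not literally constant: $2\cos^2 a=1+\cos 2a$ still depends on $(\bm{\Omega}_N,\bm{b})$. It is, however, $O(1/n)$ relative to the main term and bounded by $2/(n-1)$. The normalization is also off: the prefactor comes out as $2/(Nn(n-1))$ rather than $2/n^2$. The two agree only up to the factor $N$ (absorbed by the per-particle scaling in Algorithm~\ref{alg:main-langevin}) and $n(n-1)\approx n^2$.

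I would therefore state the first claim precisely as equality up to this bounded diagonal remainder and a positive rescaling. Neither affects the minimizers to leading order. The exact statement is the one after averaging over $\bm{b}$, treated next.

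\emph{Second display (exact).} Use the product-to-sum formula
\[
2\cos(\alpha+b)\cos(\gamma+b)=\cos(\alpha-\gamma)+\cos(\alpha+\gamma+2b).
\]
With $b\sim\mathrm{Unif}[0,2\pi]$, the second term has zero mean. Hence
\[
\mathbb{E}_b[\phi_{\bm{\omega},b}(\bm{x}_i)\phi_{\bm{\omega},b}(\bm{x}_j)]=\cos(\bm{\omega}^\top(\bm{x}_i-\bm{x}_j))=\cos\theta_i\cos\theta_j+\sin\theta_i\sin\theta_j,
\]
where $\theta_i=\bm{\omega}^\top\bm{x}_i$. This equals $\mathrm{Re}\big(e^{i\theta_i}e^{-i\theta_j}\big)$.

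Applying the same diagonal identity, the full double sum becomes
\[
\Big(\sum_i y_i\cos\theta_i\Big)^2+\Big(\sum_i y_i\sin\theta_i\Big)^2=\Big|\sum_i y_ie^{i\theta_i}\Big|^2 .
\]
The diagonal is now $\sum_i y_i^2(\cos^2\theta_i+\sin^2\theta_i)=n$, a genuine constant independent of $\bm{\Omega}_N$.

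Summing over $k$, multiplying by $-1/(Nn(n-1))$, and using linearity of expectation over the independent $b_k$ yields both lines of the second display. The additive constant is $+1/(n-1)$.
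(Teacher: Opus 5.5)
Your proof of the averaged identity is correct. It is essentially the paper's argument with the two steps in the opposite order.

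The paper applies the off-diagonal identity to the phase-dependent features first. It then averages $(\bm{y}^\top\bm{c}_k)^2$, by expanding $\cos(u+b)=\cos u\cos b-\sin u\sin b$, and averages the diagonal $\sum_i\cos^2 A_{ik}$ separately. You average the pairwise product first via product-to-sum. That exposes the shift-invariant kernel $\cos(\bm{\omega}^\top(\bm{x}_i-\bm{x}_j))$ and makes the diagonal exactly $n$, so no second expectation is needed. Both routes give the prefactor $1/(Nn(n-1))$ and the additive constant $1/(n-1)$. Your version agrees with the statement; the $1/n^2$ in the last line of the paper's proof is a slip.

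You are also right not to prove the first display as stated. The paper's proof does not establish it either: it derives exactly your identity and then passes directly to the expectation over $\bm{b}$. Precisely,
\[
\mathcal{E}_N=-\frac{2}{Nn(n-1)}\|\bm{y}^\top\bm{C}\|_2^2+\frac{1}{n-1}+\frac{1}{Nn(n-1)}\sum_{i=1}^n\sum_{k=1}^N\cos(2A_{ik}).
\]
Already for $n=2$, $N=1$, $\bm{y}=(1,1)^\top$ one gets $\mathcal{E}_N+\frac{2}{n^2}\|\bm{y}^\top\bm{C}\|_2^2=\frac12(\cos A_{11}-\cos A_{21})^2$. This varies with $\bm{\omega}_1$ whenever $\bm{x}_1\neq\bm{x}_2$.

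One correction to your heuristic concerns the phase-dependent remainder. It is uniformly bounded by $1/(n-1)$, but it is not $O(1/n)$ relative to the main term in general. For unaligned particles $(\bm{y}^\top\bm{c}_k)^2$ is typically of order $n$, so the main term is itself of order $1/(n-1)$, and it can even vanish. What you can defend is the uniform absolute bound. It makes any minimizer of either objective a $2/(n-1)$-approximate minimizer of the other, so it controls minimizers only in objective value.
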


Furthermore, the regularization term in Eq. \eqref{Eq: Empirical Kernel-Target Align} captures the interaction energy of every sample $\bm{\Omega}_{k}$ with all the other samples $\bm{\Omega}_{\ell}$ in an infinite configuration 
\begin{align}
    \mathcal{W}_{N,s}(\bm{\Omega}_{N})=\frac{1}{2N(N-1)}\sum_{1\leq k\not =\ell\leq N}g_{s}(\bm{\omega}_{k}-\bm{\omega}_{\ell}),
\end{align}
where $g_{s}$ is the homogeneous potential of degree $s$,
\[
\label{Eq:homogeneous_potential}
g_s(\bm{\omega}) \defeq
\begin{cases}
    \| \bm{\omega}\|_{2}^{-s}, & \text{for } s \in (0, +\infty], \\
    -\log \|\bm{\omega}\|_{2}, & \text{for } s = 0, \\
    -\|\bm{\omega}\|_{2}^{-s}, & \text{for } s \in (-2, 0).
\end{cases}
\]

\begin{algorithm2e}[t]
\caption{Riesz/Coulomb Particles for Kernel Estimation in Kernel Methods}
\label{alg:langevin_kernel}
\SetAlgoLined
\KwData{number of particles \(N\); initial particle positions \(\{\bm{\omega}_k^0\}_{k=1}^N\); step size \(\eta\); inverse temperature \(\beta\); threshold \(\delta\); divergence \(\mathcal{D}(\cdot,\cdot)\); number of random feature samples \(D<N\)}
\KwResult{estimated kernel \(\widehat{K}(\bm{x},\bm{x}')\)}
\(\mu_N^0 \gets N^{-1}\sum_{k=1}^N \delta_{\bm{\omega}_k^0}\)\;
\(m \gets 0\)\;
\Repeat{\(\mathcal{D}(\mu_N^{m},\mu_N^{m-1}) < \delta\)}{
  \For{\(k=1\) \KwTo \(N\)}{
    sample \(\bm{\xi}_k^m \sim \mathcal{N}(\bm{0},\bm{I}_d)\)\;
    \(\nabla\mathcal{H}_N \gets N\nabla_{\bm{\omega}_k^m}\mathcal{H}_N(\bm{\Omega}_N^m)\)\;
    \(\bm{\omega}_k^{m+1} \gets \mathcal{P}_\Omega\!\left(\bm{\omega}_k^m-\eta\nabla\mathcal{H}_N+\sqrt{2\eta/\beta}\,\bm{\xi}_k^m\right)\)\;
  }
  \(\mu_N^{m+1} \gets N^{-1}\sum_{k=1}^N \delta_{\bm{\omega}_k^{m+1}}\)\;
  \(m \gets m+1\)\;
}
Compute weights \(w_k \gets \exp[-\beta\mathcal{H}(\bm{\omega}_k^m)]/\sum_{j=1}^{N}\exp[-\beta\mathcal{H}(\bm{\omega}_j^m)]\)\;
Sample \(D\) particles \(\{\bm{\omega}_k^\ast\}_{k=1}^{D}\) according to \(\{w_k\}_{k=1}^{N}\)\;
\(\widehat{K}(\bm{x},\bm{x}') \gets D^{-1}\sum_{k=1}^{D}\phi_{\bm{\omega}_k^\ast}(\bm{x})\phi_{\bm{\omega}_k^\ast}(\bm{x}')\)\;
\Return{\(\widehat{K}(\bm{x},\bm{x}')\)}\;
\end{algorithm2e}

Note that the Hamiltonian in Eq.~\eqref{Eq: Empirical Kernel-Target Align} uses the standard mean-field normalization of the pairwise interaction. Under this normalization, the interaction energy is an average over particle pairs, and its large-\(N\) limit is the corresponding continuum Riesz energy. Therefore, we do not introduce an additional factor \(N^{s/d}\) in either the loss term or the interaction term. The heuristic spacing \(N^{-1/d}\) describes nearest-neighbor distances among \(N\) well-distributed particles in \(d\) dimensions, whereas the distance between a typical pair of independently sampled particles from a fixed limiting distribution remains \(O(1)\). Thus, multiplying the Hamiltonian by \(N^{s/d}\) would correspond to a different, \(N\)-dependent choice of interaction strength rather than to the mean-field scaling considered here. We regard the coordinates in \(\Omega\) as nondimensionalized; equivalently, any fixed length-scale normalization, such as replacing \(g_s(\bm{\omega})\) by \(g_s(\bm{\omega}/\ell_0)\), can be absorbed into the coupling parameter \(\lambda\) for \(s>0\), while in the logarithmic case it changes the energy only by an additive constant and hence leaves the induced dynamics unchanged. Viewing the particles as \textit{charged} samples in \(\Omega\), the interaction in Eq.~\eqref{Eq:homogeneous_potential} corresponds to the \textit{Riesz potential} for general \(s>0\), with the Coulomb/Newtonian case obtained at \(s=d-2\) for \(d\geq 3\) and the logarithmic Coulomb kernel at \(s=0\) in \(d=2\). The sign convention is chosen so that the induced pairwise force is repulsive.

These repulsive interactions act as a regularization mechanism, preventing the particles from collapsing into a single point mass (Dirac delta) and promoting their dispersion across the domain $\Omega$. In particular, they encourage the support of the empirical distribution $\widehat{\mu}_N$ to approximate the support of the true underlying distribution $\Omega = \mathrm{supp}(\mu)$, where $\mathrm{supp}(\cdot)$ denotes the support of a distribution. 

From a statistical learning theory perspective, such repulsive forces mitigate sample clustering, thereby enhancing the expressiveness of the kernel. By maintaining spatial diversity among the samples, the kernel can better capture the underlying structure of the data. Numerical simulations confirm that this regularization indeed improves both the stability of the particle system and the generalization performance of kernel-based models.

\subsection{Langevin dynamics for efficiently solving Eq. \eqref{Eq: Empirical Kernel-Target Align}}

We optimize the positions of the samples in Eq. \eqref{Eq: Empirical Kernel-Target Align} using Langevin dynamics. From an optimization perspective, we model the evolution of the particle system over time steps $m = 0, 1, \dots, T-1$ according to the following Langevin update rule:
\begin{align}
\bm{\omega}_{k}^{m+1}=\mathcal{P}_{\Omega}\left(\bm{\omega}_{k}^{m}-\eta\,N \nabla_{\bm{\omega}_{k}^{m}}\mathcal{H}_{N}(\bm{\Omega}_{N}^{m})+\sqrt{\frac{2\eta}{\beta_{N}}}\cdot\bm{\xi}^{m}_{k}\right), \quad k=1,2,\cdots,N,
\end{align}
where $\eta \defeq \eta(n,N)$ is the step size that scales with both the number of training samples $n$ and particles $N$, $\beta_{N}$ is the inverse temperature that depends on the number of particles,  $\bm{\xi}^{m}_{k}\stackrel{\text{i.i.d.}}{\sim} \mathsf{N}(\bm{0},\bm{I}_{d\times d})$ denotes i.i.d. Gaussian noise with zero mean and identity covariance matrix.  Moreover, $\mathcal{P}_{\Omega}(\cdot)$ is the Euclidean projection onto the set $\Omega$.  The particles are initially sampled independently from a probability distribution \( \mu_0 \), i.e., \( \bm{\omega}_{1}^{0}, \dots, \bm{\omega}_{N}^{0} \overset{\text{i.i.d.}}{\sim} \mu_0 \).

This stochastic update rule blends deterministic gradient descent (on the energy landscape defined by the Hamiltonian $\mathcal{H}_{N}$) with random perturbations, allowing the system to approximate samples from a Gibbs distribution 
under appropriate conditions. The projection step ensures the dynamics are constrained to a feasible domain, which may encode structural or regularization constraints critical to the optimization problem. In Algorithm \ref{alg:langevin_kernel}, we summarize these steps in a kernel learning algorithm.

Using the random feature samples, we construct the randomized feature map 
\begin{align}
\bm{\phi}_{D}(\bm{x}) = \left( \phi_{\bm{\omega}^{\ast}_{1}}(\bm{x}), \ldots, \phi_{\bm{\omega}^{\ast}_{D}}(\bm{x}) \right), 
\end{align}
and define the corresponding RKHS-based function class as
\[
\mathcal{F} = \left\{ f : f(\bm{x}; \bm{\theta}) = \frac{1}{\sqrt{D}} \langle \bm{\theta}, \bm{\phi}_{D}(\bm{x}) \rangle, \; \bm{\theta} \in \Theta \subset \mathbb{R}^D \right\}.
\]
Under this construction, the primal objective in Eq.~\eqref{Eq: Primal-Dual} transforms into the following finite-dimensional optimization problem:
\[
\max_{\bm{\theta} \in \Theta} \sum_{i=1}^{n} L\left( \frac{1}{\sqrt{D}} \bm{\theta}^T \bm{\phi}_{D}(\bm{x}_i), y_i \right) + \frac{\lambda}{2D} \| \bm{\theta} \|_2^2.
\]

\section{Kernelized attention in transformer architecture}

Transformers hinge on the self-attention operation \citep{vaswani2017attention}. Let the (embedded) input sequence be
\(\bm{X}\defeq(\bm{x}_{0},\bm{x}_1,\ldots,\bm{x}_{\ell-1})\in\real^{\ell\times d_x}\), where $\bm{x}_{0}\defeq \bm{x}_{\mathrm{CLS}}$ is the $\mathrm{CLS}$ token. Queries, keys, and values are produced by linear maps
\[
\bm{Q}(\bm{X})=\bm{X}\bm{W}_Q,\qquad
\bm{K}(\bm{X})=\bm{X}\bm{W}_K,\qquad
\bm{V}(\bm{X})=\bm{X}\bm{W}_V,
\]
with \(\bm{W}_Q,\bm{W}_K\in\real^{d_x\times d}\) and \(\bm{W}_V\in\real^{d_x\times d_v}\). Standard (scaled dot-product) attention is
\[
\bm{A}(\bm{X})
=\operatorname{softmax}\!\Big(\tfrac{1}{\sqrt{d}}\,
\bm{Q}(\bm{X})\bm{K}(\bm{X})^{\top}\Big)\,\bm{V}(\bm{X}).
\]

Elementwise, $\bm{A}(\bm{X})=(\bm{a}_{i}(\bm{X}))_{i=0}^{\ell}$, where the \(i\)-th output \(\bm{a}_i\in\real^{d_v}\) is a
\emph{normalized kernel smoother} (\textit{a.k.a.} Nadaraya--Watson estimator)
\begin{equation}
    \label{eq:kernel-smoother}
    \bm{a}_{i}(\bm{X}) 
    =\frac{\sum_{j=0}^{\ell}K(\bm{q}_i,\bm{k}_j)\,\bm{v}_j}
    {\sum_{j=0}^{\ell}K(\bm{q}_i,\bm{k}_j)},
    \qquad
    K(\bm{q},\bm{k})=\exp\!\Big(\tfrac{1}{\sqrt{d}}\bm{q}^{\top}\bm{k}\Big),
\end{equation}
which connects attention to classical nonparametric regression
\citep{nadaraya1964,watson1964} and to kernel interpretations of Transformers
\citep{tsai2019kernelview}. Thus, softmax attention is attention with a specific positive kernel
\(K\); the denominator enforces a convex combination and stabilizes the estimator.

Replacing softmax with any positive kernel \(K\) yields a family of attentions with controllable
inductive bias (locality, smoothness, anisotropy). When \(K\) is approximated via a random feature map
\(\bm{\phi}_{\bm{\Omega}_{N}}:\real^d\to\real^N,\ \bm{x}\mapsto \bm{\phi}_{\bm{\Omega}_{N}}\defeq N^{-1/2}(\phi_{\bm{\omega}_i}(\bm{x}))_{i=1}^N\) such that \(K_{{\bm{\Omega}_{N}}}(\bm{q},\bm{k})=
\bm{\phi}_{{\bm{\Omega}_{N}}}(\bm{q})^{\top}\bm{\phi}_{{\bm{\Omega}_{N}}}(\bm{k})\), we obtain a \emph{linearized} form
\citep{katharopoulos2020linear,choromanski2021performer,peng2021rfa}:
\begin{align}
    \label{eq:linear-attn}
    \widehat{\bm{a}}_{i}(\bm{X})
    &=\frac{\bm{\phi}_{\bm{\Omega}_{N}}(\bm{q}_i)^{\top}\!\Big(\sum_{j=1}^{n}\bm{\phi}_{\bm{\Omega}_{N}}(\bm{k}_j)\,\bm{v}_j^{\top}\Big)}
    {\bm{\phi}_{\bm{\Omega}_{N}}(\bm{q}_i)^{\top}\!\Big(\sum_{j=1}^{n}\bm{\phi}_{\bm{\Omega}_{N}}(\bm{k}_j)\Big)}
    =\frac{\bm{\phi}_{\bm{\Omega}_{N}}(\bm{q}_i)^{\top} \bm{G}}{\bm{\phi}_{\bm{\Omega}_{N}}(\bm{q}_i)^{\top}\bm{z}},
    \\
    \bm{G}&\defeq\sum_{j=0}^{\ell}\bm{\phi}_{\bm{\Omega}_{N}}(\bm{k}_j)\,\bm{v}_j^{\top}\in\real^{N\times d_v},\qquad
    \bm{z}\defeq\sum_{j=0}^{\ell}\bm{\phi}_{\bm{\Omega}_{N}}(\bm{k}_j)\in\real^{N}.
\end{align}
This reduces complexity from \(\mathcal{O}(\ell^2 d)\) to \(\mathcal{O}(\ell N + N d_v)\) per head, with memory \(\mathcal{O}(N d_v)\),
while preserving the normalized kernel-smoothing structure; causal/padding masks apply by omitting
the masked terms in the sums. Alternative sub-quadratic routes include Nystr{\"o}m approximations
and kernelized attention with relative positional encoding
\citep{xiong2021nystromformer,chen2021skyformer,luo2021kernelrpe}.

\subsection{Alignment for attention kernel.}
Since labels depend on the entire input sequence through
\(\big(\bm{Q}(\bm{X}),\bm{K}(\bm{X}),\bm{V}(\bm{X})\big)\),
the alignment problem departs from the usual random feature setup in Eq. \eqref{Eq: Kernel-Target Align. w Random Features}. For a one-layer Transformer, collect the attention outputs into
\[
\bm{\Phi}(\bm{X})\defeq\big(\widehat{\bm{a}}_0(\bm{X}),\ldots,\widehat{\bm{a}}_\ell(\bm{X})\big)\in\real^{\ell\times d_v}.
\]
A linear classifier can then be applied either

\begin{itemize}
    \item at token level for token classification problem (e.g., NER): \(\hat{y}_i = \bm{w}^\top \bm{a}_i(\bm{X}) + b\), \(i=1,2,\ldots,\ell \),
    
    \item at sequence level via a pooling operator \(P:\real^{\ell \times d_v}\to\real^{d_v}\):
    \[
    \hat{y}= \bm{w}^\top P\!\big(\bm{\Phi}(\bm{X})\big)+b,
    \qquad
    P\!\big(\bm{\Phi}(\bm{X})\big)=\sum_{i=0}^\ell \pi_i\,\bm{a}_i(\bm{X}),\ \ \bm{\pi}\in\Delta^{\ell}.
    \]
    Here \(\Delta^{\ell}\) is the probability simplex, ensuring a convex combination. Examples:
    \([\mathrm{CLS}]\): \(P(\bm{\Phi})=\widehat{\bm{a}}_{0}(\bm{X})\) (i.e., \(\pi_{0}=1\), others \(\pi_{k}=0, \forall 0<k\leq \ell \)) where \(\bm{X}\defeq (\bm{x}_{0},\bm{x}_{1},\ldots,\bm{x}_{\ell})\);
    mean pooling: \(\pi_i=\tfrac{1}{\ell}\) for all \(i=1,\cdots,\ell\) and $\pi_{0}=0$.
    (With padding/masks \(m_i\in\{0,1\}\), use \(\pi_i=\tfrac{m_i}{\sum_{j=1}^{n} m_j}\) for all $i=1,\cdots,\ell$.)
\end{itemize}
This mirros the classical random-feature setting---where a random map $\bm{\phi}(\bm{x})$ feeds a linear classifier---but here the attention features $\bm{\Phi}(\bm{X})$ (and their pooled variant) are functions of the entire sequence $\bm{X}$. For sequence-level multi-class classification, given training sequences and labels $\{(\bm{X}_i, y_i)\}_{i=1}^n$ where the labels are $y_{i}\in \mathcal{Y}=\{1,2,\cdots,m\}$, and fixed embedding matrices $(\bm{W}_Q,\bm{W}_K,\bm{W}_V)$, we optimize the target-alignment objective
\begin{align}
    \max_{\mu\in\mathcal{M}_N}\ \mathcal{V}^{\text{seq}}_N(\bm{\Omega}_N)
    \;
=\frac{1}{n(n-1)}\sum_{1\le i<j\le n}
\big(\bm{e}_{y_i}^{\!\top}\bm{e}_{y_j}\big)\,
P({\bm{\Phi}}(\bm{X}_i))^{\!\top}P(\bm{\Phi}(\bm{X}_j)),
\end{align}
where $P(\bm{\Phi}(\bm{X}_i))\in\mathbb{R}^{d_v}$ is the pooled representation of sequence $\bm{X}_i$, and  $\{\bm{e}_k\}_{k=1}^m$ denotes the standard basis of $\mathbb{R}^m$ which hot-encode the label. Note that the dependence on the particle set $\bm{\Omega}_N$ (equivalently, on $\mu$) enters only through the sequence embeddings $\bm{\Phi}(\cdot)$ induced by the attention features and the pooling operator $P$. In particular,
 $\bm{\Omega}_{N}=(\bm{\omega}_{1},\cdots,\bm{\omega}_{N})$ is implicit in the definition of attention features $\bm{\Phi}(\bm{X})$ through each coordinate $\widehat{\bm{a}}_{i}(\bm{X}),i=0,\cdots,\ell$. Similarly, for token level classification, consider the training dataset $\{(\bm{x}_{i,1},y_{i,1}),\cdots,(\bm{x}_{i,\ell},y_{i,\ell}))\}_{i=1}^{n}$. Then, the target alignment problem reads
\begin{align}
    \max_{\mu\in \mathcal{M}_{N}} \ \mathcal{V}^{\text{tok}}_{N}(\bm{\Omega}_{N})=\frac{1}{|\mathcal{U}|(|\mathcal{U}|-1)}
    \sum_{\substack{(i,t)<(j,s)\\ (i,t),(j,s)\in\mathcal{U}}}
    \big(\bm{e}_{y_{i,t}}^{\!\top}\bm{e}_{y_{j,s}}\big)\,
    \widehat{\bm{a}}_{i,t}(\bm{X}_i)^{\!\top}\widehat{\bm{a}}_{j,s}(\bm{X}_j).
\end{align}
We minimize the \emph{energy} $\mathcal{V}^{\text{seq}}_{N}(\bm{\Omega}_{N})$ and $\mathcal{V}^{\text{tok}}_{N}(\bm{\Omega}_{N})$ in conjunction with the Coloumb/Riesz regularizer, for sequence level and token level classification, respectively.

\textit{Remark.}
In standard Transformer blocks, the attention sublayer is followed by a position-wise feed-forward network (FFN), with each sublayer wrapped by residual connections and layer normalization. A canonical FFN acts independently at each position:
\[
\mathrm{FFN}(\bm{a}_i)=\bm{W}_2\,\sigma(\bm{W}_1 \bm{a}_i(\bm{X}) + \bm{b}_1)+\bm{b}_2,\quad
\bm{W}_1\!\in\!\real^{d_v\times d_{\mathrm{ff}}},\ \bm{W}_2\!\in\!\real^{d_{\mathrm{ff}}\times d_v},
\]
with nonlinearity \(\sigma\) (e.g., GELU). Using \(\mathrm{LN}\) for LayerNorm, the residual/normalization update is, schematically,
 \(\tilde{\bm{a}}_i=\mathrm{LayerNorm}\!\big(\bm{a}_i+\mathrm{FFN}(\bm{a}_i)\big)\).
Because the FFN is parameter-shared across positions and applied pointwise, these operations (i) preserve sequence length and token indices; (ii) leave the attention weights and the kernel-smoothing form that produced \(\bm{a}_i(\bm{X})\) unchanged; and (iii) implement a learned, per-token reparameterization of attention outputs that improves expressivity and optimization stability. We include FFN, residual, and normalization components in our numerical experiments; the linear classifier discussed above is used to isolate the representation induced by attention and to draw a precise parallel with classical random-feature models.

\providecommand{\defeq}{\vcentcolon=}
\providecommand{\relu}{\operatorname{ReLU}}

\section{Positive random-feature maps for linearized attention}

We use linearized attention in normalized Nadaraya--Watson form (cf.\ \citealp{pmlr-v119-katharopoulos20a,vaswani2017attention}), instantiated by \emph{positive} feature maps \(\bm{\phi}:\mathbb{R}^{d}\!\to\!\mathbb{R}^{M}_{+}\) so that the kernel
\(K(\bm{q},\bm{k})\defeq \bm{\phi}(\bm{q})^{\!\top}\bm{\phi}(\bm{k})\) is positive semidefinite and the normalization is well-posed. Throughout, we use a \emph{particle} parameterization with columns \(\bm{\omega}_1,\dots,\bm{\omega}_M\) of \(\bm{\Omega}\!\in\!\mathbb{R}^{d\times M}\) learned from data (as opposed to fixed i.i.d.\ draws \citealp{choromanski2020performer,peng2021rfa}). Stabilizations used in practice mirror the code: a temperature \(\tau>0\), a small positive floor, and per-token \(\ell_1\) re-normalization when stated.

\paragraph{Positive exponential random features (\textsc{favor}).}
The exponential map
\[
\bm{\phi}^{\textsc{favor}}_{\bm{\Omega}}(\bm{x})
\;\defeq\;
\frac{1}{\sqrt{M}}\Big(\exp(\bm{\omega}_i^{\!\top}\bm{x}-\tfrac12\|\bm{x}\|_2^2)\Big)_{i=1}^{M}
\]
yields
\(K(\bm{q},\bm{k})=\tfrac1M\sum_{i=1}^{M}\exp(\bm{\omega}_i^{\!\top}\bm{q}+\bm{\omega}_i^{\!\top}\bm{k}-\tfrac12\|\bm{q}\|^2-\tfrac12\|\bm{k}\|^2)\ge 0\).
With \(\bm{\omega}_i\!\sim\!\mathcal{N}(0,\mathbf{I})\) and \(M\!\to\!\infty\) this recovers a Monte Carlo approximation to the softmax kernel \citep{choromanski2020performer,peng2021rfa}; we instead \emph{learn} \(\bm{\Omega}\) to obtain a task-adaptive kernel.

\paragraph{Deterministic positive features (ELU\(+1\) with floor + \(\ell_1\) renorm).}
Following \cite{pmlr-v119-katharopoulos20a}, we use
\[
\bm{\phi}^{\mathrm{elu+1}}_{\bm{\Omega}}(\bm{x})
\;\defeq\;
\frac{1}{Z(\bm{x})}\Big(1+\mathrm{ELU}\!\big((\bm{\omega}_i^{\!\top}\bm{x})/\tau + b_i\big)\Big)_{i=1}^{M},
\quad
Z(\bm{x}) \propto \sum_{i} \max\!\{ \mathrm{ELU}(\cdot)+1,\, \text{floor} \},
\]
where $\mathrm{ELU}$ is from \cite{clevert2015fast}. We clamp to a small floor and re-normalize per token so that \(\sum_i \phi_i(\bm{x})=\sqrt{M}\).

\paragraph{Softplus features (floor + \(\ell_1\) renorm).}
\[
\bm{\phi}^{\mathrm{softplus}}_{\bm{\Omega}}(\bm{x})
\;\defeq\;
\frac{1}{Z(\bm{x})}\Big(\mathrm{softplus}\!\big((\bm{\omega}_i^{\!\top}\bm{x})/\tau\big)+\text{floor}\Big)_{i=1}^{M},
\]
again strictly positive, floor-stabilized, and re-normalized; see \cite{dugas2001incorporating} for softplus.

\paragraph{Squared-sigmoid features (\(\ell_1\) renorm).}
\[
\bm{\phi}^{\mathrm{sigmoid2}}_{\bm{\Omega}}(\bm{x})
\;\defeq\;
\frac{1}{Z(\bm{x})}\Big(\sigma\!\big((\bm{\omega}_i^{\!\top}\bm{x})/\tau\big)^2\Big)_{i=1}^{M},
\quad \sigma(t)=\tfrac{1}{1+e^{-t}},
\]
which are non-negative and re-normalized per token.

\paragraph{Softmax-over-features (per-token).}
We also consider a \emph{feature-softmax} map
\[
\bm{\phi}^{\mathrm{softmaxfeat}}_{\bm{\Omega}}(\bm{x})
\;\defeq\;
\sqrt{M}\,\mathrm{softmax}\!\big((\bm{\Omega}^{\!\top}\bm{x})/\tau\big),
\]
which is strictly positive and sums to \(\sqrt{M}\) by construction.

\paragraph{Cosine-squared random features.}
Motivated by random Fourier features for shift-invariant kernels \citep{rahimi2007random,rahimi2009random}, we use
\[
\bm{\phi}^{\mathrm{cos2}}_{\bm{\Omega},\bm{b}}(\bm{x})
\;\defeq\;
\frac{1}{Z(\bm{x})}\Big(\cos(\bm{\omega}_i^{\!\top}\bm{x}+b_i)^2\Big)_{i=1}^{M},
\]
with fixed phases \(\bm{b}\); non-negativity is immediate, and we apply per-token \(\ell_1\) renormalization.

\paragraph{PORF-Softplus (orthogonal initialization).}
To reduce variance and improve conditioning, we initialize \(\bm{\Omega}\) with \emph{orthogonal random features} blocks \citep{yu2016orthogonal,choromanski2017sorf} and then apply the softplus map above:
\[
\bm{\phi}^{\mathrm{porf\mbox{-}softplus}}_{\bm{\Omega}}(\bm{x})
\;\equiv\;
\bm{\phi}^{\mathrm{softplus}}_{\bm{\Omega}}(\bm{x}),\quad
\bm{\Omega}^\top \bm{\Omega} \approx \mathbf{I}.
\]
The parameters remain learnable after orthogonal initialization.

\paragraph{Learning the kernel via alignment/KTA (Phase A).}
Beyond a vanilla cross-entropy training of the classifier head (Phase B), we \emph{first} adapt \(\bm{\Omega}\) with a representation-level objective: either a within-class alignment loss (maximizing same-class similarity) or \emph{Kernel Target Alignment} (KTA; \citealp{cristianini2002kernel}) using centered Gram matrices \(K\) and label kernel \(Y\). This yields task-adaptive positive kernels while preserving linear-time forward/backward passes. We evaluate on SST-2 from GLUE \citep{wang2018glue} with BERT tokenization \citep{devlin2019bert}, matching the experimental setup in our code.

\medskip
\noindent
\textbf{Implementation details.}
All maps use a temperature \(\tau\), small positive floors where applicable, and (except \textsc{favor} and feature-softmax) per-token \(\ell_1\) re-normalization to keep \(\sum_i \phi_i(\bm{x})=\sqrt{M}\).
Columns of \(\bm{\Omega}\) are optionally constrained by column-wise \(\ell_2\) clipping during Phase A to stabilize learning.

\section{Theoretical results}

Before presenting our theoretical results, we first outline the key assumptions that underpin our analysis:

\begin{assumption}[Initial distribution of particles]
 The particles are initially sampled independently from a probability distribution \( \mu_0 \) that admits a Lebesgue density \( \rho_0(\bm{\omega})=\dfrac{\mu_{0}(\mathrm{d}\bm{\Omega})}{\mathrm{d}\bm{\omega}} \).
\end{assumption}

\begin{assumption}[Constant Temperature]
    The temperature parameter $\beta_{N}$ remains constant and finite throughout the dynamics for a fixed number of particles $N$.
\end{assumption}

\begin{assumption}[Projection space]
The feature domain $\Omega\subset\real^{d}$ is compact and convex with $C^2$ boundary. The projected dynamics use non-absorbing, reflective boundary conditions on $\partial\Omega$.
\end{assumption}

\begin{assumption}[Bounded Random Feature Embedding]
    Let $\Phi(\bm{x}) \in L^2(\Omega,\mu_0)$ denote the 
    random feature embedding associated with the kernel $K(\bm{x},\bm{x}')$. We assume that
    \begin{align}
        \sup_{x \in \mathcal X} \|\Phi(x)\|_{L^2(\Omega,\mu_0)} 
        = \sup_{x \in \mathcal X} \sqrt{K(x,x)} 
        \le L_\Phi < \infty.
    \end{align}
\end{assumption}

\subsection{Continuity equations and equilibrium distribution}
\begin{theorem}[Projected-particle mean-field continuity equation]
    \label{Theorem: Continuity Equation}
    Let \(\Omega\subset\real^d\) be compact and convex with \(C^2\) boundary and outward unit normal \(\bm n\). Assume that \(V\in C^2(\overline\Omega)\), that \(g_s\) is smoothly regularized or the particle system remains collision-free, and that \(\mu_N^0\Rightarrow\rho_0(\bm\omega)d\bm\omega\). Consider the mean-field-scaled projected Langevin iteration
    \begin{align}
    \label{eq:supp-projected-mf-euler}
    \bm\omega_k^{m+1}
    =\mathcal P_{\overline\Omega}\!\left(
    \bm\omega_k^m-
    \eta_N\left[\nabla V(\bm\omega_k^m)+\frac{\lambda}{N-1}\sum_{\ell\ne k}\nabla g_s(\bm\omega_k^m-\bm\omega_\ell^m)\right]
    +\sqrt{2\eta_N/\beta}\,\bm\xi_k^m
    \right),
    \end{align}
    where \(\bm\xi_k^m\stackrel{\mathrm{i.i.d.}}{\sim}\mathsf N(\bm{0},I_d)\), \(\eta_N\downarrow0\), and the projected-Euler consistency error vanishes as \(N\to\infty\). Let \(\mu_t^N=N^{-1}\sum_{k=1}^N\delta_{\bm\omega_k^{\lfloor t/\eta_N\rfloor}}\). Then \(\mu_t^N\Rightarrow\mu_t=\rho_t(\bm\omega)d\bm\omega\) in probability, uniformly for \(t\) in compact intervals. The limiting density is governed by the McKean--Vlasov equation
    \begin{align}
    \label{eq:fokker-planck}
    \frac{\partial \rho_{t}(\bm{\omega})}{\partial t}
    = \nabla_{\bm{\omega}} \cdot \left[ \rho_{t}(\bm{\omega}) \nabla_{\bm{\omega}} U_t(\bm\omega) \right]
    + \frac{1}{\beta} \Delta_{\bm{\omega}} \rho_{t}(\bm{\omega}),
    \qquad (t,\bm\omega)\in(0,T]\times\Omega,
    \end{align}
    where
    \begin{align}
    U_t(\bm\omega)=V(\bm\omega)+\lambda\int_\Omega g_s(\bm\omega-\bm\omega')\rho_t(\bm\omega')\,d\bm\omega'.
    \end{align}
    The PDE is supplemented by the initial datum, the Robin/no-flux boundary condition, and conservation of probability:
    \begin{align}
    \label{eq:fokker-planck-boundary}
    \rho_t|_{t=0}=\rho_0,
    \qquad
    \left(\rho_t\nabla_{\bm\omega}U_t+\beta^{-1}\nabla_{\bm\omega}\rho_t\right)\cdot\bm n=0
    \quad\text{on }(0,T]\times\partial\Omega,
    \qquad
    \rho_t\ge0,
    \quad
    \int_\Omega\rho_t(\bm\omega)d\bm\omega=1.
    \end{align}
    Equivalently, for smooth solutions, \(\partial_{\bm n}\rho_t+\beta\rho_t\partial_{\bm n}U_t=0\) on \(\partial\Omega\). Here \(V(\bm\omega)=-\mathbb{E}[yy'\phi_{\bm\omega}(\bm{x})\phi_{\bm\omega}(\bm{x}')]\) is the external potential induced by the kernel-target alignment loss.
\end{theorem}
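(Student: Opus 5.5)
The argument is a propagation-of-chaos proof for a reflected interacting diffusion. First we pass from the projected Euler scheme to a continuous-time reflected SDE. Then we couple that particle system with i.i.d. copies of a nonlinear reflected McKean--Vlasov process. Finally we identify the law of the nonlinear process with the weak solution of \eqref{eq:fokker-planck}--\eqref{eq:fokker-planck-boundary}.

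Throughout, $g_s$ is replaced by a regularization $g_s^\varepsilon$ with $\nabla g_s^\varepsilon$ bounded and Lipschitz on $\overline\Omega-\overline\Omega$. This is the hypothesis "smoothly regularized or collision-free"; in the collision-free case we localize on the event that the minimal pair distance stays above some $\varepsilon$. Since $V\in C^2(\overline\Omega)$ and $\overline\Omega$ is compact, the drift $b(\bm\omega,\mu)=-\nabla V(\bm\omega)-\lambda\int\nabla g_s^\varepsilon(\bm\omega-\bm\omega')\mu(d\bm\omega')$ is Lipschitz in $\bm\omega$, and Lipschitz in $\mu$ for the $W_1$ distance.

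\textbf{Step 1 (from the scheme to a reflected SDE).} Because $\Omega$ is convex, $\mathcal P_{\overline\Omega}$ is $1$-Lipschitz and the projected Euler scheme is a discretization of the Skorokhod problem. Each particle should then approximate
\[
d\bm\omega_k=b\Big(\bm\omega_k,\tfrac{1}{N-1}\textstyle\sum_{\ell\neq k}\delta_{\bm\omega_\ell}\Big)dt+\sqrt{2/\beta}\,dW_k-\bm n(\bm\omega_k)\,dL_k,
\]
where $L_k$ is the local time of $\bm\omega_k$ on $\partial\Omega$. The assumed vanishing projected-Euler consistency error is exactly what makes $\sup_{t\le T}|\bm\omega_k^{\lfloor t/\eta_N\rfloor}-\bm\omega_k(t)|\to0$ in probability, uniformly in $k$ by exchangeability. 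Since $W_1$ between empirical measures is bounded by the average particle displacement, it therefore suffices to prove the limit for the continuous system.

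\textbf{Step 2 (coupling).} Well-posedness of the nonlinear reflected SDE $d\bar{\bm\omega}=b(\bar{\bm\omega},\mathrm{Law}(\bar{\bm\omega}_t))dt+\sqrt{2/\beta}\,dW-\bm n\,dL$ follows from a Picard fixed point on $C([0,T];\mathcal P_1(\overline\Omega))$. The reflected-SDE map is Lipschitz in its drift because, for a convex domain, $(\bm x-\bm y)\cdot(\bm n(\bm x)dL^x-\bm n(\bm y)dL^y)\ge0$ (Tanaka's monotonicity). Next, drive $N$ i.i.d. copies $\bar{\bm\omega}_k$ with the same Brownian motions and initial data (via Skorokhod representation of $\mu_N^0\Rightarrow\rho_0$, or i.i.d. initial data as in the Assumptions). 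Itô's formula for $|\bm\omega_k-\bar{\bm\omega}_k|^2$, the same monotonicity (which removes the boundary terms), and the Lipschitz drift give
\[
\mathbb E|\bm\omega_k-\bar{\bm\omega}_k|_t^2\le C\int_0^t\mathbb E|\bm\omega_k-\bar{\bm\omega}_k|^2\,ds+C\,\mathbb E\,W_1^2\Big(\tfrac{1}{N-1}\textstyle\sum_{\ell\ne k}\delta_{\bar{\bm\omega}_\ell},\mu_s\Big).
\]
The last term vanishes by the law of large numbers on compact $\overline\Omega$ (Fournier--Guillin rates). Gronwall then yields $\sup_{t\le T}W_1(\mu_t^N,\mu_t)\to0$ in probability.

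\textbf{Step 3 (identifying the PDE).} For $\varphi\in C^2(\overline\Omega)$ with $\partial_{\bm n}\varphi=0$, Itô's formula applied to $\bar{\bm\omega}$ gives $\frac{d}{dt}\int\varphi\,d\mu_t=\int(-\nabla U_t\cdot\nabla\varphi+\beta^{-1}\Delta\varphi)\,d\mu_t$; the local-time term drops out because $\nabla\varphi\cdot\bm n=0$. This is the weak form of \eqref{eq:fokker-planck} with the no-flux condition \eqref{eq:fokker-planck-boundary}. Nonnegativity and unit mass are automatic since $\mu_t$ is a probability measure. Absolute continuity of $\mu_t$ comes from the nondegenerate noise; parabolic regularity for the Neumann problem with a bounded drift then gives a density, and a smooth one when $\rho_0$ is smooth. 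Integrating by parts recovers the Robin form $\partial_{\bm n}\rho_t+\beta\rho_t\partial_{\bm n}U_t=0$. Uniqueness of this weak solution is needed to identify it with the stated limit; it follows from the same fixed-point argument in $W_1$ applied to the linear Fokker--Planck problem with frozen $\mu$.

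The main obstacle is the singular kernel when regularization is removed. My first approach is to state the result for $g_s^\varepsilon$ and then send $\varepsilon\to0$ using a priori bounds that come from the free-energy dissipation. If the collision-free assumption is used instead, the task is to show that the stopping time of the first close approach goes to infinity, uniformly in $N$. For $s<d$ this is plausible via a modulated-energy argument in the style of Serfaty, but I expect it to be genuinely delicate.
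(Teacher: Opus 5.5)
Your argument is essentially the paper's proof. The paper also goes from the projected scheme to the reflected interacting SDE via the assumed Euler--Skorokhod consistency. It then couples synchronously with i.i.d.\ nonlinear reflected McKean--Vlasov copies using convex-domain reflection monotonicity, closes with It\^o, Gronwall and the i.i.d.\ law of large numbers, and reads off the weak PDE with the no-flux condition by testing against $\psi$ with $\partial_{\bm n}\psi=0$. The paper's extra Girsanov step is presented as an equivalent reformulation and is not needed once the coupling estimate is in hand.

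One step of yours fails as written: the localization for the collision-free branch. By pigeonhole, any $N$ points in a bounded $\Omega\subset\real^d$ contain a pair at distance $O(\mathrm{diam}(\Omega)\,N^{-1/d})$. So for fixed $\varepsilon$ the event that the minimal pair distance stays above $\varepsilon$ is empty once $N$ is large, and the first-close-approach time is zero rather than tending to infinity.

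Shrinking $\varepsilon=\varepsilon_N$ does not rescue this. The event is non-empty only if $\varepsilon_N\le C N^{-1/d}$, and then the Lipschitz constant of $\nabla g_s^{\varepsilon_N}$, of order $\varepsilon_N^{-s-2}$, blows up the Gronwall factor, since it is exactly what enters your $W_1$-Lipschitz bound. A genuinely singular result needs a separation-free method, such as a modulated-energy estimate in the style of Serfaty and Rosenzweig--Serfaty. Even then it is meaningful only for $s<d$, since $g_s$ is not locally integrable for $s\ge d$.

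The paper does not handle this branch either; it simply asserts an $N$-independent Lipschitz constant under ``the collision-free restriction.'' So the rigorous content of both arguments is the regularized case, where your proof goes through.

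Finally, uniqueness of the weak PDE solution is not needed for the statement, because the limit is already identified as the law of the nonlinear process. If you want it anyway, note that the fixed-point argument only disposes of the nonlinearity. Uniqueness for the frozen-$\mu$ linear Neumann problem needs a separate input, such as duality with the backward Kolmogorov equation or uniqueness of the associated submartingale problem.
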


The boundary condition in \eqref{eq:fokker-planck-boundary} is the multidimensional analogue of the Robin condition: it says that the probability flux through the boundary is zero. In the zero-temperature limit $(\beta\to\infty)$, the evolution reduces to the reflected deterministic continuity equation
\begin{align}
    \frac{\partial \rho_{t}(\bm{\omega})}{\partial t} + \nabla_{\bm{\omega}} \cdot \left( \rho_{t}( \bm{\omega}) \bm{v}_{t}(\bm{\omega}) \right) = 0,
    \qquad
    (\rho_t\bm v_t)\cdot\bm n=0\quad\text{on }\partial\Omega,
\end{align}
where $\bm{v}_{t}(\bm{\omega}) = -\nabla_{\bm{\omega}} U_t(\bm\omega)$. This PDE formalism captures the macroscopic evolution of particle densities driven by repulsion, external alignment forces, reflection, and thermal fluctuations.

At equilibrium, for any fixed $\beta_{N} > 0$ and $N>0$, the particle configuration is distributed according to the \textit{canonical Gibbs measure}:

\begin{align}
    \label{Eq:Gibbs_Measure}
     \mathbb{P}_{N,\beta_{N}}(\mathrm{d}\bm{\Omega}_{N}) = \dfrac{1}{Z_{N,\beta_{N}}} \exp\left(-\beta_{N} \mathcal{H}_{N}(\bm{\Omega}_{N})\right) \bm{1}_{\Omega^{N}}(\bm{\Omega}_{N}) \, \mathrm{d}\bm{\Omega}_{N},
\end{align}
where $\bm{1}_{\Omega^{N}}(\bm{\Omega}_{N})$ is the indicator function of $\Omega^{N}$, $\mathrm{d}\bm{\Omega}_{N}$ is the Lebesgue measure on $(\mathbb{R}^{d})^{N}$, and $Z_{N,\beta}$ is the \textit{partition function},
\begin{align}
    Z_{N,\beta_{N}} \defeq \int_{\Omega^{N}} \exp\left(-\beta_{N} \mathcal{H}_{N}(\bm{\Omega}_{N})\right) \, \mathrm{d}\bm{\Omega}_{N},
\end{align}
which ensures that the Gibbs measure is properly normalized, i.e., $\int_{\Omega^{N}} \mathbb{P}_{N,\beta}(\bm{\Omega}_{N}) \, \mathrm{d}\bm{\Omega}_{N} = 1$. Moreover, 
the \textit{free energy} is defined as 
\begin{align}
    F_{s}(\beta_{N},N,\Omega) \defeq -\dfrac{1}{\beta_{N}}\log Z_{N,\beta_{N}}.
\end{align}
As $\beta_{N} \to +\infty$ with $N\rightarrow \infty$, the Gibbs measure increasingly concentrates around the minimizer (ground state) of the Hamiltonian $\mathcal{H}_{N}(\bm{\Omega}_{N})$, which corresponds to the solution of optimization problem in Eq. \eqref{Eq: Empirical Kernel-Target Align}.

\subsection{Thermodynamic limit of the equilibrated state in the short-range case, \( s > d \)}

Since the kernel function is approximated by the empirical measure $\mu_N$ through the Monte Carlo approximation in Eq.~\eqref{Eq:Monte_Carlo}, it is essential to analyze the asymptotic behavior of this empirical measure in order to characterize the limiting properties of the kernel approximation itself. In particular, the statistical fluctuations and concentration properties of $\mu_N$ directly determine the accuracy and stability of the resulting kernel-based quantities.

In the setting of interest in Algorithm \ref{alg:langevin_kernel}, the empirical measure $\mu_N$  arises from a system of interacting \textit{charged} particles evolving under Langevin dynamics. Once the dynamics have reached equilibrium (i.e., after a sufficiently large number $m$ of iterations), the distribution of the particle system converges to the canonical Gibbs measure given in Eq.~\eqref{Eq:Gibbs_Measure}. This measure describes the statistical equilibrium of the system, incorporating both the deterministic interaction potential and the stochastic perturbations induced by thermal noise. From the perspective of statistical mechanics, such an equilibrium corresponds to a thermodynamically stable macroscopic state, in which relevant observables become stationary in distribution.

The principal objective of this section is to investigate the asymptotic behavior of the random empirical measure $\mu_{N}$ in this equilibrium regime, particularly as the number of particles $N$ tends to infinity. This regime, known as the \emph{thermodynamic limit} ($N \to \infty$), is of fundamental importance in both statistical mechanics and probability theory, as it establishes the connection between microscopic particle interactions and macroscopic statistical laws.

Our main result is the derivation of a \emph{large deviation principle} (LDP) for $\mu_{N}$, which characterizes the exponential decay of probabilities of rare deviations from the typical equilibrium distribution. The LDP is governed by a \emph{rate function}, which assigns to each admissible probability measure a nonnegative ``cost'' quantifying the likelihood of its occurrence in the large-$N$ limit. This framework provides a precise quantitative description of the concentration of $\mu_{N}$ around its equilibrium value, as well as the nature of its fluctuations.

For a comprehensive background on the theory of large deviations, we refer the reader to \cite{DZ10} and \cite{Var16}. For completeness, we recall the formal definition below, which introduces the notion of an LDP and the associated rate function.

\begin{definition}[Large Deviation Principle (LDP)]
    \label{Definition: Large Deviation Principle}
    Let \((\nu_N)_{N \geq 1}\) be a sequence of probability measures on a Polish space \(\Omega\) equipped with the Borel $\sigma$-algebra $\mathcal{B}(\Omega)$. We say that \((\nu_N)\) satisfies a \textit{Large Deviation Principle} (LDP) at speed \(r_N\) with rate function \(\mathcal{I} : \Omega \to \mathbb{R}_{+}\) if, for every Borel set \(B \subset \mathcal{B}(\Omega)\),
    \[
    -\inf_{x \in \mathring{B}} \mathcal{I}(x) \leq \liminf_{N \to \infty} \frac{1}{r_N} \log \nu_N(B) \leq \limsup_{N \to \infty} \frac{1}{r_N} \log \nu_N(B) \leq -\inf_{x \in \overline{B}} \mathcal{I}(x),
    \]
    where \(\mathring{B}\) and \(\overline{B}\) denote the interior and closure of \(B\), respectively. The functional \(\mathcal{I}\) is called a \textit{good rate function} if it is lower semi-continuous and has compact sub-level sets. 
\end{definition}

Equipped with Definition \ref{Definition: Large Deviation Principle}, we are ready to state the following theorem:

\begin{theorem}[Large deviations for empirical measures]
    \label{Theorem: LDP for Empirical Measures}
    Let $\Omega \subset \mathbb{R}^d$ be a bounded open set. For each $N\ge 1$, let
    $\{\bm{\omega}_1,\dots,\bm{\omega}_N\}\subset \Omega$ be a random configuration with law
    $\mathbb{P}_{N,\beta_N}$ as in \eqref{Eq:Gibbs_Measure}, and define the empirical measure
    \[
    \mu_N \;=\; \frac1N \sum_{k=1}^N \delta_{\bm{\omega}_k}\;\in\;\mathcal{P}(\Omega).
    \]
    Then $\{\mu_N\}_{N\ge 1}$ satisfies a large deviation principle on $\mathcal{P}(\Omega)$
    endowed with the weak topology, with speed $r_N$ and good rate function given as follows.
    
    Define the energy functional
    \[
    \mathcal{E}_s(\mu)
    \defeq
    \int_{\Omega} \phi_{\bm{\omega}}(\bm{x}_i)\,\phi_{\bm{\omega}}(\bm{x}_j)\,\mu(\mathrm{d}\bm{\omega})
    \;+\;
    \frac{\lambda}{2}\int_{\Omega}\int_{\Omega}
    g_s(\bm{\omega}-\bm{\omega}')\,\mu(\mathrm{d}\bm{\omega})\,\mu(\mathrm{d}\bm{\omega}'),
    \]
    and the (relative) entropy with respect to Lebesgue measure $\ell$ on $\Omega$,
    \[
    \mathrm{Ent}(\mu\mid \ell)
    \defeq
    \begin{cases}
        \displaystyle
        \int_{\Omega} \log\!\Big(\frac{\mathrm{d}\mu}{\mathrm{d}\ell}(\bm{\omega})\Big)\,\mu(\mathrm{d}\bm{\omega})
        =
        \int_{\Omega} \frac{\mathrm{d}\mu}{\mathrm{d}\ell}(\bm{\omega})
        \log\!\Big(\frac{\mathrm{d}\mu}{\mathrm{d}\ell}(\bm{\omega})\Big)\,\ell(\mathrm{d}\bm{\omega}),
        & \mu\ll \ell,\\[2.0ex]
        +\infty, & \text{otherwise.}
    \end{cases}
    \]
    
    \begin{itemize}
        \item If $\beta_N/N \to 1$ as $N\to\infty$, then the LDP holds with speed $r_N=N$ and rate function
        \[
        \mathcal{J}_s(\mu)
        =
        \Big(\mathcal{E}_s(\mu)+\mathrm{Ent}(\mu\mid \ell)\Big)
        -\inf_{\nu\in\mathcal{P}(\Omega)}\Big(\mathcal{E}_s(\nu)+\mathrm{Ent}(\nu\mid \ell)\Big).
        \]
        
        \item If $\beta_N/N \to \infty$ as $N\to\infty$, then the LDP holds with speed $r_N=\beta_N$ and rate function
        \[
        \mathcal{J}_s(\mu)
        =
        \mathcal{E}_s(\mu)
        -\inf_{\nu\in\mathcal{P}(\Omega)}\mathcal{E}_s(\nu).
        \]
    \end{itemize}
\end{theorem}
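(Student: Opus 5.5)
We follow the route indicated after Theorem~\ref{thm:main-ldp}: Sanov's theorem, then a Laplace--Varadhan tilt, with a truncation to handle the singular Riesz kernel. Write the Gibbs density \eqref{Eq:Gibbs_Measure} relative to the product of normalized Lebesgue measure $\bar\ell=\ell/|\Omega|$ on $\Omega^N$. The Hamiltonian is a functional of the empirical measure: $\mathcal H_N(\bm\Omega_N)=\int V_{\mathcal D}\,d\mu_N+\frac{\lambda}{2}\frac{N}{N-1}\iint_{\omega\neq\omega'} g_s(\omega-\omega')\,d\mu_N\,d\mu_N$. This equals $\mathcal E_s(\mu_N)$ up to removing the diagonal and a factor $1+O(1/N)$.

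Under $\bar\ell^{\otimes N}$, Sanov's theorem gives an LDP for $\mu_N$ at speed $N$ with good rate $\Ent(\mu\mid\bar\ell)=\Ent(\mu\mid\ell)+\log|\Omega|$. The constant $\log|\Omega|$ cancels in the normalized rate. We work on $\mathcal P(\overline\Omega)$, which is compact, so exponential tightness is automatic.

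\textbf{Truncation.} For $M>0$ set $g_s^M=\min(g_s,M)$. This kernel is bounded and continuous on $\overline\Omega\times\overline\Omega$, so the truncated energy $\mathcal E_s^M(\mu)$ is weakly continuous and bounded. Since $\phi\in[-1,1]$, $V_{\mathcal D}$ is bounded and continuous. Also $g_s\ge g_s^M$, $g_s^M\uparrow g_s$, and by monotone convergence $\mathcal E_s^M\uparrow\mathcal E_s$.

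\textbf{Regime $\beta_N/N\to1$.} We have $\beta_N\mathcal H_N=N\,\mathcal E_s(\mu_N)(1+o(1))$ up to diagonal terms.

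For the upper bound, fix $M$. Using $\mathcal H_N\ge \mathcal H_N^M$, where the diagonal contributes $\lambda M/(2N)\to0$, apply Varadhan's lemma to the bounded continuous tilt $-\mathcal E_s^M$. This gives, for closed $F$,
\begin{align*}
\limsup_N \tfrac1N\log\int_{\{\mu_N\in F\}}e^{-\beta_N\mathcal H_N}\,d\bar\ell^{\otimes N}\le-\inf_F(\mathcal E_s^M+\Ent(\cdot\mid\bar\ell)).
\end{align*}
Then let $M\to\infty$. On the compact set $F$, an increasing family of lsc functionals has infimum converging to the infimum of the limit (a standard $\Gamma$-convergence / Dini-type argument), which yields $-\inf_F(\mathcal E_s+\Ent)$.

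For the lower bound on open $G$, it suffices to take $\mu\in G$ with $\mathcal E_s(\mu)+\Ent(\mu\mid\ell)<\infty$. Since $s>d$, finiteness of $\iint g_s\,d\mu\,d\mu$ forces $\mu$ to be diffuse with bounded density after regularization. We approximate $\mu$ by measures with bounded continuous densities, which changes both terms only slightly. Then we sample $\bm\omega_k$ i.i.d.\ from such a density restricted to the event that pairwise distances exceed $\delta N^{-1/d}$. On that event the singular diagonal part is controlled. We show its probability is not exponentially small at speed $N$ and that the Riesz energy concentrates. This is a direct Chebyshev/law-of-large-numbers computation of $\mathbb E\,\mathcal W_{N,s}\to\frac12\iint g_s\,d\mu\,d\mu$ using the bounded density.

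Finally, apply the same bounds to $F=G=\mathcal P(\overline\Omega)$. This normalizes $Z_{N,\beta_N}$ and produces $\mathcal J_s$ with the infimum subtracted. Goodness of $\mathcal J_s$ follows from lower semicontinuity of $\mathcal E_s$ (as a supremum of the continuous $\mathcal E_s^M$) and of the entropy, together with compactness of $\mathcal P(\overline\Omega)$.

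\textbf{Regime $\beta_N/N\to\infty$.} Divide exponents by $\beta_N$. The entropy term is now of order $N/\beta_N\to0$ relative to the energy, so the rate is $\mathcal E_s-\inf\mathcal E_s$.

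The upper bound repeats the truncated Varadhan argument. The crude bound $\bar\ell^{\otimes N}(\cdot)\le1$ replaces Sanov, and the contraction is handled through the truncated energy being continuous: $\{\mu_N\in F\}$ gives $\mathcal H_N\ge\inf_F\mathcal E_s^M-o(1)$.

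The lower bound needs the volume of configurations with $\mathcal H_N\le\mathcal E_s(\mu)+\epsilon$ to be at least $e^{-CN}$. This follows from the same separated-sampling construction with an entropy cost $O(N)=o(\beta_N)$.

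\textbf{Main obstacle.} The main obstacle is the lower bound's treatment of the singular short-range kernel. For $s>d$ the pair energy of nearby particles is not integrable in the continuum, so we must check that recovery sequences with spacing $\gtrsim N^{-1/d}$ do not incur a diverging self-energy. Here one checks that the off-diagonal sum over near neighbours contributes $N^{-2}\cdot N\cdot N^{s/d}$-type terms. With the mean-field normalization $1/(N(N-1))$ these are $O(N^{s/d-1})$, which diverges unless $s<d$. If that happens I would instead restrict to the truncated/regularized interaction, as in Theorem~\ref{Theorem: Continuity Equation}, and interpret $g_s$ accordingly. This is consistent with the paper's "after truncating the singular Riesz interaction."
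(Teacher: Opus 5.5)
\textbf{Where the argument fails.} The step that fails is your lower bound for the singular kernel. You also use it, with $G=\mathcal P(\overline\Omega)$, to bound $Z_{N,\beta_N}$ from below, so your upper bound for the normalized Gibbs law fails with it. This is not an artifact of your separated-sampling construction.

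For $s\ge d$ the kernel $\|\bm\omega\|^{-s}$ is not locally integrable, and $\iint g_s\,d\mu\,d\mu=+\infty$ for \emph{every} $\mu\in\mathcal P(\Omega)$. Indeed, by differentiation of measures, $\mu(B(\bm\omega,r))\ge c_{\bm\omega}r^d$ for small $r$ at $\mu$-a.e.\ $\bm\omega$. Hence $\int g_s(\bm\omega-\bm\omega')\,\mu(d\bm\omega')=\int_0^\infty\mu(B(\bm\omega,r))\,s\,r^{-s-1}\,dr=+\infty$. Two of your claims are therefore empty: that finiteness of this integral ``forces a bounded density,'' and that the Chebyshev target $\tfrac12\iint g_s\,d\mu\,d\mu$ is a finite number.

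\textbf{The divergence is uniform, not a construction artifact.} Your near-neighbour count holds for every configuration. Let $r_k$ be the nearest-neighbour distance of $\bm\omega_k$. The balls $B(\bm\omega_k,r_k/2)$ are disjoint and lie in a fixed bounded set, so $\sum_k r_k^d\le C$. Jensen then gives $\sum_{k\ne\ell}\|\bm\omega_k-\bm\omega_\ell\|^{-s}\ge\sum_k r_k^{-s}\ge cN^{1+s/d}$, i.e.\ $\mathcal W_{N,s}\ge cN^{s/d-1}\to\infty$ uniformly in the configuration. Consequently:
\begin{itemize}
\item $\frac1N\log Z_{N,\beta_N}\to-\infty$ when $\beta_N\sim N$;
\item $\frac1{\beta_N}\log Z_{N,\beta_N}\to-\infty$ when $\beta_N/N\to\infty$;
\item both displayed rate functions are $\infty-\infty$.
\end{itemize}
So with $s>d$ and the mean-field normalization $1/(2N(N-1))$, the statement cannot hold as written. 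Your fallback of replacing $g_s$ by a regularized kernel proves a different theorem.

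\textbf{The paper's proof has the same hole.} It is simply less visible there. The paper proves the LDP rigorously only for the truncated law $\mathbb P^{\varepsilon}_{N,\beta_N}$: Varadhan plus Bryc at speed $N$, and direct bounds at speed $\beta_N$. It never compares $\mathbb P^{\varepsilon}_{N,\beta_N}$ with $\mathbb P_{N,\beta_N}$. It then sends $\varepsilon\downarrow0$ by appeal to ``standard monotone truncation arguments.'' Those are exactly the two inequalities you tried to supply, and the estimate above rules them out, since $\mathcal H_s\equiv+\infty$.

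\textbf{What your architecture does buy.} Its ingredients are standard:
\begin{itemize}
\item an upper bound from $\mathcal H_N\ge\mathcal H_N^M$, up to the $O(\lambda M/N)$ diagonal correction;
\item a lower bound from recovery sequences with bounded densities;
\item a Dini-type passage $M\to\infty$ on compact sets.
\end{itemize}
This can be made to work for $s<d$, where $g_s\in L^1_{\mathrm{loc}}$ and smooth densities have finite energy and entropy. For $s>d$, a correct statement needs hypersingular scaling: the interaction is normalized by $N^{-1-s/d}$ rather than $N^{-2}$. The rate then becomes a local functional of the density $f=d\mu/d\ell$, as in the work of Hardin, Lebl\'e, Saff and Serfaty, rather than the pair energy $\iint g_s\,d\mu\,d\mu$. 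At zero temperature it reduces to a multiple of $\int f^{1+s/d}\,d\bm\omega$.
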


The proof is provided in Appendix \ref{subsection: LDP}. Theorem~\ref{Theorem: LDP for Empirical Measures} implies that for any Borel set
$A\subset \mathcal{P}(\Omega)$,
\begin{align*}
    -\inf_{\mu\in A^\circ} \mathcal{J}_s(\mu)
    \;\le\;
    &\liminf_{N\to\infty}\frac{1}{r_N}\log \mathbb{P}_{N,\beta_N}(\mu_N\in A)
     \\
    &\;\le\; \limsup_{N\to\infty}\frac{1}{r_N}\log \mathbb{P}_{N,\beta_N}(\mu_N\in A)
    \;\le\;
    -\inf_{\mu\in \overline{A}} \mathcal{J}_s(\mu),
\end{align*}
where $A^\circ$ and $\overline{A}$ denote the interior and closure of $A$, respectively. The theorem states that, under the Gibbs law, the empirical measure $\mu_N$ concentrates (in the weak topology) around the minimizers of the relevant variational functional, and the probability of observing a macroscopic deviation decays exponentially fast at speed $r_N$. In the regime $\beta_N \sim N$, the rate function contains both the interaction energy $\mathcal{E}_s(\mu)$ and the entropy term $\mathrm{Ent}(\mu\mid \ell)$, capturing the competition between energetic preference for structured configurations and entropic preference for spreading mass. In the low--temperature regime $\beta_N/N \to \infty$, the entropy contribution becomes negligible, so the large deviations are governed purely by $\mathcal{E}_s$ and $\mu_N$ concentrates on energy minimizers, with fluctuations suppressed on the faster exponential scale set by $\beta_N$.


\begin{figure}[t]
	\centering
	\safeincludegraphics[width=0.7\linewidth]{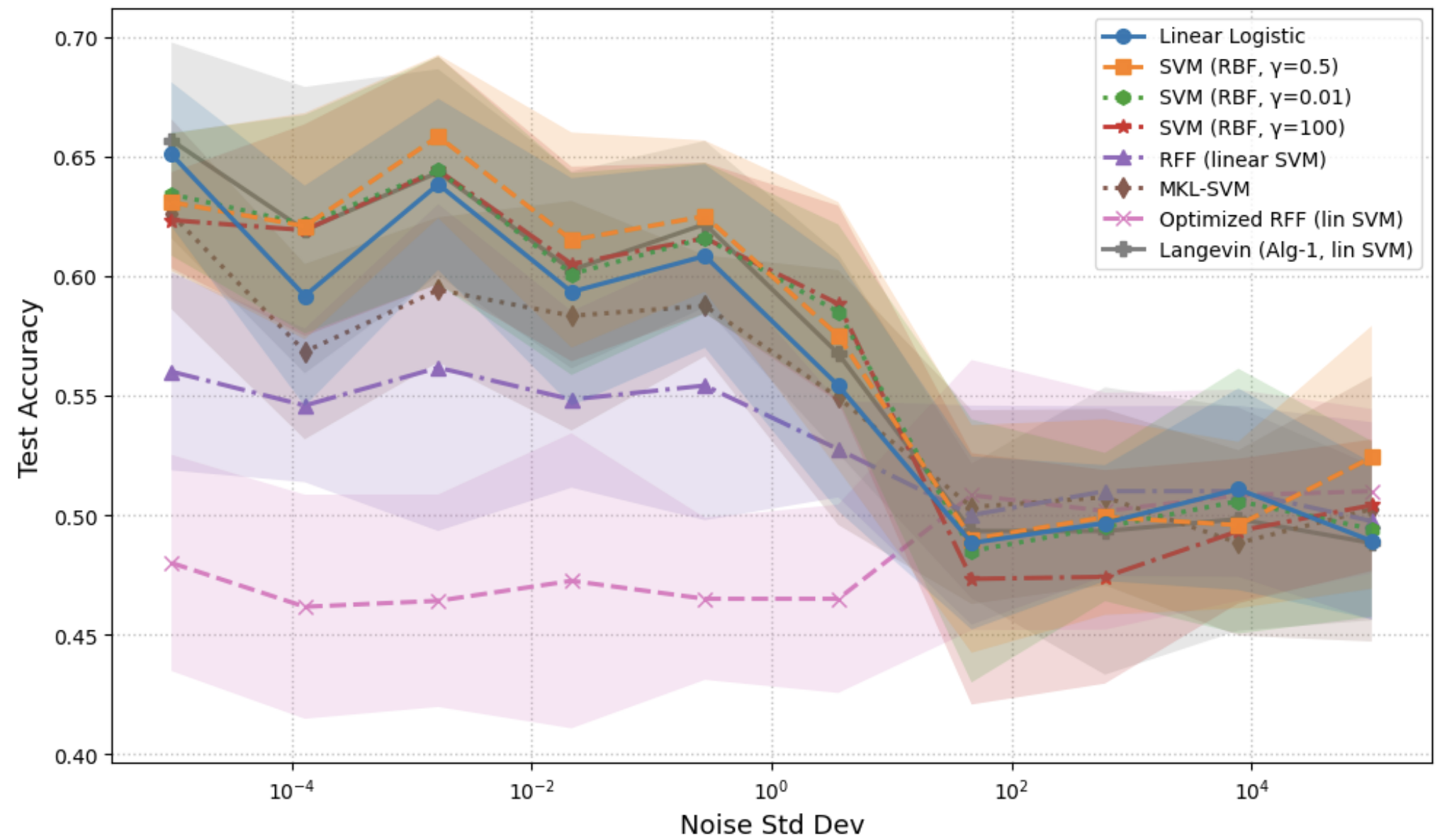}\hfill
	\safeincludegraphics[width=0.7\linewidth]{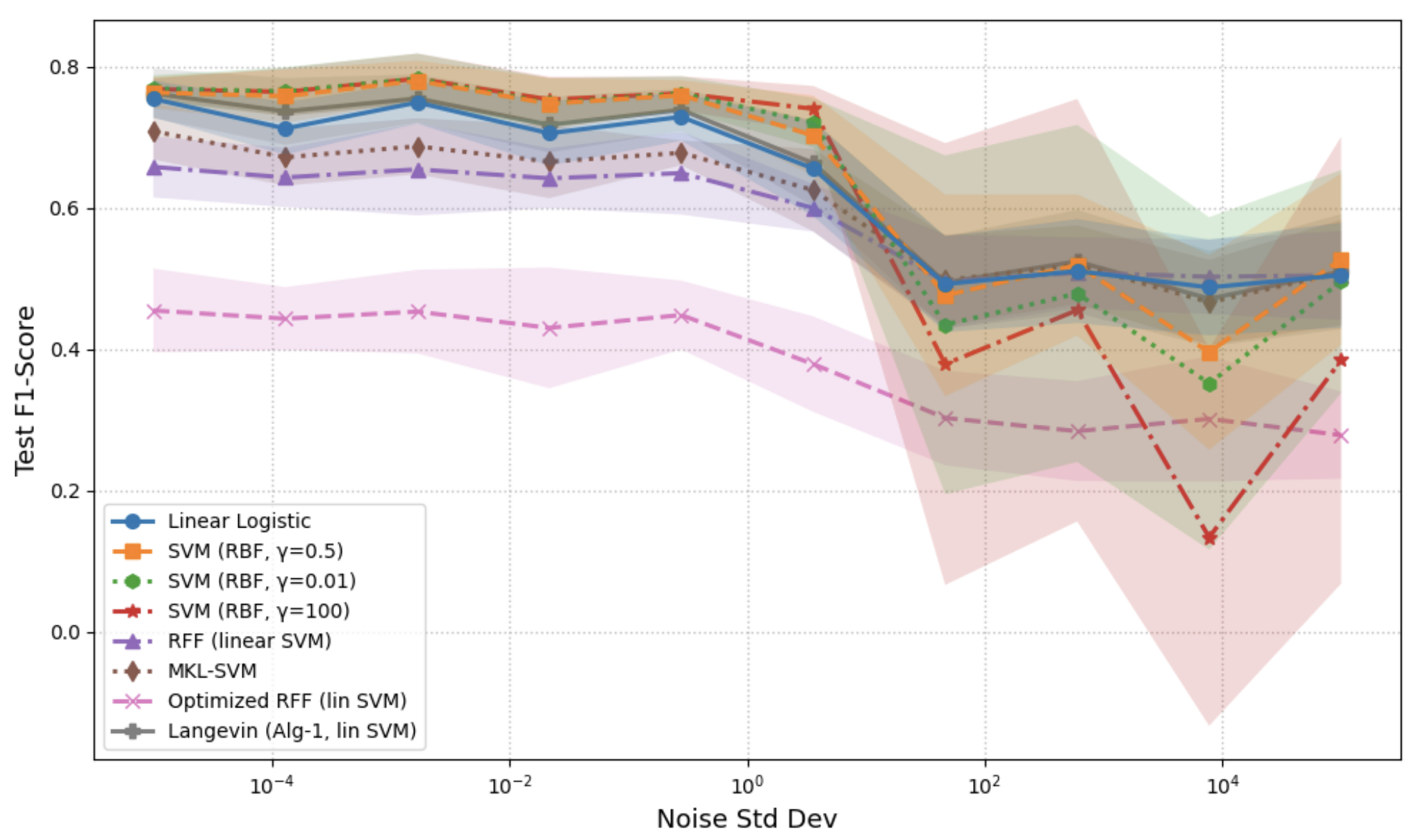}
	\caption{Accuracy (left) and F1 score (right) versus noise standard deviation $\sigma^2$ for kernel-learning approaches. Confidence intervals summarize variability across 10 independent trials.}
	\label{fig:supp-noise-sweep}
\end{figure}

\begin{table}[t]
	\centering
	\small
	\caption{Observed wall-clock runtime per job in \textit{seconds}. Values correspond only to measured sample sizes.}
	\label{tab:runtime-observed}
	\resizebox{\textwidth}{!}{%
		\begin{tabular}{@{} l r r r r r r r r @{}}
			\toprule
			\textbf{Model} 
			& \multicolumn{8}{c}{\textbf{Number of samples } $n$} \\
			\cmidrule(l){2-9}
			& $100$ & $500$ & $1{,}000$ & $2{,}000$ & $3{,}000$ & $4{,}000$ & $10{,}000$ & $20{,}000$ \\
			\midrule
			Linear Logistic 
			& 0.142 & 0.100 & 0.0882 & 0.184 & 0.173 & 0.204 & 0.117 & 0.121 \\
			SVM (RBF kernel) 
			& 0.00231 & 0.0425 & 0.140 & 0.0910 & 0.317 & 0.468 & 1.98 & 9.72 \\
			RFF (linear SVM) 
			& 0.00332 & 0.0250 & 0.0270 & 0.0326 & 0.0416 & 0.0571 & 0.160 & 0.335 \\
			MKL-SVM 
			& 2.06 & 9.14 & 29.6 & 131 & 482 & 874 & 7.21e3 & 4.49e4 \\
			Optimized RFF (lin SVM) 
			& 0.0346 & 0.127 & 0.299 & 0.533 & 0.855 & 1.04 & 2.94 & 4.87 \\
			Langevin (Alg-1, lin SVM) 
			& 21.6 & 26.1 & 45.8 & 80.5 & 95.9 & 124 & 417 & 922 \\
			\bottomrule
		\end{tabular}%
	}
\end{table}

\begin{table}[t]
    \centering
    \small
    \caption{Power-law fits $t(n)=c\,n^{\gamma}$ summarizing runtime scaling behavior. Exponent $\gamma$ measures the growth rate, prefactor $c$ sets the baseline runtime, and $\sigma_{\log}$ indicates variability in log-space residuals.}
    \label{tab:powerlaw-summary}
    \begin{tabular}{@{} l r r r c @{}}
        \toprule
        \textbf{Model} & \multicolumn{1}{c}{$\gamma$} & \multicolumn{1}{c}{$c$} & \multicolumn{1}{c}{$\sigma_{\log}$} & $n_{\text{obs}}$ \\
        \midrule
        MKL-SVM & 1.961 & 8.00e-05 & 0.671 & 8 \\
        SVM (RBF kernel) & 1.477 & 3.00e-06 & 0.503 & 8 \\
        Optimized RFF (lin SVM) & 0.967 & 3.55e-04 & 0.085 & 8 \\
        RFF (linear SVM) & 0.799 & 9.50e-05 & 0.319 & 8 \\
        Langevin (Alg-1, lin SVM) & 0.730 & 3.82e-01 & 0.430 & 8 \\
        Linear Logistic & 0.021 & 1.16e-01 & 0.290 & 8 \\
        \bottomrule
    \end{tabular}

\end{table}

\begin{figure}[t]
	\centering
	\safeincludegraphics[width=0.5\linewidth]{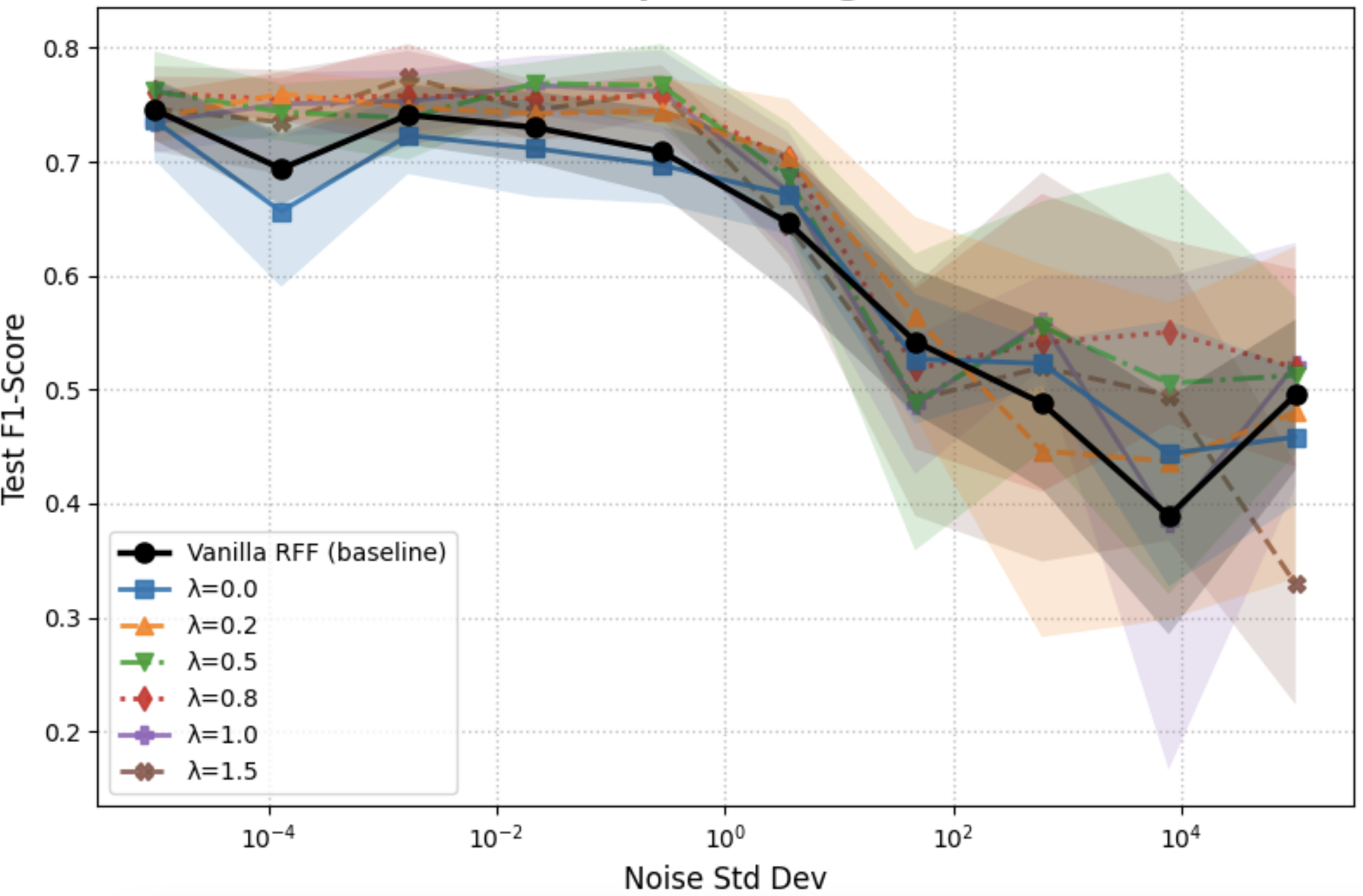}\hfill
	\safeincludegraphics[width=0.5\linewidth]{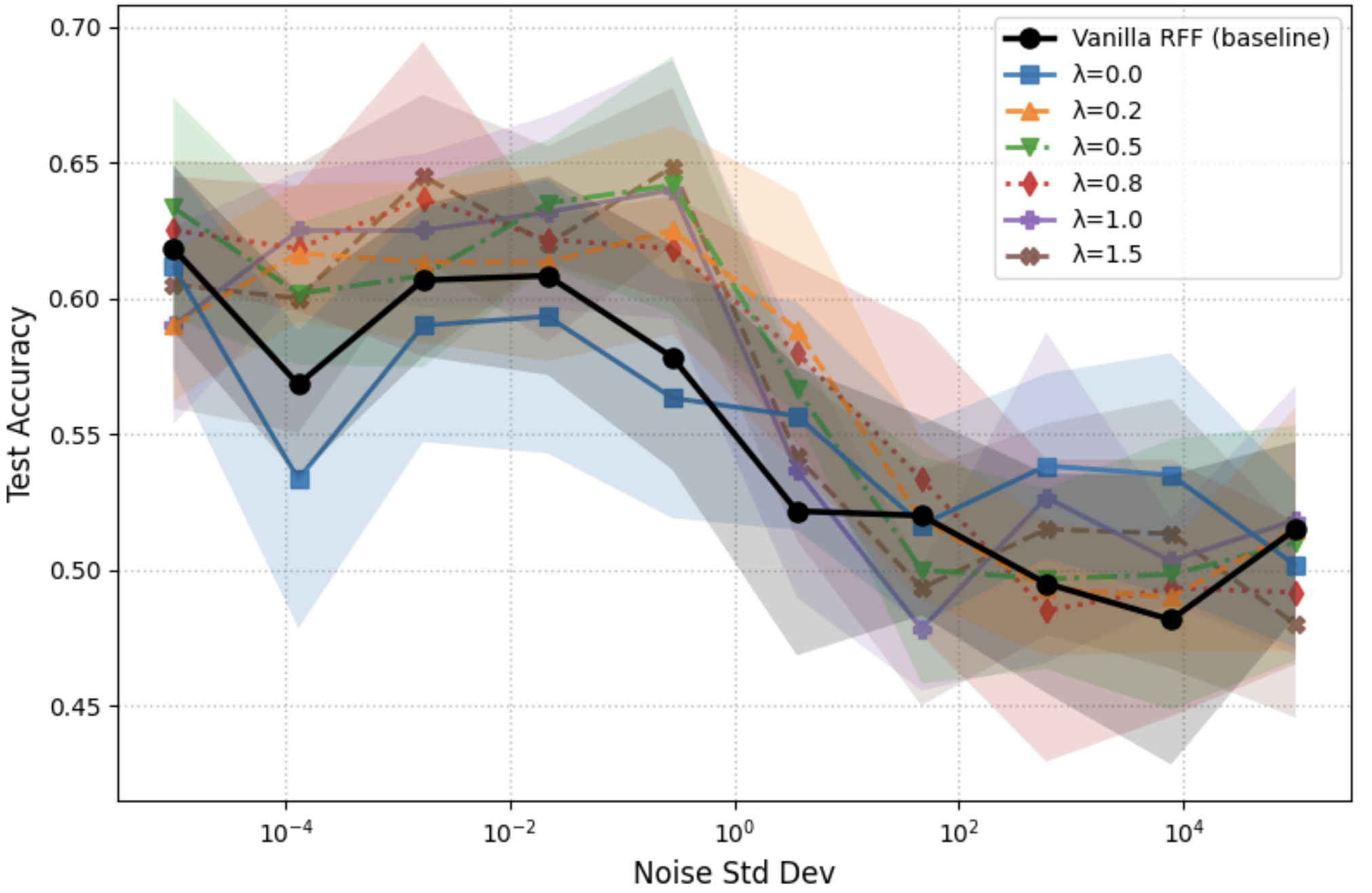}\\
	\safeincludegraphics[width=0.5\linewidth]{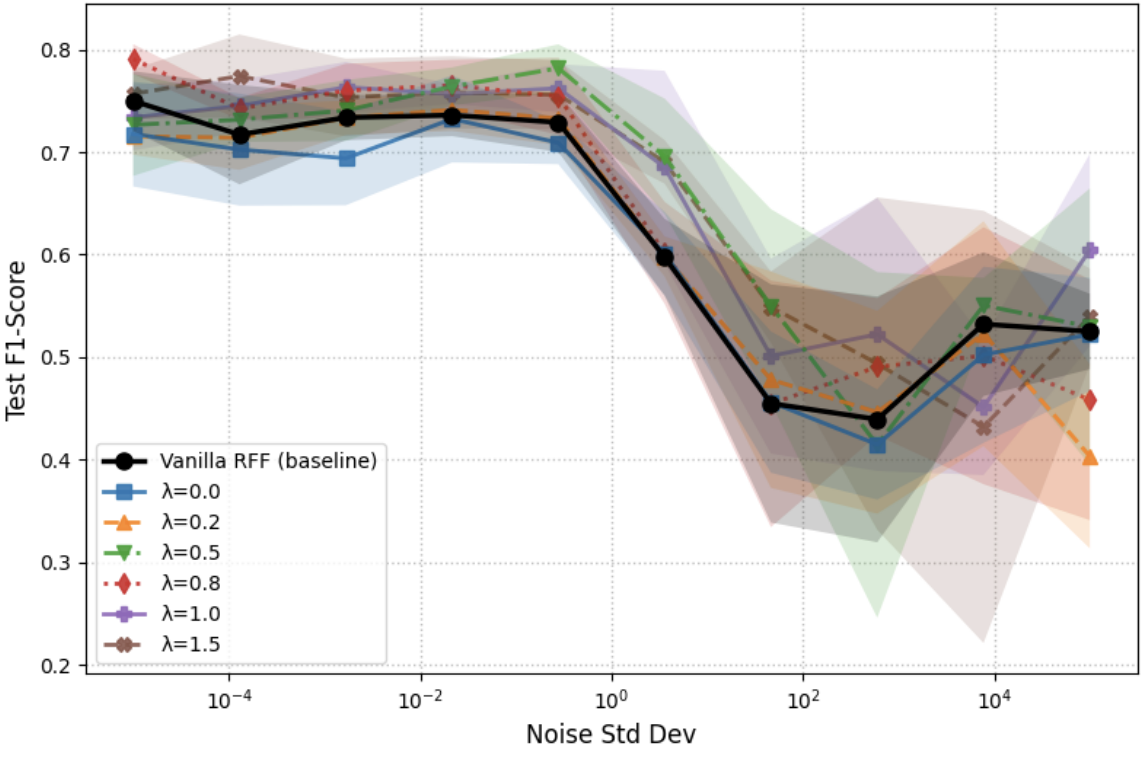}\hfill
	\safeincludegraphics[width=0.5\linewidth]{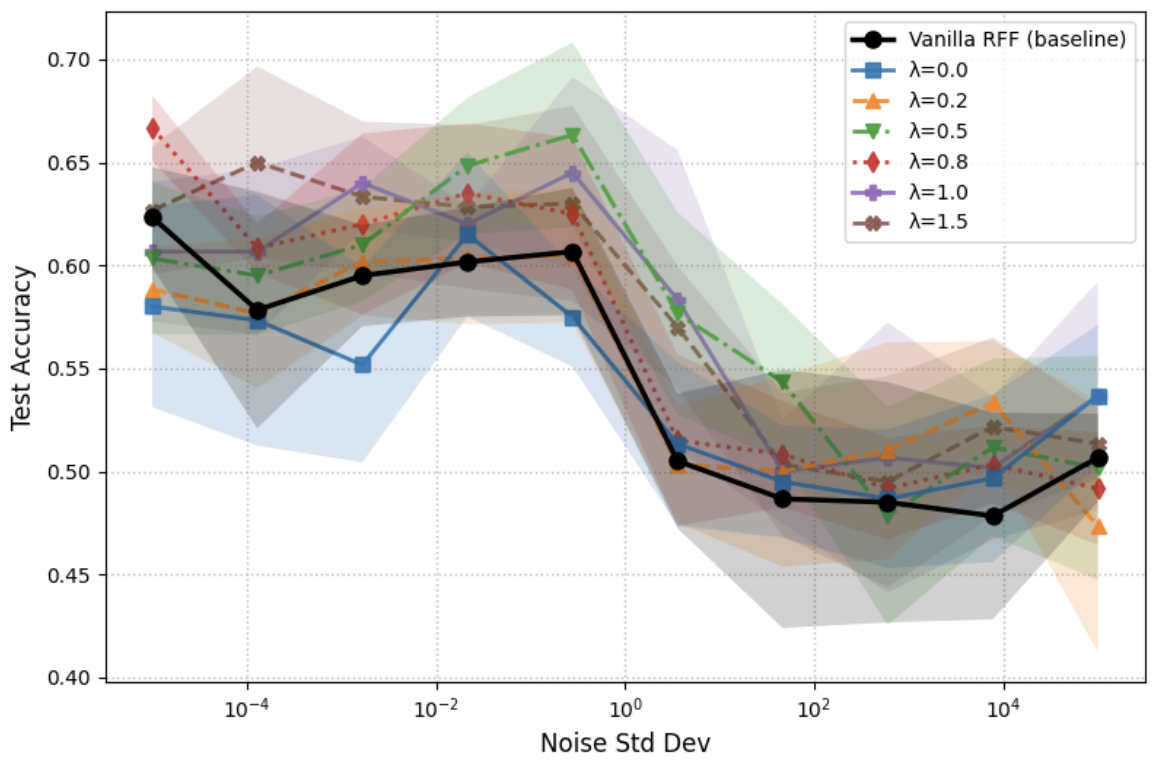}\\
	
	\caption{Ablation study of Coulomb (top) and Riesz (bottom) potentials for charged particles. Increasing $\lambda$ strengthens the repulsive interaction between particles. The baseline uses vanilla RFF with no sampling optimizations. Confidence intervals summarize variability across 10 independent trials. }
	\label{fig:supp-lambda-sweep}
\end{figure}

\section{Numerical Experiments}
We present simulations on synthetic datasets as well as experiments on NLP datasets. We plan to release the code for all experiments publicly upon publication.

\subsection{Nonlinear synthetic classification: data, estimators, and evaluation protocol}
\label{subsec:nonlinear-synth-cls}

\subsubsection{Problem setting.}
We consider binary classification with a fixed design matrix \( \bm{X}\in\mathbb{R}^{n\times p} \) drawn once as \(X_{ij}\overset{\mathrm{i.i.d.}}{\sim}\mathsf{N}(0,1)\) under a global seed. Unless stated otherwise we use \(n=400\) and a configurable feature dimension \(p\ge 5\). Writing \(\bm{x}_i\) for the \(i\)-th row of \(\bm{X}\), the  log-odds is
\[
\ell(\bm{x}_i)
= 1.5\,\sin(\pi\,x_{i,1})
\;+\;0.8\,x_{i,2}^{\,2}
\;-\;1.0\,x_{i,3}\,x_{i,4}
\;+\;0.5\,\sin(3\,x_{i,5})
\;+\;\mathbf{1}_{\{p>5\}}\;\bm{x}_{i,6:p}^{\!\top}\bm{w},
\]
where the ``extra--dimensions'' weight vector \(\bm{w}\in\real^{p-5}\) is drawn once from \(\mathsf{N}(\bm{0},0.3^2\bm{I})\) (same \(\bm{w}\) for all trials), and $\bm{1}_{\{\cdot\}}$ is the indicator function. Labels are generated by corrupting the logit with additive Gaussian noise and passing through a logistic link:
\[
\xi_i \sim \mathsf{N}(0,\sigma^2),\qquad
y_i \,\big|\, \bm{x}_i \sim \mathrm{Bernoulli}\!\left( \varsigma\!\big(\ell(\bm{x}_i)+\xi_i\big) \right),
\quad \varsigma(t)=\tfrac{1}{1+e^{-t}}.
\]
We sweep the \emph{logit noise standard deviation} \(\sigma\) over logarithmically spaced values in \([10^{-5},10^{5}]\). A single \(70\%\)/\(30\%\) train/test split (random state \(0\)) of \(\bm{X}\) is reused across all trials and noise levels; for each \(\sigma\) and trial, only the noise \((\xi_i)\) (and any method-specific randomness) is resampled.

\subsubsection{Baselines and learned feature maps.}
All methods use the same train/test partitions of \(\bm{X}\). Random-feature methods are aligned to a common feature budget \(D=200\).
\begin{enumerate}[label=(\arabic*), leftmargin=*, itemsep=2pt, topsep=2pt]
    \item \textit{Linear logistic regression} on the raw inputs \(\bm{X}\) (baseline).
    \item \textit{SVM with Gaussian (RBF) kernel} \(K(\bm{x},\bm{x}')=\exp\!\big(-\gamma\lVert\bm{x}-\bm{x}'\rVert_2^2\big)\) with fixed \(\gamma=0.5\) and default hinge loss, \(C=1\).
    \item \textit{Random Fourier features (RFF) + linear SVM.}
    Draw \(W_{:,j}\!\sim\!\mathsf{N}(0,\,2\gamma\,\bm{I}_p)\), \(b_j\!\sim\!\mathrm{Unif}[0,2\pi]\),
    form \( z(\bm{x})=\sqrt{2/D}\,\cos(W^\top\bm{x}+b) \) with optimized \(\gamma=0.5\) and \(D=200\),
    then train a linear SVM (hinge loss, \(C=1\)).
    \item \textit{Multiple-kernel learning (MKL) + SVM (precomputed).}
    Build Gaussian Gram matrices with \(\gamma\in\{0.1,0.3,0.7,1.0,1.3,1.6\}\), apply the
    \textsc{MKLpy} normalization, learn nonnegative mixture weights via \textsc{MEMO},
    and feed the weighted train/test kernels to an SVM with a precomputed kernel.
    \item \textit{Optimized random features (importance sampling) + linear SVM.}
    Following \cite{sinha2016learning}, sample \(N_w=10^4\) candidate features, use
    divergence threshold \(\rho=N_w\cdot 0.005\) and tolerance \(10^{-10}\) to obtain a
    reweighted set, then align to \(D=200\) via stratified subsampling (padding if needed).
    Train a linear SVM on the resulting features (\(C=1\)).
    \item \textit{Langevin spectral estimators (two variants) + linear SVM.}
    Evaluate (i) an ``Alg-1'' Coulomb-gas variant with \(\lambda_{\mathrm{reg}}=0.5\) and
    (ii) a robust variant with \(\lambda_{\mathrm{reg}}=0\). Both use \(N=300\) particles,
    feature budget \(D=200\), step size \(\eta=30\), inverse temperature \(\beta=10^2\),
    horizon \(T_{\max}=2000\), logarithmic repulsion, scaling by \(N\), maximum frequency
    norm \(5.0\), and gradient clipping \(1.0\). After fitting, we form Fourier features
    from the learned frequencies/phases and train a linear SVM (\(C=1\)).
\end{enumerate}

\subsubsection{Implementation.}
We rely on \textsc{MKLpy} \cite{lauriola2020mklpy} for MKL (mixture-of-RBFs) and on scikit-learn for linear/logistic models and SVMs. The importance-sampling optimized RFF pipeline of \cite{sinha2016learning} is reimplemented in Python. The Langevin estimators are also implemented in Python. We run the simulations on CPU.

\subsubsection{Training protocol, metrics, and plots.}
For each noise level \(\sigma\) and each of three independent trials, we fit every available method on the training split and evaluate on the test split. We report, for each method and \(\sigma\), the mean and standard deviation (over trials) of:
\begin{itemize}[leftmargin=*, itemsep=1pt]
    \item \textit{Test accuracy}: \( \mathrm{Acc} = \tfrac{1}{n_{\mathrm{test}}}\sum_{i\in\mathcal{D}_{\mathrm{test}}}\mathbf{1}\{\hat{y}_i = y_i\}\).
    \item \textit{Test F1-score}: (binary, with \( \mathrm{zero\_division}=0\) in implementation).
    \item \textit{Wall-clock time per job}: (seconds), measured with a timer from the start of a method's pipeline (feature construction included for RFF variants) to prediction on the test set.
\end{itemize}
We produce three summary plots versus \(\sigma\) (logarithmic x-axis): (A) accuracy, (B) F1-score, and (C) runtime; shaded bands depict \(\pm1\) standard deviation. 

\subsubsection{Reproducibility and computation.}
A global seed (\(42\)) fixes \(\bm{X}\), the train/test split, and (when \(p>5\)) the extra-dimension weights \(\bm{w}\); per-job seeds are derived deterministically from the noise and trial indices. We cap BLAS threads to one in parent and worker contexts to avoid oversubscription. Jobs indexed by \((\sigma,\mathrm{trial})\) are executed in parallel using a process pool with up to eight workers; progress is tracked asynchronously. Across methods using random features, the feature budget is held fixed at \(D=200\) to ensure comparable capacity.

\subsubsection{Results.} In Figure~\ref{fig:supp-noise-sweep}, we report accuracy and F1 score as functions of the additive-noise standard deviation $\sigma$. Across noise levels, the SVM with a Gaussian (RBF) kernel---when its bandwidth $\gamma$ is properly tuned---achieves the best performance. This is expected because the random-feature model only approximates the kernel; with a small number of features ($D=200$), approximation error degrades classification performance. The same figure also shows that the RBF SVM can underperform substantially when the bandwidth $\gamma$ is mis-specified, e.g., $\gamma=0.01$ and $\gamma=100$ corresponding to green and red dashed lines, respectively.

Moreover, relative to a linear SVM trained on fixed random Fourier features (RFF), jointly optimizing the random features yields clear gains in both accuracy and F1 (purple dashed vs.\ grey solid curves), confirming the benefit of random-feature optimization. Moreover, compared to alternative importance sampling of random features, we obtain a clear gain by optimization of random feature samples directly (brown line vs purple line).

\subsubsection{Ablation: Coulomb/Riesz interaction strength.} We vary the potential strength \(\lambda\in\{0.0,0.2,0.5,0.8,1.0,1.5\}\) and evaluate robustness as input noise increases (x--axis: noise standard deviation on a log scale). Across both metrics (Accuracy and F1), small noise regimes (\(\sigma\!\le\!10^{-3}\)) show minimal separation between curves, indicating that the potential has little effect when the signal is clean. As noise grows to the moderate range (\(10^{0}\!\!-\!10^{2}\)), nonzero potentials consistently outperform the vanilla RFF baseline and \(\lambda=0\), with \(\lambda\approx 0.5\!-\!1.0\) yielding the most reliable gains (typically a few points) and smaller variance, suggesting improved stability. In the extreme-noise regime (\(\sigma\!\ge\!10^{3}\)), performance degrades for all settings and the gaps narrow; very large strengths (\(\lambda\!=\!1.5\)) can oversmooth and underperform, while small-to-moderate \(\lambda\) remains competitive but offers diminishing returns. Overall, a \emph{moderate} Coulomb/Riesz potential (\(\lambda\approx 0.5\!-\!1.0\)) provides the best robustness--accuracy trade-off, whereas too weak or too strong potentials are less effective.

\subsection{Sentence-Level Classification Benchmarks}
\label{sec:sentcls}

\begin{table}[t]
    \centering
    \caption{Statistics of sentence-level and sentence-pair classification datasets used in our experiments. Counts denote the number of examples per split. For sentence-pair tasks, each example consists of a pair of sentences.}
    \label{tab:sentcls_datasets}
    \begin{tabular}{lccccc}
        \toprule
        \textbf{Dataset} & \textbf{Train} & \textbf{Dev} & \textbf{Test} & \textbf{Classes} & \textbf{Task} \\
        \midrule
        SST-2 \cite{socher2013recursive}          & 67{,}349  & 872    & 1{,}821   & 2 & Sentiment classification \\
        QQP \cite{wang2018glue}                   & 363{,}846 & 40{,}430 & 390{,}965 & 2 & Duplicate question detection \\
        Rotten Tomatoes \cite{pang2005seeing}    & 8{,}530   & 1{,}066 & 1{,}066   & 2 & Sentiment classification \\
        \bottomrule
    \end{tabular}
\end{table}

\paragraph{Dataset selection.}
We evaluate sentence-level and sentence-pair classification performance on a set of widely used English benchmarks that are standard in prior work on attention mechanisms.

The selected datasets span multiple semantic phenomena and dataset scales, including binary sentiment analysis on \textit{SST-2} and \textit{Rotten Tomatoes}, as well as semantic equivalence detection on the \textit{QQP} benchmark. Together, these tasks cover both single-sentence and sentence-pair settings, enabling evaluation across varying supervision regimes and levels of linguistic complexity.

\paragraph{Datasets and splits.}
\textit{SST-2}~\cite{socher2013recursive} is a binary sentiment classification task derived from the Stanford Sentiment Treebank. We use the GLUE version of the dataset, which removes neutral examples and provides sentence-level annotations, and follow the standard GLUE train/dev/test splits.

\textit{QQP}~\cite{wang2018glue} (Quora Question Pairs) is a sentence-pair classification task in which each example consists of two questions from Quora annotated for semantic equivalence. We adopt the official GLUE splits and process each question pair by concatenation with a special separator token.

\textit{Rotten Tomatoes}~\cite{pang2005seeing} is a sentence-level binary sentiment classification dataset constructed from movie reviews. We use the standard polarity version of the dataset and follow the commonly used train/dev/test splits.

\paragraph{Preprocessing.}
We apply minimal preprocessing to preserve the original linguistic structure of each dataset. All text is tokenized using a fixed WordPiece vocabulary, with original casing and punctuation retained. Inputs are truncated or padded to a maximum length of 128 tokens. For sentence-pair inputs in \textit{QQP}, the two questions are concatenated using a special separator token. No segment embeddings or task-specific features are used unless otherwise stated.

\paragraph{Training protocol.}
For each dataset, we train models on the official training split and use the validation split exclusively for model selection and early stopping. Hyperparameters are fixed across all experiments and are not tuned on validation data. For each method, we select the checkpoint that achieves the highest validation accuracy and report performance on the held-out test split using this checkpoint. Test data are never used for hyperparameter selection or model selection.

\paragraph{Evaluation and metrics.}
Across all datasets, we report classification accuracy as the primary performance metric. For binary classification tasks (SST-2, QQP, and Rotten Tomatoes), we additionally report micro-, macro-, and weighted $F_1$ scores to account for potential class imbalance. To evaluate ranking quality and probabilistic reliability, we also report ROC-AUC, precision--recall AUC, Matthews correlation coefficient (MCC), balanced accuracy, LogLoss, Brier score, and expected calibration error (ECE). All metrics are computed on the test split using the checkpoint selected based on validation performance.

\paragraph{Experimental setup.}
Inputs are tokenized using the \texttt{bert-base-uncased} WordPiece tokenizer and truncated or padded to a maximum sequence length of 128 tokens. Models use a two-layer Transformer encoder with hidden size 128, two attention heads, feedforward dimension 256, and $m=256$ random features per head. In our particle-based view, each column of $\bm{\Omega}_{N}\in\mathbb{R}^{H\times d_k \times m}$ is a particle, so this corresponds to 256 particles per head (512 per layer; 1024 particles total across the two-layer encoder). Dropout is set to 0.1, and mean pooling is applied over token representations. We use a $\log$ repulsion force between particles.

Training uses a batch size of 64. When kernel optimization is enabled, training proceeds in two phases. In Phase~A, the attention feature map parameters $\bm{\Omega}_{N}$ (and optionally layer normalization and value projection parameters) are optimized using an alignment loss objective for up to 10 epochs via particle optimization (SGLD/Langevin dynamics with step size $\eta=2\times10^{-3}$ and inverse temperature $\beta=50$, i.e., Gaussian noise scale $\sqrt{2\eta/\beta}$, plus a repulsion term with $\lambda=10^{-3}$ and gradient clipping at norm 10), while the remaining model parameters are frozen. Column-wise $L_2$ norms of $\bm{\Omega}_{N}$ (i.e., per-particle norms) are constrained to 1.5. Training in Phase~A stops early if the maximum change in $\bm{\Omega}_{N}$ falls below $10^{-6}$. In Phase~B, the kernel parameters are frozen and the model is trained end-to-end with cross-entropy loss for 10 epochs. For vanilla baselines, only Phase~B is used. Optimization in Phase~B is performed with Adam using a learning rate of $2\times10^{-4}$.






\paragraph{Effect of Kernel Learning.}
Tables~\ref{tab:sst2_detailed_metrics}, \ref{tab:qqp_detailed_metrics}, and \ref{tab:rotten_tomatoes_detailed_metrics} quantify the effect of target-alignment kernel learning across SST-2, QQP, and Rotten Tomatoes. On SST-2 (Table~\ref{tab:sst2_detailed_metrics}), where overall performance is near saturation, kernelization yields feature-map-dependent gains. Kernel-porf-softplus attains the best accuracy and MCC (0.8188/0.6376). Several kernel variants improve both discriminative and probabilistic metrics: for example, Kernel-elu increases accuracy from 0.8028 to 0.8108 while simultaneously reducing LogLoss (0.4938$\rightarrow$0.4502) and Brier score (0.1505$\rightarrow$0.1392), and Kernel-softmaxfeat improves accuracy (0.8005$\rightarrow$0.8085) while achieving near-optimal LogLoss (0.4381). Calibration-specific improvements are also evident: Kernel-cos2 yields the lowest Brier score (0.1366), while Kernel-favor achieves the lowest ECE (0.3086). However, not all kernels are uniformly beneficial---Kernel-softplus degrades both Brier and ECE---highlighting sensitivity to the choice of feature map.

On the more challenging QQP task (Table~\ref{tab:qqp_detailed_metrics}), kernel learning provides more consistent benefits across feature maps. Kernel-softmaxfeat is the strongest $Q{=}K$ model (Acc 0.7909, MCC 0.5506), with substantial reductions in LogLoss (0.4625$\rightarrow$0.4354) and Brier score (0.1512$\rightarrow$0.1418), while Kernel-favor achieves the best calibration as measured by ECE (0.4106). Although kernelization roughly doubles training time, it does not affect inference-time complexity, indicating that alignment primarily reshapes representation geometry and probabilistic reliability rather than merely sharpening decision boundaries.

Results on Rotten Tomatoes (Table~\ref{tab:rotten_tomatoes_detailed_metrics}) exhibit similar but more modest trends. Kernel-softmaxfeat achieves the highest accuracy and MCC (0.7167/0.4345) while also attaining the lowest LogLoss (0.5697) and Brier score (0.1929), suggesting that kernel alignment is particularly effective in improving probabilistic calibration on smaller, noisier datasets. As on SST-2, gains vary across feature maps, reinforcing the importance of the kernel choice.

Despite these improvements, Vaswani-style softmax attention remains the strongest performer on QQP. Its advantage stems from explicitly modeling dense all-pairs token interactions through an $\ell \times \ell$ attention matrix, incurring $\mathcal{O}(\ell^2)$ time and memory complexity in the sequence length $\ell$. This expressivity is well suited to paraphrase detection, where fine-grained cross-token alignment is critical, but the quadratic scaling limits practicality for longer sequences. In contrast, linear-attention variants trade some modeling capacity for $\mathcal{O}(\ell)$-type scaling (up to the feature dimension), making them preferable when sequence length or throughput constraints dominate.

\appendix

\section{Appendix}

We organize the appendix as follows:

\begin{itemize}
    
 \item In Section \ref{subsection: Proof of Continuity Equation}, we present the proof of Theorem~\ref{Theorem: Continuity Equation}. 

\item In Section \ref{subsection: LDP} we present the large deviation result of Theorem~\ref{Theorem: LDP for Empirical Measures}.
\end{itemize}

\subsection{Notation}

We define the notation as follows:

\begin{itemize}
    \item \( \mathcal{M}(\Omega) \): The space of measures on the measurable space \( (\Omega, \mathcal{B}) \), where \( \mathcal{B} \) is the Borel \(\sigma\)-algebra on \( \Omega \).
    \item \( \hat{\mu} \in \mathcal{M}(\Omega) \): A random (counting) measure, typically representing the realization of the point process. It is a random element in the space of measures on \( \Omega \).
    \item \( \mu \in \mathcal{M}(\Omega) \): The deterministic (limiting) measure, often representing the stationary or equilibrium distribution of the point process as \( N \to \infty \).
    \item \( \mathcal{P}: \Xi \to \mathcal{M}(\Omega) \): A point process mapping from the probability space \( (\Xi, \mathcal{F}, \mathbb{P}) \) to the space of measures on \( \Omega \).
    \item \( \nu = \mathbb{P} \circ \mathcal{P}^{-1} \): The push-forward measure of the point process, governing the randomness of the point process and describing the distribution of realizations of the point process.
    \item \( \Lambda \): The intensity measure, defined as
    \[
    \Lambda(B) = \mathbb{E}_{\mu \sim \nu}[\mu(B)] \quad \text{for} \quad B \in \mathcal{B},
    \]
    where \( \mu \) is the random measure associated with a realization of the point process.
    \item \( \Xi \): The underlying probability space, often taken as a sample space of configurations for the point process.
    \item \( \hat{\mu}(B) \): The counting measure or number of points in a subset \( B \in \mathcal{B} \), corresponding to the realization of the random measure \( \hat{\mu} \).
    \item \( \mathbb{E}[\cdot] \): Expectation with respect to the probability measure governing the point process. In this context, it is typically the expectation under \( \nu \), the push-forward measure.
    \item \( \mu(B) \): The number of points in the set \( B \) as determined by the limiting measure \( \mu \).
    \item \( X \): A random variable (or random element) associated with the point process, representing a random realization or observation.
    \item \( x \): A realization of the random variable \( X \), i.e., an outcome or observation from the random process.
\end{itemize}

\subsection{Proof of Lemma \ref{lemma:KTA_energy_bias}}
Insert $\phi_{\bm{\omega}_k,b_k}(\bm{x})=\sqrt{2}\cos(\bm{\omega}_k^\top\bm{x}+b_k)$ into \eqref{Eq:Empirical_loss}:
\[
\frac{1}{N}\sum_{k=1}^N \phi_{\bm{\omega}_k,b_k}(\bm{x}_i)\phi_{\bm{\omega}_k,b_k}(\bm{x}_j)
= \frac{2}{N}\sum_{k=1}^N \cos(\alpha_{ik})\cos(\alpha_{jk}),\quad
\alpha_{ik}\defeq\bm{\omega}_k^\top \bm{x}_i + b_k.
\]
Substituting and swapping finite sums gives
\begin{align}
    \label{eq:after_swap_A_refined}
    \mathcal{E}_N(\bm{\Omega}_N,\bm{b})
    = -\frac{2}{n(n-1)N}\sum_{k=1}^N \sum_{i\neq j} y_i y_j \cos(\alpha_{ik})\cos(\alpha_{jk}).
\end{align}
For any $\bm{a},\bm{y}\in\mathbb{R}^n$,
\begin{align}
    \sum_{i\neq j} y_i y_j a_i a_j
    = \Big(\sum_{i} y_i a_i\Big)^2 - \sum_{i} y_i^2 a_i^2.
    \label{eq:offdiag_identity_refined}
\end{align}
Applying \eqref{eq:offdiag_identity_refined} with $a_i=\cos(\alpha_{ik})$ yields
\[
\sum_{i\neq j} y_i y_j \cos(\alpha_{ik})\cos(\alpha_{jk})
= \big(\bm{y}^\top \bm{c}_k\big)^2 - \sum_i y_i^2 \cos^2(\alpha_{ik}),
\quad
(\bm{c}_k)_i\defeq\cos(\alpha_{ik}).
\]

Take expectation over the phases $\bm{b}$, assumed i.i.d.\ $\mathrm{Unif}[0,2\pi]$ and independent of $(\bm{\Omega}_N,\{\bm{x}_i\})$:
\[
\mathbb{E}_{b_k}\big[\cos^2(\bm{\omega}_k^\top \bm{x}_i + b_k)\big] \;=\; \tfrac{1}{2}.
\]
Therefore,
\[
\mathbb{E}_{b_k}\!\left[\sum_i y_i^2 \cos^2(\alpha_{ik})\right] \;=\; \tfrac{1}{2}\sum_i y_i^2.
\]
In binary classification ($y_i^2=1$), this equals $n/2$ and is \emph{independent of} $\bm{\omega}_k$.

Next, write $u_{ik}\defeq \bm{\omega}_k^\top \bm{x}_i$ and expand
\(
\cos(u_{ik}+b_k)=\cos u_{ik}\cos b_k-\sin u_{ik}\sin b_k.
\)
Define the vectors
$
\bm{u}_k=(\cos u_{1k},\ldots,\cos u_{nk})^\top$, and 
$\bm{v}_k=(\sin u_{1k},\ldots,\sin u_{nk})^\top.
$. Then
\begin{align}
\bm{y}^\top \bm{c}_k
= \bm{y}^\top\!\big(\cos b_k\,\bm{u}_k - \sin b_k\,\bm{v}_k\big)
\end{align}
and
\begin{align}
(\bm{y}^\top \bm{c}_k)^2
= (\cos b_k)^2(\bm{y}^\top \bm{u}_k)^2
+(\sin b_k)^2(\bm{y}^\top \bm{v}_k)^2
-2\cos b_k \sin b_k\,(\bm{y}^\top \bm{u}_k)(\bm{y}^\top \bm{v}_k).
\end{align}
Taking expectation over $b_k$ and using
$\mathbb{E}[\cos^2 b_k]=\mathbb{E}[\sin^2 b_k]=\tfrac{1}{2}$ and
$\mathbb{E}[\sin b_k\cos b_k]=0$, we obtain
\[
\mathbb{E}_{b_k}\!\left[(\bm{y}^\top \bm{c}_k)^2\right]
= \tfrac{1}{2}\Big((\bm{y}^\top \bm{u}_k)^2 + (\bm{y}^\top \bm{v}_k)^2\Big).
\]
Combining the two expectations,
\[
\mathbb{E}_{b_k}\!\left[\sum_{i\neq j} y_i y_j \cos(\alpha_{ik})\cos(\alpha_{jk})\right]
= \tfrac{1}{2}\Big((\bm{y}^\top \bm{u}_k)^2 + (\bm{y}^\top \bm{v}_k)^2\Big)
- \tfrac{1}{2}\sum_i y_i^2,
\]
where the second term is a constant (equal to $n/2$ in the binary case) and thus does not depend on $\bm{\omega}_k$.
Summing over $k$ and inserting into \eqref{eq:after_swap_A_refined},
\[
\mathbb{E}_{\bm{b}}\!\big[\mathcal{E}_N(\bm{\Omega}_N,\bm{b})\big]
=
-\frac{1}{n(n-1)N}\sum_{k=1}^N
\Big((\bm{y}^\top \bm{u}_k)^2 + (\bm{y}^\top \bm{v}_k)^2\Big),
\]
where the equality is up to an additive constant independent of $\bm{\Omega}_N$.
Finally, stack columns
\(
\bm{C}=\cos(\bm{X}\bm{\Omega}_N^{\!\top})=[\bm{u}_1\,\cdots\,\bm{u}_N],\;
\bm{S}=\sin(\bm{X}\bm{\Omega}_N^{\!\top})=[\bm{v}_1\,\cdots\,\bm{v}_N]
\)
to write
\[
\sum_{k=1}^N\Big((\bm{y}^\top \bm{u}_k)^2 + (\bm{y}^\top \bm{v}_k)^2\Big)
= \|\bm{y}^\top \bm{C}\|_2^2 + \|\bm{y}^\top \bm{S}\|_2^2.
\]
This yield the stated result
\[
    \mathbb{E}_{\bm{b}}\!\big[\mathcal{E}_N(\bm{\Omega}_N,\bm{b})\big]
    \;\equiv\;
    -\frac{1}{n^2}\left(
    \big\|\bm{y}^\top \cos(\bm{X}\bm{\Omega}_N^{\!\top})\big\|_2^2
    +
    \big\|\bm{y}^\top \sin(\bm{X}\bm{\Omega}_N^{\!\top})\big\|_2^2
    \right).
\]

\subsection{Proof of Theorems \ref{thm:main-mckean-vlasov} and \ref{Theorem: Continuity Equation}}
\label{subsection: Proof of Continuity Equation}

\begin{proof}
    We prove Theorem~\ref{Theorem: Continuity Equation}; Theorem~\ref{thm:main-mckean-vlasov} follows as the special case \(V=V_{\mathcal D}\). We present a proof sketch based on the projected-particle mean-field approach. Specifically, we construct a continuous-time interpolation of the projected Langevin particles and compare it with a reflected It\^o diffusion. A mean-field (propagation-of-chaos) estimate is then used to replace the empirical drift with the law-dependent drift. Finally, we identify the corresponding adjoint Fokker--Planck equation together with its Robin boundary condition. We describe the comparison step in some detail, as the reflection term is precisely what gives rise to the boundary condition.
    
    \paragraph{Step 1: identification of the mean-field drift.}
    For \(\mu\in\mathcal P(\Omega)\), define
    \begin{align}
        \label{eq:proof-U-b-def}
        U[\mu](\bm\omega)
        \defeq
        V(\bm\omega)+\lambda\int_{\Omega}g_s(\bm\omega-\bm\omega')\,\mu(d\bm\omega'),
        \qquad
        b[\mu](\bm\omega)\defeq-\nabla U[\mu](\bm\omega).
    \end{align}
    By compactness of \(\Omega\) and the assumed smooth regularization of \(g_s\) (or the collision-free restriction), there is a constant \(L<\infty\), independent of \(N\), such that for all \(\bm\omega,\widetilde{\bm\omega}\in\Omega\) and \(\mu,\nu\in\mathcal P(\Omega)\),
    \begin{align}
        \label{eq:proof-lipschitz-drift}
        \|b[\mu](\bm\omega)-b[\nu](\widetilde{\bm\omega})\|
        \le
        L\Big(\|\bm\omega-\widetilde{\bm\omega}\|+W_1(\mu,\nu)\Big)
        \le
        L\Big(\|\bm\omega-\widetilde{\bm\omega}\|+W_2(\mu,\nu)\Big).
    \end{align}
    Let \(\mu_N^{(-k)}=(N-1)^{-1}\sum_{\ell\ne k}\delta_{\bm\omega_\ell}\).  Since \(g_s\) is even and the finite interaction is normalized as \(1/(2N(N-1))\sum_{k\ne\ell}g_s(\bm\omega_k-\bm\omega_\ell)\),
    \begin{align}
        \label{eq:proof-finite-gradient}
        -N\nabla_{\bm\omega_k}\mathcal H_N(\bm\Omega_N)
        &=
        -\nabla V(\bm\omega_k)
        -\frac{\lambda}{N-1}\sum_{\ell\ne k}\nabla g_s(\bm\omega_k-\bm\omega_\ell)
        =b[\mu_N^{(-k)}](\bm\omega_k).
    \end{align}
    Moreover \(\|b[\mu_N^{(-k)}](\bm\omega_k)-b[\mu_N](\bm\omega_k)\|\le C/N\).  Thus the algorithm is the projected Euler scheme for the empirical McKean drift, up to a uniformly vanishing \(O(N^{-1})\) self-interaction error.
    
    \paragraph{Step 2: continuous-time embedding and the reflected diffusion comparison.}
    Construct Brownian motions \(\bm B_k\) so that
    \(\bm B_k((m+1)\eta_N)-\bm B_k(m\eta_N)=\sqrt{\eta_N}\,\bm\xi_k^m\).  Let \(\overline{\bm\omega}_k^{N,\eta}(t)=\bm\omega_k^m\) for \(t\in[m\eta_N,(m+1)\eta_N)\).  The recursion in \eqref{eq:supp-projected-mf-euler} can be written as
    \begin{align}
        \label{eq:proof-cadlag-euler}
        \overline{\bm\omega}_{k}^{N,\eta}(m\eta_N)
        =\mathcal P_{\overline\Omega}\!\left(
        \overline{\bm\omega}_{k}^{N,\eta}((m-1)\eta_N)
        +\eta_N b[\overline\mu_{(m-1)\eta_N}^{N,(-k)}]
        (\overline{\bm\omega}_{k}^{N,\eta}((m-1)\eta_N))
        +\sqrt{2/\beta}\,\Delta\bm B_k^m
        \right),
    \end{align}
    where \(\overline\mu_t^{N,(-k)}=(N-1)^{-1}\sum_{\ell\ne k}\delta_{\overline{\bm\omega}_\ell^{N,\eta}(t)}\) and \(\Delta\bm B_k^m=\bm B_k(m\eta_N)-\bm B_k((m-1)\eta_N)\).  This is the natural cadlag embedding of the projected particle chain.
    
    The limiting continuous reflected particle system associated with \eqref{eq:proof-cadlag-euler} is
    \begin{align}
        \label{eq:proof-interacting-reflected}
        d\bm X_k^N(t)
        &= b[\mu_t^{N,(-k)}](\bm X_k^N(t))\,dt
        +\sqrt{2/\beta}\,d\bm B_k(t)
        -\bm n(\bm X_k^N(t))\,dL_k^N(t),
        \qquad
        \mu_t^N=\frac{1}{N}\sum_{j=1}^N\delta_{\bm X_j^N(t)},\\
        \bm X_k^N(t)&\in\overline\Omega,
        \qquad
        L_k^N(t)\text{ is nondecreasing},
        \qquad
        \int_0^T\mathbf 1_{\Omega}(\bm X_k^N(t))\,dL_k^N(t)=0.
    \end{align}
    Here \(\bm n\) is the outward normal, so \(-\bm n\,dL_k^N\) is the inward reflection.  The Skorokhod problem on a compact convex \(C^2\) domain has a unique reflected solution, and the projection map in \eqref{eq:proof-cadlag-euler} is precisely the Euler approximation of this reflected equation.  The projected-Euler consistency assumption in the theorem means that, for each fixed \(T\),
    \begin{align}
        \label{eq:proof-euler-consistency}
        \varepsilon_{N,\eta}(T)
        \defeq
        \mathbb E\sup_{0\le t\le T}\frac{1}{N}\sum_{k=1}^N
        \|\overline{\bm\omega}_{k}^{N,\eta}(t)-\bm X_k^N(t)\|^2
        \longrightarrow0.
    \end{align}
    For smooth bounded drifts this follows from the standard Euler--Skorokhod estimate; the condition on \(\eta_N\) makes the discrete projection error negligible on the mean-field scale.
    
    \paragraph{Step 3: nonlinear reflected process and propagation of chaos.}
    Let \(\bm X_k(t)\), \(k\ge1\), be i.i.d. copies of the nonlinear reflected McKean--Vlasov process
    \begin{align}
        \label{eq:proof-nonlinear-reflected}
        d\bm X_k(t)
        &=b[\mu_t](\bm X_k(t))\,dt
        +\sqrt{2/\beta}\,d\bm B_k(t)
        -\bm n(\bm X_k(t))\,dL_k(t),
        \qquad
        \mu_t=\Law(\bm X_k(t)),\\
        \bm X_k(t)&\in\overline\Omega,
        \qquad
        \int_0^T\mathbf 1_{\Omega}(\bm X_k(t))\,dL_k(t)=0.
    \end{align}
    The Lipschitz bound \eqref{eq:proof-lipschitz-drift} gives existence and uniqueness by a fixed-point argument on measure-valued curves.  Couple \(\bm X_k^N\) and \(\bm X_k\) with the same Brownian motion and the same initial particle.  For convex \(\Omega\), the reflection map is monotone:
    \begin{align}
        \label{eq:proof-reflection-monotone}
        \big(\bm X_k^N(t)-\bm X_k(t)\big)\cdot
        \Big(-\bm n(\bm X_k^N(t))\,dL_k^N(t)+\bm n(\bm X_k(t))\,dL_k(t)\Big)
        \le0.
    \end{align}
    Applying It\^o's formula to \(\|\bm X_k^N(t)-\bm X_k(t)\|^2\), using \eqref{eq:proof-lipschitz-drift}, and averaging over \(k\) yields
    \begin{align}
        \label{eq:proof-chaos-raw}
        e_N(t)
        &\defeq
        \mathbb E\sup_{0\le r\le t}\frac{1}{N}\sum_{k=1}^N\|\bm X_k^N(r)-\bm X_k(r)\|^2 \\
        &\le
        C_T\int_0^t e_N(s)\,ds
        +C_T\int_0^t\mathbb E W_2^2\!\left(\frac{1}{N}\sum_{j=1}^N\delta_{\bm X_j(s)},\mu_s\right)ds
        +\frac{C_T}{N^2}.
    \end{align}
    The last term is the leave-one-out/self-interaction error.  Since \(\Omega\) is compact and \(\bm X_j(s)\) are i.i.d. with law \(\mu_s\),
    \begin{align}
        \label{eq:proof-empirical-iid}
        a_N(T)
        \defeq
        \sup_{0\le s\le T}\mathbb E W_2^2\!\left(\frac{1}{N}\sum_{j=1}^N\delta_{\bm X_j(s)},\mu_s\right)
        \longrightarrow0.
    \end{align}
    Gronwall's inequality therefore gives
    \begin{align}
        \label{eq:proof-propagation-chaos}
        \sup_{0\le t\le T}\mathbb E W_2^2(\mu_t^N,\mu_t)
        \le C_T\big(a_N(T)+N^{-1}+\varepsilon_{N,\eta}(T)\big)
        \longrightarrow0.
    \end{align}
    This is the propagation-of-chaos statement: any fixed finite subcollection of particles becomes asymptotically independent, and each coordinate has law \(\mu_t\).
    
    \paragraph{Step 4: the Girsanov/change-of-measure ingredient.}
    The preceding synchronous estimate can equivalently be written as a compact-domain change-of-measure estimate.  To make this explicit, define
    \begin{align}
        \label{eq:proof-girsanov-control}
        \bm u_k(t)
        =\sqrt{\beta/2}\Big(b[\mu_t^N](\bm X_k^N(t))-b[\mu_t](\bm X_k^N(t))\Big).
    \end{align}
    Novikov's condition holds because \(\Omega\) is compact and the drift is bounded.  Hence the exponential martingale
    \begin{align}
        \label{eq:proof-rn}
        \mathcal Z_T
        =\exp\!\left(-\sum_{k=1}^N\int_0^T\bm u_k(t)\cdot d\bm B_k(t)
        -\frac{1}{2}\sum_{k=1}^N\int_0^T\|\bm u_k(t)\|^2dt\right)
    \end{align}
    defines a probability measure under which
    \(\widetilde{\bm B}_k(t)=\bm B_k(t)+\int_0^t\bm u_k(s)ds\) are Brownian motions.  The relative entropy of this tilted law with respect to the original law is
    \begin{align}
        \label{eq:proof-relative-entropy}
        D_{\mathrm{KL}}(\widetilde{\mathbb P}_{N,T}\|\mathbb P_{N,T})
        =\frac{1}{2}\sum_{k=1}^N
        \widetilde{\mathbb E}\int_0^T\|\bm u_k(t)\|^2dt
        \le
        C\,N\int_0^T\widetilde{\mathbb E}W_2^2(\mu_t^N,\mu_t)dt.
    \end{align}
    By Pinsker's inequality and the bounded diameter of \(\Omega\), this entropy bound gives, at the empirical-measure level,
    \begin{align}
        \label{eq:proof-girsanov-wasserstein}
        \mathbb E W_2^2(\mu_t^{N,{\mathrm{emp}}},\mu_t^{N,{\mathrm{nl}}})
        \le
        \operatorname{diam}(\Omega)^2
        \left(\frac{2}{N}D_{\mathrm{KL}}(\widetilde{\mathbb P}_{N,t}\|\mathbb P_{N,t})\right)^{1/2},
    \end{align}
    where \(\mu_t^{N,{\mathrm{emp}}}\) and \(\mu_t^{N,{\mathrm{nl}}}\) denote the empirical measures of the empirical-drift and decoupled reflected systems.  This quantifies the cost of replacing the particle drift by the law-dependent McKean drift.  Combining \eqref{eq:proof-euler-consistency}, \eqref{eq:proof-propagation-chaos}, \eqref{eq:proof-girsanov-wasserstein}, and the triangle inequality gives
    \begin{align}
        \label{eq:proof-final-convergence}
        \sup_{0\le t\le T}\mathbb E W_2^2\!\left(
        \frac{1}{N}\sum_{k=1}^N\delta_{\bm\omega_k^{\lfloor t/\eta_N\rfloor}},\mu_t
        \right)
        \longrightarrow0.
    \end{align}
    Since \(W_2\) convergence on compact \(\Omega\) implies weak convergence, this proves the empirical-measure convergence in both theorem statements.
    
    \paragraph{Step 5: identification of the McKean--Vlasov PDE and the boundary condition.}
    Let \(\psi\in C^2(\overline\Omega)\) belong to the generator domain of the reflected diffusion, i.e. \(\partial_{\bm n}\psi=0\) on \(\partial\Omega\).  Applying It\^o's formula to \(\psi(\bm X_t)\) in \eqref{eq:proof-nonlinear-reflected} gives
    \begin{align}
        \label{eq:proof-ito-weak}
        \frac{d}{dt}\int_\Omega \psi(\bm\omega)\,\mu_t(d\bm\omega)
        =
        \int_\Omega
        \left[b[\mu_t](\bm\omega)\cdot\nabla\psi(\bm\omega)+\beta^{-1}\Delta\psi(\bm\omega)\right]
        \mu_t(d\bm\omega).
    \end{align}
    The local-time term is
    \(-\partial_{\bm n}\psi(\bm X_t)dL_t\), and therefore vanishes for this generator domain.  If \(\mu_t(d\bm\omega)=\rho_t(\bm\omega)d\bm\omega\), \(b[\mu_t]=-\nabla U_t\), and \(U_t=U[\mu_t]\), \eqref{eq:proof-ito-weak} becomes
    \begin{align}
        \label{eq:proof-weak-form}
        \frac{d}{dt}\int_\Omega \psi\rho_t\,d\bm\omega
        =
        \int_\Omega
        \left[-\nabla U_t\cdot\nabla\psi+\beta^{-1}\Delta\psi\right]\rho_t\,d\bm\omega.
    \end{align}
    The adjoint of the reflected generator is therefore
    \begin{align}
        \partial_t\rho_t
        =-\nabla\cdot(\rho_t b[\mu_t])+\beta^{-1}\Delta\rho_t
        =
        \nabla\cdot(\rho_t\nabla U_t)+\beta^{-1}\Delta\rho_t
        \qquad\text{in }\Omega.
    \end{align}
    To identify the boundary condition, integrate the last display against arbitrary smooth test functions and use Green's formula.  The boundary contribution is
    \begin{align}
        \int_{\partial\Omega}\psi(\bm\omega)
        \Big(\rho_t(\bm\omega)b[\mu_t](\bm\omega)-\beta^{-1}\nabla\rho_t(\bm\omega)\Big)\cdot\bm n(\bm\omega)\,dS(\bm\omega).
    \end{align}
    Reflection means that the probability current through \(\partial\Omega\) vanishes.  Since \(b[\mu_t]=-\nabla U_t\), this current condition is
    \begin{align}
        \label{eq:proof-current-zero}
        \Big(\rho_t\nabla U_t+\beta^{-1}\nabla\rho_t\Big)\cdot\bm n=0
        \qquad\text{on }(0,T]\times\partial\Omega.
    \end{align}
    For smooth \(\rho_t\), \eqref{eq:proof-current-zero} is exactly the Robin form
    \begin{align}
        \label{eq:proof-robin-form}
        \partial_{\bm n}\rho_t+\beta\rho_t\partial_{\bm n}U_t=0
        \qquad\text{on }(0,T]\times\partial\Omega,
    \end{align}
    which is exactly the Robin/no-flux boundary condition generated by the reflected dynamics.  The initial condition follows from \(\mu_0^N\Rightarrow\rho_0d\bm\omega\).  Taking \(\psi\equiv1\) in the weak formulation gives \(\int_\Omega\rho_t=1\), and nonnegativity follows because \(\rho_t\) is the density of the law of the reflected process.  Finally, sending \(\beta\to\infty\) removes the diffusion term and reduces the no-flux condition to \((\rho_t\bm v_t)\cdot\bm n=0\), where \(\bm v_t=-\nabla U_t\), yielding the deterministic reflected continuity equation.
\end{proof}

\subsection{Proof of Theorem \ref{Theorem: LDP for Empirical Measures}}
\label{subsection: LDP}

The proof of Theorem~\ref{Theorem: LDP for Empirical Measures} follows standard application of Varadhan's lemma \cite{Ellis2005}. We provide the proof in multiple steps. 
    
    \medskip
    \subsubsection{Rewrite the Hamiltonian as a functional of $\mu_N$}
    Let $\nu$ denote Lebesgue measure on $\Omega$ and set
    \[
    \rho \defeq\; \frac{\nu}{\nu(\Omega)}.
    \]
    We use $\rho$ as reference probability measure, i.e., $\rho\in \mathcal{P}(\Omega)$. We also recall the definition of empirical measure $\mu_N=\frac1N\sum_{k=1}^N\delta_{\bm{\omega}_k}$.     Under the product measure $\rho^{\otimes N}$ on $\Omega^N$, the empirical measure $\mu_N$
    satisfies Sanov's theorem: it obeys an LDP on $\mathcal P(\Omega)$ with speed $N$
    and good rate function
    \begin{equation}\label{eq:sanov}
        I(\mu)=\Ent(\mu\mid \rho).
    \end{equation}
    
    Now, define the bounded measurable function
    \begin{equation}\label{eq:def-V}
        V(\bm{\omega})
        \defeq
        -\frac{1}{n(n-1)}\sum_{0\le i\neq j\le n} y_i y_j\phi_{\bm{\omega}}(\bm{x}_i)\phi_{\bm{\omega}}(\bm{x}_j).
    \end{equation}
    Then by the Monte--Carlo substitution in \eqref{Eq:Monte_Carlo} and the definition
    \eqref{Eq:Empirical_loss}, the empirical loss is exactly a linear functional of $\mu_N$:
\begin{equation}\label{eq:EN-as-integral}
    \mathcal{E}_N(\bm{\Omega}_N)
    =
    -\frac{1}{n(n-1)}
    \sum_{i \neq j} y_i y_j
    \int_{\Omega}
    \phi_{\boldsymbol{\omega}}(\boldsymbol{x}_i)\,
    \phi_{\boldsymbol{\omega}}(\boldsymbol{x}_j)\,
    \mu_N(\mathrm{d}\boldsymbol{\omega})
    =
    \int_{\Omega}
    V(\boldsymbol{\omega})\,
    \mu_N(\mathrm{d}\boldsymbol{\omega}),
\end{equation}
    Assume the interaction term is of mean-field form (as in the Gibbs law \eqref{Eq:Gibbs_Measure}):
    \begin{equation}\label{eq:WN-def}
        \mathcal W_{N,s}(\bm{\Omega}_N)
        \defeq
        \frac{1}{2N(N-1)}\sum_{1\le k\neq \ell\le N} g_s(\bm{\omega}_k-\bm{\omega}_\ell).
    \end{equation}
    Define the corresponding continuum interaction functional on $\mathcal P(\Omega)$:
    \begin{equation}\label{eq:Ws-def}
        \mathcal W_s(\mu)
        \defeq
        \frac12\int_\Omega\int_\Omega g_s(\bm{\omega}-\bm{\omega}')\,\mu(\mathrm{d}\bm{\omega})\mu(\mathrm{d}\bm{\omega}').
    \end{equation}
    Then the full empirical Hamiltonian in \eqref{Eq: Empirical Kernel-Target Align} can be written as
    \begin{equation}\label{eq:HN-empirical}
        \mathcal H_{N,s}(\bm{\Omega}_N)
        =
        \mathcal E_N(\bm{\Omega}_N)+\lambda\,\mathcal W_{N,s}(\bm{\Omega}_N).
    \end{equation}
    With this normalization, $\mathcal W_{N,s}(\bm{\Omega}_N)$ converges to $\mathcal W_s(\mu_N)$; after truncation the difference is only a harmless finite-size correction.
    To handle possible singularities of the kernel $g_s$ at $\bm{0}$, we introduce a truncation.
    
    \medskip
    \subsubsection{Truncation of the interaction kernel and removal of the diagonal constant}
    
    The Columb/Riesz kernel $g_s$ is singular at the origin. In order to work with bounded continuous
    functionals on $\mathcal P(\Omega)$ (so that Varadhan's lemma applies), we introduce a truncation.
    For $\varepsilon\ge 0$ define the truncated kernel
    \begin{align}
        \label{Eq:truncated_kernel}
    g_{s}^{\varepsilon}(\bm{\omega})\defeq \min \{\|\bm{\omega}\|^{-s},\varepsilon^{-s}\}
    \end{align}
    and adopt the convention that $g_{s}^{\varepsilon}(\bm{0})$ is the (finite) value of this truncation at $\bm{0}$.
    In particular, $g_{s}^{\varepsilon}(\bm{0})=\varepsilon^{-s}$.    Define the truncated empirical interaction by
    \begin{equation}\label{eq:WN-trunc}
        \mathcal W_{N,s}^{\varepsilon}(\Omega_N)
        \defeq
        \frac{1}{2N(N-1)}\sum_{1\le k\neq \ell\le N} g_{s}^{\varepsilon}(\bm{\omega}_k-\bm{\omega}_\ell),
    \end{equation}
    and the corresponding truncated mean-field interaction functional on $\mathcal P(\Omega)$ by
    \begin{equation}\label{eq:Ws-trunc}
        \mathcal W_{s}^{\varepsilon}(\mu)
        \defeq
        \frac12\int_{\Omega}\int_{\Omega}
        g_{s}^{\varepsilon}(\bm{\omega}-\bm{\omega}')\,\mu(d\bm{\omega})\,\mu(d\bm{\omega}').
    \end{equation}
    For fixed $\varepsilon$, the map $\mu\mapsto \mathcal W_{s}^{\varepsilon}(\mu)$ is continuous on $\mathcal P(\Omega)$
    since $g_{s}^{\varepsilon}$ is bounded and continuous on $\Omega_{N}$. Define the truncated energy functional on $\mathcal P(\Omega)$ by
    \begin{equation}\label{eq:EsM-def-rewrite}
        \mathcal{H}_{s}^{\varepsilon}(\mu)
        \defeq
        \int_\Omega V(\bm{\omega})\,\mu(d\bm{\omega})
        +\lambda\,\mathcal W_{s}^{\varepsilon}(\mu).
    \end{equation}
    and define the truncated empirical Hamiltonian by
    \begin{equation}\label{eq:HN-trunc}
        \mathcal{H}_{N,s}^{\varepsilon}(\bm{\Omega}_N)
        \defeq
        \mathcal E_N(\bm{\Omega}_N)+\lambda\,\mathcal W_{N,s}^{\varepsilon}(\bm{\Omega}_N).
    \end{equation}
    
    \begin{lemma}[Finite-size correction for empirical measures]\label{lem:diag-identity}
        For every configuration $\bm{\Omega}_N=(\bm{\omega}_1,\dots,\bm{\omega}_N)\in\Omega^N$,
        \begin{equation}\label{eq:diag-identity-rewrite}
            \mathcal W_{s}^{\varepsilon}(\mu_N)
            =
            \frac{N-1}{N}\mathcal W_{N,s}^{\varepsilon}(\bm{\Omega}_N) + \frac{g_{s}^{\varepsilon}(\bm{0})}{2N}.
        \end{equation}
        Equivalently,
        \begin{equation}\label{eq:HN-approx-rewrite}
            \mathcal H_{N,s}^{\varepsilon}(\bm{\Omega}_N)
            =
            \mathcal H_{s}^{\varepsilon}(\mu_N)
            +\frac{\lambda}{N-1}\mathcal W_{s}^{\varepsilon}(\mu_N)
            -\lambda\,\frac{g_{s}^{\varepsilon}(\bm{0})}{2(N-1)} .
        \end{equation}
        In particular, for fixed $\varepsilon>0$,
        \[
        \sup_{\mu\in\mathcal P(\Omega)}
        \left|
        \frac{\lambda}{N-1}\mathcal W_s^\varepsilon(\mu)
        \right|
        \le
        \frac{|\lambda|\,\varepsilon^{-s}}{2(N-1)}.
        \]
        Thus replacing $\mathcal H_{N,s}^{\varepsilon}(\bm{\Omega}_N)$ by
        $\mathcal H_s^\varepsilon(\mu_N)$ changes the normalized logarithmic Laplace limits below by
        $o(1)$ at speed $N$ when $\beta_N/N\to1$ and by $o(1)$ at speed $\beta_N$ when $\beta_N/N\to\infty$.
    \end{lemma}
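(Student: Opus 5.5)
The proof is a direct computation: expand the double integral defining $\mathcal W_s^\varepsilon(\mu_N)$ into a sum over ordered pairs and separate the diagonal. Since $\mu_N\otimes\mu_N=N^{-2}\sum_{k,\ell}\delta_{(\bm\omega_k,\bm\omega_\ell)}$, the definition \eqref{eq:Ws-trunc} gives
\begin{align*}
\mathcal W_s^\varepsilon(\mu_N)
=\frac{1}{2N^2}\sum_{k,\ell=1}^N g_s^\varepsilon(\bm\omega_k-\bm\omega_\ell)
=\frac{1}{2N^2}\sum_{k\neq\ell} g_s^\varepsilon(\bm\omega_k-\bm\omega_\ell)+\frac{N\,g_s^\varepsilon(\bm 0)}{2N^2}.
\end{align*}
By \eqref{eq:WN-trunc}, the off-diagonal sum equals $2N(N-1)\,\mathcal W_{N,s}^\varepsilon(\bm\Omega_N)$. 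Substituting this gives \eqref{eq:diag-identity-rewrite} at once. Truncation is what makes this step legitimate, because $g_s^\varepsilon(\bm 0)=\varepsilon^{-s}$ is finite.

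For the equivalent form \eqref{eq:HN-approx-rewrite}, I will solve \eqref{eq:diag-identity-rewrite} for the pair energy:
\[
\mathcal W_{N,s}^\varepsilon(\bm\Omega_N)=\frac{N}{N-1}\,\mathcal W_s^\varepsilon(\mu_N)-\frac{g_s^\varepsilon(\bm 0)}{2(N-1)},
\]
and then use $\frac{N}{N-1}=1+\frac{1}{N-1}$. I will combine this with the exact linear identity $\mathcal E_N(\bm\Omega_N)=\int V\,d\mu_N$ from \eqref{eq:EN-as-integral}. Multiplying by $\lambda$ and adding then reproduces \eqref{eq:HN-approx-rewrite}, with $\mathcal H_s^\varepsilon$ as in \eqref{eq:EsM-def-rewrite}.

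For the uniform bound, I use that $0\le g_s^\varepsilon\le\varepsilon^{-s}$ pointwise when $s>0$. Integrating against the probability measure $\mu\otimes\mu$ then gives $0\le\mathcal W_s^\varepsilon(\mu)\le\varepsilon^{-s}/2$ for every $\mu\in\mathcal P(\Omega)$, which is the stated bound after multiplying by $|\lambda|/(N-1)$.

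The only step that needs care is the last claim about the Laplace limits. I will set
\[
R_N\defeq\mathcal H_{N,s}^\varepsilon(\bm\Omega_N)-\mathcal H_s^\varepsilon(\mu_N).
\]
The two bounds above give the deterministic estimate $\sup_{\bm\Omega_N}|R_N|\le C_\varepsilon|\lambda|/(N-1)$ with $C_\varepsilon=\varepsilon^{-s}$. Any normalized logarithmic Laplace functional has the form
\[
\frac{1}{r_N}\log\int_{\Omega^N}\exp\bigl(-\beta_N\mathcal H(\cdot)\bigr)\,F\;d\bm\Omega_N
\]
for a nonnegative weight $F$; this covers partition functions and probabilities of sets $\{\mu_N\in A\}$. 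Replacing $\mathcal H_{N,s}^\varepsilon$ by $\mathcal H_s^\varepsilon(\mu_N)$ in such a functional moves it by at most $\frac{\beta_N}{r_N}\sup|R_N|\le\frac{\beta_N}{r_N}\cdot\frac{C_\varepsilon|\lambda|}{N-1}$.

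I then check the two regimes. When $\beta_N/N\to1$ and $r_N=N$, the error is $O(1/N)\to0$. When $\beta_N/N\to\infty$ and $r_N=\beta_N$, the error is again $O(1/N)\to0$. In both cases the shift is $o(1)$ as claimed. I expect no real obstacle here; the point to flag is that every constant depends on $\varepsilon$, so the correction is negligible only for fixed $\varepsilon>0$. The limit $\varepsilon\downarrow0$ has to be handled separately in the main LDP argument, for example by monotone convergence of $\mathcal W_s^\varepsilon\uparrow\mathcal W_s$.
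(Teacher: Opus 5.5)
Your proof is correct and follows the paper's argument. You expand $\mathcal W_s^\varepsilon(\mu_N)$ over ordered pairs, split off the diagonal term $g_s^\varepsilon(\bm 0)/(2N)$, solve for $\mathcal W_{N,s}^\varepsilon$, add $\int V\,d\mu_N$, and bound the tilt using $0\le g_s^\varepsilon\le\varepsilon^{-s}$. The only cosmetic difference is in the last claim: you fold the constant $-\lambda g_s^\varepsilon(\bm 0)/(2(N-1))$ into a single sandwich estimate of size $\frac{\beta_N}{r_N}\sup|R_N|$, whereas the paper absorbs that constant into the partition function. Both routes give the same $O(1/N)$ conclusion at either speed.
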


    \begin{proof}
        Since $\mu_N=\frac1N\sum_{k=1}^N\delta_{\bm{\omega}_k}$, we have
        \[
        \mathcal W_{s}^{\varepsilon}(\mu_N)
        =
        \frac12\frac{1}{N^2}\sum_{k,\ell=1}^N g_{s}^{\varepsilon}(\bm{\omega}_k-\bm{\omega}_\ell)
        =
        \frac12\frac{1}{N^2}\sum_{k\neq \ell} g_{s}^{\varepsilon}(\bm{\omega}_k-\bm{\omega}_\ell)
        +
        \frac{g_{s}^{\varepsilon}(\bm{0})}{2N}.
        \]
        The off-diagonal term equals
        \[
        \frac{N-1}{N}\,
        \frac{1}{2N(N-1)}\sum_{k\neq \ell}g_s^\varepsilon(\bm{\omega}_k-\bm{\omega}_\ell)
        =
        \frac{N-1}{N}\mathcal W_{N,s}^\varepsilon(\bm{\Omega}_N),
        \]
        which proves \eqref{eq:diag-identity-rewrite}.  Solving this identity for
        $\mathcal W_{N,s}^{\varepsilon}$ and adding the linear term $\mathcal E_N=\int V\,d\mu_N$
        gives \eqref{eq:HN-approx-rewrite}.  The uniform bound follows from
        $0\le g_s^\varepsilon\le\varepsilon^{-s}$.
        The deterministic constant in \eqref{eq:HN-approx-rewrite} is absorbed into the partition function.  The remaining tilt is uniformly $O(1/N)$; after multiplication by $\beta_N$ and division by the relevant speed, its contribution is $O(1/N)$ both in the thermal scale $\beta_N\sim N$ and in the zero-temperature scale $\beta_N$. 
    \end{proof}

Lemma~\ref{lem:diag-identity} shows that the \(N(N-1)\)-normalized Hamiltonian and the
mean-field functional $\mathcal H_s^\varepsilon(\mu_N)$ have the same logarithmic Laplace limits at the speeds used below.
The diagonal constant is absorbed by the partition function, and the remaining finite-size tilt is uniformly negligible.
In what follows we therefore use the asymptotically equivalent representation
\[
\mathbb P_{N,\beta_N}^{\varepsilon}(\mathrm{d}\bm{\Omega}_N)
\asymp
\frac{1}{\widetilde Z_{N,\beta_N}^{\varepsilon}}
\exp\!\big(-\beta_N\,\mathcal H_{s}^{\varepsilon}(\mu_N)\big)\,\rho^{\otimes N}(\mathrm{d}\bm{\Omega}_N),
\]
where $\asymp$ denotes equivalence of the normalized logarithmic Laplace limits at the relevant speed.
    
    \medskip
    \subsubsection{Case $\beta_N/N\to 1$}
    Assume $\beta_N/N\to 1$ and write $\beta_N=N\alpha_N$ with $\alpha_N=O_N(1)$.
    We can rewrite the truncated Gibbs law as a tilt by
    $\mathcal H_{s}^{\varepsilon}(\mu_N)$:
    \begin{equation}\label{eq:tilt-functional}
        \mathbb P^{\varepsilon}_{N,\beta_N}(d\Omega_N)
        \propto
        \exp\!\big(-N\alpha_N\,\mathcal{H} _{s}^{\varepsilon}(\mu_N)\big)\,\rho^{\otimes N}(\mathrm{d}\bm{\Omega}_N).
    \end{equation}
    Let $F:\mathcal P(\Omega)\to\mathbb R$ be bounded and continuous. We leverage Varadhan's
    lemma \cite[p. 51]{Ellis2005}, a rigorous formulation of the Laplace principle (or the
    saddle point technique) applied to measures satisfying a large deviations
    property:

    \medskip
    
    \noindent
    \begin{lemma}[Varadhan's Lemma \cite{Ellis2005}]
    Suppose a sequence $\{Q_N\}_{N=1}^\infty$ of probability measures on $\mathcal{X}$
    satisfies a large deviations property with rate function $I(x)$. Let
    $F : \mathcal{X} \to \mathbb{R}$ be a continuous function that satisfies the
    tail condition
    \begin{align}
    \lim_{L \to \infty} \limsup_{N \to \infty}
    \frac{1}{N} \log \int_{x : F(x) \ge L} \exp\bigl(N F(x)\bigr)\, Q_N(dx)
    = -\infty .
    \end{align}
    \noindent
    Then
    \begin{align}
    \lim_{N \to \infty} \frac{1}{N} \log
    \int_{\mathcal{X}} \exp\bigl(N F(x)\bigr)\, Q_N(dx)
    = \sup_{x \in \mathcal{X}} \bigl\{ F(x) - I(x) \bigr\}.
    \end{align}
    \end{lemma}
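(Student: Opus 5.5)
The statement is the classical Laplace--Varadhan principle. I would prove the two inequalities, the lower bound and the upper bound for $\frac1N\log\int e^{NF}\,dQ_N$, separately. The only inputs are the LDP bounds for $\{Q_N\}$ (closed-set upper bound, open-set lower bound), continuity of $F$, and the tail condition. Set $\Lambda_N\defeq\int_{\mathcal X}\exp(NF)\,dQ_N$ and $\Lambda\defeq\sup_x\{F(x)-I(x)\}$.

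\emph{Lower bound.} Fix $x$ with $I(x)<\infty$ and $\delta>0$. By continuity, $G_\delta\defeq\{y:F(y)>F(x)-\delta\}$ is an open neighbourhood of $x$, so
\[
\Lambda_N\ge e^{N(F(x)-\delta)}Q_N(G_\delta).
\]
The open-set lower bound then gives
\[
\liminf_N\frac1N\log\Lambda_N\ge F(x)-\delta-\inf_{G_\delta}I\ge F(x)-\delta-I(x).
\]
Letting $\delta\downarrow0$ and taking the supremum over $x$ yields $\liminf\ge\Lambda$. Applying the lower bound to $G=\mathcal X$ shows $\inf I=0$, so $\Lambda>-\infty$.

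\emph{Upper bound.} Fix $L$ large and $M>0$, and split $\mathcal X$ into three pieces: $\{F\ge L\}$, $\{F<-M\}$, and $\{-M\le F<L\}$.
\begin{itemize}
\item The first piece is controlled directly by the tail condition.
\item The second contributes at most $e^{-NM}$, since $Q_N$ is a probability measure.
\item For the middle piece, choose a grid $-M=a_0<a_1<\dots<a_K=L$ with mesh $\delta$, and set $C_j\defeq\{a_{j}\le F\le a_{j+1}\}$. Each $C_j$ is closed because $F$ is continuous, and
\[
\int_{C_j}e^{NF}\,dQ_N\le e^{Na_{j+1}}Q_N(C_j).
\]
The closed-set upper bound gives $\limsup_N\frac1N\log(\cdot)\le a_{j+1}-\inf_{C_j}I$. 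On $C_j$ we have $a_{j+1}\le F+\delta$, so this is at most $\sup_{C_j}(F-I)+\delta\le\Lambda+\delta$.
\end{itemize}
Since the pieces are finitely many ($K+2$), the largest-exponent principle $\limsup\frac1N\log\sum_{j}b_{j,N}=\max_j\limsup\frac1N\log b_{j,N}$ combines them:
\[
\limsup_N\frac1N\log\Lambda_N\le\max\Big\{\Lambda+\delta,\;-M,\;\limsup_N\tfrac1N\log\!\int_{F\ge L}e^{NF}\,dQ_N\Big\}.
\]
Now choose $M>-\Lambda$, let $L\to\infty$ (using the tail condition to send the last term to $-\infty$), and then let $\delta\downarrow0$. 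This gives $\limsup\le\Lambda$.

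The main obstacle is the upper bound. Because $F$ may be unbounded, there are two issues: the level sets must be closed, and the unbounded region must be handled separately. That is exactly what the tail condition is for, and it must be applied before the discretization is refined. One also needs to check that the closed-set bound is applied only to closed sets, so that lower semicontinuity of $I$ is not needed beyond what the LDP already provides. In the application below, $F=-\alpha_N\mathcal H^\varepsilon_s$ is bounded and continuous, so the tail condition holds trivially for $L>\sup|F|$.
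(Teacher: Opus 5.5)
Your proof is correct. The paper gives no proof of its own here; it simply cites Ellis. What you wrote is the standard textbook argument for continuous $F$: an open-neighbourhood lower bound, and an upper bound obtained by slicing $\{-M\le F<L\}$ into finitely many closed bands $\{a_j\le F\le a_{j+1}\}$. The piece $\{F<-M\}$ is discarded trivially and $\{F\ge L\}$ is handled by the tail condition. Writing it out also shows that only the closed/open LDP bounds and continuity of $F$ are used, not goodness of $I$.

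There are two minor corrections.

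First, $\inf I=0$ comes from the LDP \emph{upper} bound applied to the closed set $\mathcal X$, which gives $\inf I\le 0$. The open-set lower bound with $G=\mathcal X$ only yields the vacuous $\inf I\ge 0$. In fact you do not need $\Lambda>-\infty$ at all. Set $T_L=\limsup_N\frac1N\log\int_{\{F\ge L\}}e^{NF}\,dQ_N$. Your estimate $\limsup_N\frac1N\log\Lambda_N\le\max\{\Lambda+\delta,\,-M,\,T_L\}$ holds for every $M$, $L$, $\delta$. So you can send $L\to\infty$, $M\to\infty$, $\delta\downarrow0$ without choosing $M>-\Lambda$.

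Second, your closing remark about the application is too quick. The lemma concerns a single $N$-independent $F$, whereas the paper's functional $-\alpha_N\mathcal H^\varepsilon_s$ depends on $N$. The fix is short: since $|\mathcal H^\varepsilon_s|\le C_{\lambda,\varepsilon}$ uniformly, replacing $\alpha_N$ by $1$ changes $\frac1N\log\int e^{-N\alpha_N\mathcal H^\varepsilon_s}\,dQ_N$ by at most $|\alpha_N-1|\,C_{\lambda,\varepsilon}\to0$. This reduction is needed before the lemma applies.
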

    \medskip    
    
    To apply Varadhan's lemma, we must verify the tail condition. Let
\[
Q_N \;\defeq\; \rho^{\otimes N}\circ \mu_N^{-1}
\]
denote the push-forward law of the empirical measure $\mu_N$ on $\mathcal P(\Omega)$ under $\rho^{\otimes N}$.
Concretely, for any measurable $A \subset \mathcal P(\Omega)$,
\[
Q_N(A)
=
\rho^{\otimes N}\bigl(\{\bm{\Omega}_N : \mu_N(\bm{\Omega}_N)\in A\}\bigr).
\]
The tail condition in Varadhan's lemma for a functional $\Phi:\mathcal P(\Omega)\to\mathbb R$ reads
\begin{equation}\label{eq:varadhan-tail-QN}
    \lim_{L\to\infty}\limsup_{N\to\infty}\frac1N
    \log\int_{\{\mu:\,\Phi(\mu)\ge L\}} e^{N\Phi(\mu)}\,Q_N(d\mu)
    \;=\;-\infty.
\end{equation}
Equivalently, since $\mu_N$ has law $Q_N$ under $\rho^{\otimes N}$,
\begin{equation}\label{eq:varadhan-tail-expectation}
    \lim_{L\to\infty}\limsup_{N\to\infty}\frac1N
    \log \mathbb E_{\rho^{\otimes N}}
    \Big[e^{N\Phi(\mu_N)}\mathbf 1_{\{\Phi(\mu_N)\ge L\}}\Big]
    \;=\;-\infty.
\end{equation}

\medskip

\noindent
In our application, we use the continuous functional for numerator
\begin{equation}\label{eq:PhiN-def}
    \Phi^{\varepsilon}_N(\mu)\;\defeq\; -\Big(\alpha_N\,\mathcal H_{s}^{\varepsilon}(\mu)+F(\mu)\Big),
    \qquad \alpha_N=\beta_N/N\to 1,
\end{equation}
and for the denominator the functional
\begin{equation}\label{eq:PsiN-def}
    \Psi^{\varepsilon}_N(\mu)\;\defeq\; -\alpha_N\,\mathcal H_{s}^{\varepsilon}(\mu).
\end{equation}
We verify \eqref{eq:varadhan-tail-QN} for $\Phi^{\varepsilon}_N$. The same argument applies to $\Psi^{\varepsilon}_N$.

To do so, we first show that $\mathcal H_{s}^{\varepsilon}$ is bounded for fixed $\varepsilon>0$. Recall
\[
\mathcal H_{s}^{\varepsilon}(\mu)=\int_\Omega V(\bm{\omega})\,\mu(\mathrm{d}\bm{\omega})\;+\;\lambda\,\mathcal W_{s}^{\varepsilon}(\mu), \quad
\text{with} \ 
\mathcal W_{s}^{\varepsilon}(\mu)=\frac12\iint g_{s}^{\varepsilon}(\bm{\omega}-\bm{\omega}')\,\mu(\mathrm{d}\bm{\omega})\mu(\mathrm{d}\bm{\omega}').
\]
Since $V$ is bounded, for all $\mu\in\mathcal P(\Omega)$,
\begin{align}
    \Big|\int_\Omega V\,\mathrm{d}\mu\Big|
    &\leq \|V\|_\infty \\
    &= \frac{1}{n(n-1)}
    \left\|
    \sum_{0 \le i \neq j \le n}
    y_i y_j
    \phi_{\bm{\omega}}(\bm{x}_i)
    \phi_{\bm{\omega}}(\bm{x}_j)
    \right\|_{\infty}
    \label{eq:V-bound}
\\
    &\leq |y_iy_j| 
    \left\|
    \phi_{\bm{\omega}}(\bm{x}_i)\|_{\infty}\|
    \phi_{\bm{\omega}}(\bm{x}_j)
    \right\|_{\infty}\\
    &\leq L_{\phi}^{2}.
\end{align}
Moreover, by definition of truncated kernel $g_{s}^{\varepsilon}(\bm{\omega})$ in Eq. \eqref{Eq:truncated_kernel}, we have $g_{s}^{\varepsilon}\le \varepsilon^{-s}$. Therefore, for any
$\mu\in\mathcal P(\Omega)$,
\begin{equation}\label{eq:W-bound}
    |\mathcal W_{s}^{\varepsilon}(\mu)|
    =\frac12\left|\iint g_{s}^{\varepsilon}(\omega-\omega')\,\mu(d\omega)\mu(d\omega')\right|
    \leq \frac12\left|\iint \varepsilon^{-s}\,\mu(d\omega)\mu(d\omega')\right|
    =\dfrac{\varepsilon^{-s}}{2}.
\end{equation}
Combining \eqref{eq:V-bound}--\eqref{eq:W-bound} gives, for all $\mu$,
\begin{equation}\label{eq:EsM-uniform-bound}
    |\mathcal H_{s}^{\varepsilon}(\mu)|
    \le L_{\phi}^{2}+ |\lambda|\,\dfrac{\varepsilon^{-s}}{2}
    \;\defeq\; C_{\lambda,\varepsilon}.
\end{equation}
Since $F$ is bounded and continuous, define
\[
\|F\|_\infty \;\defeq\; \sup_{\mu\in\mathcal P(\Omega)} |F(\mu)| < \infty.
\]
Moreover, since $\alpha_N=\mathcal{O}_{N}(1)$, there exists $N_0$  and constant $C>0$ such that for all $N\ge N_0$,
\begin{equation}\label{eq:alpha-bound}
    |\alpha_N|\le C.
\end{equation}
Then for all $N\ge N_0$ and all $\mu\in\mathcal P(\Omega)$,
\begin{align}
    |\Phi_N(\mu)|
    &= \left|-\alpha_N\mathcal H_{s}^{\varepsilon}(\mu) - F(\mu)\right|\\
    &\le |\alpha_N|\,|\mathcal H_{s}^{\varepsilon}(\mu)| + |F(\mu)|\\
    &\le CC_{\lambda,\varepsilon} + \|F\|_\infty
    \;\defeq\; K_{\varepsilon,F},
    \label{eq:PhiN-upper-bound}
\end{align}
where we used \eqref{eq:EsM-uniform-bound} and \eqref{eq:alpha-bound}.
Thus, $\Phi_N$ is bounded above by $K_{\varepsilon,F}$ uniformly in $\mu$, for all $N\ge N_0$.
Similarly, for the denominator functional,
\begin{equation}\label{eq:PsiN-upper-bound}
    \Psi_N(\mu)=-\alpha_N\mathcal H_{s}^{\varepsilon}(\mu)\le |\alpha_N|\,|\mathcal H_{s}^{\varepsilon}(\mu)|
    \le C C_\varepsilon \;\defeq\; K_\varepsilon,
    \qquad N\ge N_0.
\end{equation}

Fix any $L>K_{M,F}$. Then by \eqref{eq:PhiN-upper-bound}, for all $N\ge N_0$,
\[
\{\mu\in\mathcal P(\Omega):\Phi_N(\mu)\ge L\}=\varnothing.
\]
Hence the tail integral vanishes, i.e.,
\[
\int_{\{\mu:\Phi_N(\mu)\ge L\}} e^{N\Phi_N(\mu)}\,Q_N(d\mu)=0,
\qquad N\ge N_0.
\]
Therefore,
\[
\frac1N\log\int_{\{\mu:\Phi_N(\mu)\ge L\}} e^{N\Phi_N(\mu)}\,Q_N(d\mu)
=-\infty,
\qquad N\ge N_0.
\]
This establishes the tail condition \eqref{eq:varadhan-tail-QN}.
\begin{equation}\label{eq:tail-condition-conclusion}
    \lim_{L\to\infty}\limsup_{N\to\infty}\frac1N
    \log\int_{\{\mu:\Phi_N(\mu)\ge L\}} e^{N\Phi_N(\mu)}\,Q_N(d\mu)
    =-\infty,
\end{equation}
The same reasoning, using \eqref{eq:PsiN-upper-bound}, shows that the tail condition also holds
for the denominator functional $\Psi_N$.

    Now, by Varadhan's lemma applied to Sanov's LDP \eqref{eq:sanov}, we obtain
    \begin{align}
        \lim_{N\to\infty}\frac1N\log
        \mathbb E_{\rho^{\otimes N}}\!\Big[\exp\big(-N(\alpha_N\mathcal H_{s}^{\varepsilon}+F)(\mu_N)\big)\Big]
        &=
        -\inf_{\mu\in\mathcal P(\Omega)}\Big(\mathcal H_{s}^{\varepsilon}(\mu)+F(\mu)+\Ent(\mu\mid \rho)\Big),
        \label{eq:varadhan-numerator}\\
        \lim_{N\to\infty}\frac1N\log
        \mathbb E_{\rho^{\otimes N}}\!\Big[\exp\big(-N\alpha_N\mathcal H_{s}^{\varepsilon}(\mu_N)\big)\Big]
        &=
        -\inf_{\mu\in\mathcal P(\Omega)}\Big(\mathcal H_{s}^{\varepsilon}(\mu)+\Ent(\mu\mid \rho)\Big).
        \label{eq:varadhan-denominator}
    \end{align}
    Subtracting \eqref{eq:varadhan-denominator} from \eqref{eq:varadhan-numerator} yields the Laplace
    principle for $\mu_N$ under $\mathbb P_{N,\beta_N}^{\varepsilon}$ at speed $N$:
    \[
    \lim_{N\to\infty}
    -\frac1N\log \mathbb E_{\mathbb P_{N,\beta_N}^{\varepsilon}}\!\big[e^{-N F(\mu_N)}\big]
    =
    \inf_{\mu}\Big(F(\mu)+\mathcal J^{\varepsilon}_s(\mu)\Big),
    \]
    with
    \begin{equation}\label{eq:J-M-speed-N}
        \mathcal J^{\varepsilon}_s(\mu)
        =
        \Big(\mathcal E_{s}^{\varepsilon}(\mu)+\Ent(\mu\mid \rho)\Big)
        -\inf_{\nu\in\mathcal P(\Omega)}
        \Big(\mathcal E_{s}^{\varepsilon}(\nu)+\Ent(\nu\mid \rho)\Big).
    \end{equation}
    
    Since $\mathcal P(\Omega)$ is a Polish space and the Laplace principle    holds for all $F \in C_b(\mathcal P(\Omega))$, Bryc's inverse
    Varadhan lemma (see, e.g., \cite[Theorem~4.4.13]{DZ10})
    implies that $(\mu_N)_{N\ge1}$ satisfies a large deviation principle
    with speed $N$ and good rate function $\mathcal J_s^{\varepsilon}$.

    Finally let $\varepsilon\downarrow 0$. Because $g_{s}^{\varepsilon}\uparrow g_s$, we have
    $\mathcal H_{s}^{\varepsilon}(\mu)\rightarrow \mathcal H_s(\mu)$ pointwise (possibly $+\infty$),
    so $\mathcal H_s$ is lower semi-continuous as a supremum of continuous functions.
    Standard monotone truncation arguments for Laplace principles yield the same Laplace limit with
    $\mathcal H_s$ in place of $\mathcal H_{s}^{\varepsilon}$, and hence the LDP with rate
    \[
    \mathcal J_s(\mu)
    =
    \Big(\mathcal H_s(\mu)+\Ent(\mu\mid \rho)\Big)
    -\inf_{\gamma}\Big(\mathcal H_s(\gamma)+\Ent(\gamma\mid \rho)\Big).
    \]
    To convert to $\Ent(\cdot\mid \nu)$, note that
    $\Ent(\mu\mid \rho)=\Ent(\mu\mid \nu)+\log \nu(\Omega)$, and the additive constant
    cancels when subtracting the infimum. This gives the first bullet of the theorem.
    
    \noindent
    
    \medskip
    \noindent\subsubsection{Case $\beta_N/N\to\infty$}
    
    Fix $\varepsilon\ge 0$ and recall that under the equivalent representation (Lemma~\ref{lem:diag-identity}),
    \[
    \mathbb P_{N,\beta_N}^{\varepsilon}(\mathrm{d}\bm{\Omega}_N)
    =
    \frac{1}{\widetilde Z_{N,\beta_N}^{\varepsilon}}
    \exp\!\big(-\beta_N\,\mathcal H_{s}^{\varepsilon}(\mu_N)\big)\,\rho^{\otimes N}(\mathrm{d}\bm{\Omega}_N),
    \]
    where the partition function is
    \[
    \widetilde Z_{N,\beta_N}^{\varepsilon}
    \;\defeq\;
    \mathbb E_{\rho^{\otimes N}}\!\Big[\exp\!\big(-\beta_N\,\mathcal H_{s}^{\varepsilon}(\mu_N)\big)\Big].
    \]
    As before, let
    \[
    Q_N \;\defeq\; \rho^{\otimes N}\circ \mu_N^{-1}
    \]
    denote the law of $\mu_N$ on $\mathcal P(\Omega)$ under the reference measure $\rho^{\otimes N}$.
    
    \medskip
    We notice that Sanov probabilities are negligible at speed $\beta_N/N\rightarrow \infty$. The key point is that $Q_N$ satisfies an LDP at speed $N$, so any $Q_N$-probability is at worst
    $\exp(-cN)$, which is negligible on the scale $\beta_N\gg N$.
    
    \begin{lemma}[Sanov scale is negligible at speed $\beta_N$]\label{lem:Sanov-negligible-beta}
        Assume $\beta_N/N\to\infty$. Then for every nonempty open set $U\subset\mathcal P(\Omega)$,
        \[
        \lim_{N\to\infty}\frac{1}{\beta_N}\log Q_N(U)=0.
        \]
    \end{lemma}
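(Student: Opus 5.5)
We need to show that for every nonempty open $U\subset\mathcal P(\Omega)$, $\frac{1}{\beta_N}\log Q_N(U)\to 0$. Since $Q_N(U)\le 1$, we have $\limsup_N \frac{1}{\beta_N}\log Q_N(U)\le 0$ for free. So the whole task is the matching lower bound $\liminf_N \frac{1}{\beta_N}\log Q_N(U)\ge 0$.

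For the lower bound we invoke Sanov's theorem \eqref{eq:sanov}. Under $\rho^{\otimes N}$, $Q_N$ satisfies an LDP at speed $N$ with good rate function $I(\mu)=\Ent(\mu\mid\rho)$. Its lower bound gives
\[
\liminf_{N\to\infty}\frac1N\log Q_N(U)\ \ge\ -\inf_{\mu\in U}\Ent(\mu\mid\rho).
\]
We need this infimum to be finite, i.e.\ $U$ must contain some $\mu$ with $\Ent(\mu\mid\rho)<\infty$. For this we use that measures with bounded, strictly positive densities relative to $\rho$ are weakly dense in $\mathcal P(\Omega)$. Concretely, take $\mu_0\in U$ and set $\mu_\delta=(1-\delta)(\mu_0\ast k_\delta)|_\Omega$ renormalized, plus $\delta\rho$. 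Here $k_\delta$ is a mollifier, and we restrict to $\Omega$ and renormalize, which is possible because $\Omega$ is a bounded open set with positive Lebesgue measure. Then $\mu_\delta\to\mu_0$ weakly, so $\mu_\delta\in U$ for small $\delta$, and $\mu_\delta$ has a bounded density. A cruder alternative avoids mollification: use a convex combination of uniform measures on small balls inside $\Omega$ to approximate a finitely supported measure near $\mu_0$. Either way we obtain $c_U\defeq\inf_{U}\Ent(\cdot\mid\rho)<\infty$. Choosing the version that works cleanly near $\partial\Omega$ is the only place needing care, and it is the main (mild) obstacle.

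Hence for all large $N$, $Q_N(U)\ge e^{-N(c_U+1)}$. Dividing by $\beta_N$ gives
\[
\frac{1}{\beta_N}\log Q_N(U)\ \ge\ -\frac{N}{\beta_N}(c_U+1)\ \longrightarrow\ 0,
\]
since $\beta_N/N\to\infty$. Combining this with the trivial upper bound yields the limit $0$, which proves the lemma.
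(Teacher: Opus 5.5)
Your proposal is correct and matches the paper's proof: the trivial upper bound from $Q_N(U)\le 1$, then Sanov's lower bound together with weak density of finite-entropy measures to get $\inf_{U}\Ent(\cdot\mid\rho)<\infty$, and finally the factor $N/\beta_N\to 0$. Your explicit approximation (uniform measures on small balls inside the open set $\Omega$, or mollification) supplies the density claim that the paper only asserts, and your eventual bound $Q_N(U)\ge e^{-N(c_U+1)}$ is a slightly cleaner version of the paper's final step.
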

    
    \begin{proof}
        Since $Q_N(U)\le 1$, we have $\limsup_{N\to\infty}\frac{1}{\beta_N}\log Q_N(U)\le 0$.
        
        For the lower bound, note that $\rho$ has full support on $\Omega$ and the set
        $\{\mu\in\mathcal P(\Omega):\Ent(\mu\mid\rho)<\infty\}$ is dense in $\mathcal P(\Omega)$ for the weak topology,
        so any nonempty open $U$ contains some $\mu$ with finite entropy. Hence
        \[
        \inf_{\mu\in U}\Ent(\mu\mid\rho) < \infty.
        \]
        By the \emph{lower bound} in Sanov's theorem, for open $U$,
        \[
        \liminf_{N\to\infty}\frac{1}{N}\log Q_N(U)
        \;\ge\;
        -\inf_{\mu\in U}\Ent(\mu\mid\rho)
        \;>\;-\infty.
        \]
        Therefore,
        \[
        \liminf_{N\to\infty}\frac{1}{\beta_N}\log Q_N(U)
        =
        \liminf_{N\to\infty}\frac{N}{\beta_N}\cdot \frac{1}{N}\log Q_N(U)
        \;\ge\; 0,
        \]
        because $N/\beta_N\to 0$ and $\frac1N\log Q_N(U)$ is bounded below along a subsequence by a finite constant.
        Combining with the $\limsup\le0$ gives the desired limit $0$.
    \end{proof}
    
    \medskip
    Now, define the (truncated) ground state energy
    \[
    e_\varepsilon^\star \;\defeq\; \inf_{\mu\in\mathcal P(\Omega)} \mathcal H_{s}^{\varepsilon}(\mu).
    \]
    Since $g_{s}^{\varepsilon}$ and $V$ are bounded, $\mathcal H_{s}^{\varepsilon}$ is bounded and continuous on $\mathcal P(\Omega)$,
    hence $e_\varepsilon^\star\in\mathbb R$.
    
    \begin{lemma}[Partition function at scale $\beta_N$]\label{lem:Z-beta-asymptotics}
        Under $\beta_N/N\to\infty$,
        \[
        \lim_{N\to\infty}\frac{1}{\beta_N}\log \widetilde Z_{N,\beta_N}^{\varepsilon}
        =
        -\,e_\varepsilon^\star.
        \]
    \end{lemma}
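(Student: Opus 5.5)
We prove the two inequalities separately, using only that $\mathcal H_s^\varepsilon$ is bounded and weakly continuous on $\mathcal P(\Omega)$ for fixed $\varepsilon>0$, together with Lemma~\ref{lem:Sanov-negligible-beta}. Write
\[
\widetilde Z_{N,\beta_N}^{\varepsilon}=\int_{\mathcal P(\Omega)} e^{-\beta_N\mathcal H_s^\varepsilon(\mu)}\,Q_N(d\mu),
\]
which is the change of variables from $\rho^{\otimes N}$ to the law $Q_N$ of $\mu_N$.

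\emph{Upper bound.} Since $\mu_N\in\mathcal P(\Omega)$ almost surely, we have $\mathcal H_s^\varepsilon(\mu_N)\ge e_\varepsilon^\star$ pointwise. Hence $\widetilde Z_{N,\beta_N}^{\varepsilon}\le e^{-\beta_N e_\varepsilon^\star}$, because $Q_N$ is a probability measure. Taking logarithms and dividing by $\beta_N$ gives $\limsup_N \beta_N^{-1}\log\widetilde Z_{N,\beta_N}^{\varepsilon}\le -e_\varepsilon^\star$. This step needs no limit argument at all.

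\emph{Lower bound.} Fix $\delta>0$ and choose $\mu^\delta\in\mathcal P(\Omega)$ with $\mathcal H_s^\varepsilon(\mu^\delta)\le e_\varepsilon^\star+\delta$; such a near-minimizer exists since $e_\varepsilon^\star$ is finite. The map $\mu\mapsto\mathcal H_s^\varepsilon(\mu)$ is weakly continuous, since $V$ and $g_s^\varepsilon$ are bounded and continuous on the compact closure of $\Omega$. So the set
\[
U_\delta\defeq\{\mu:\mathcal H_s^\varepsilon(\mu)<e_\varepsilon^\star+2\delta\}
\]
is weakly open and nonempty, as it contains $\mu^\delta$. Restricting the integral to $U_\delta$ gives
\[
\widetilde Z_{N,\beta_N}^{\varepsilon}\ge e^{-\beta_N(e_\varepsilon^\star+2\delta)}\,Q_N(U_\delta).
\]
Hence $\beta_N^{-1}\log\widetilde Z_{N,\beta_N}^{\varepsilon}\ge -(e_\varepsilon^\star+2\delta)+\beta_N^{-1}\log Q_N(U_\delta)$. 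By Lemma~\ref{lem:Sanov-negligible-beta}, the last term tends to $0$. Therefore $\liminf_N\beta_N^{-1}\log\widetilde Z_{N,\beta_N}^{\varepsilon}\ge -e_\varepsilon^\star-2\delta$, and letting $\delta\downarrow0$ closes the argument.

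The only delicate step is invoking Lemma~\ref{lem:Sanov-negligible-beta}. That lemma requires $U_\delta$ to contain a measure of finite entropy relative to $\rho$. This is where one must be careful about the topology: the continuity of $\mathcal H_s^\varepsilon$ must hold in the same weak topology in which Sanov's lower bound is applied. Continuity of $g_s^\varepsilon$ is what guarantees this; it would fail for the untruncated singular kernel. That is why the lemma is stated only for fixed $\varepsilon>0$.

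One should also check that passing from the $N(N-1)$-normalized Hamiltonian to $\mathcal H_s^\varepsilon(\mu_N)$ only costs $O(1/N)$ on the scale $\beta_N$. This is guaranteed by Lemma~\ref{lem:diag-identity}, and it does not affect the argument above.
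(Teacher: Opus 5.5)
Your proof is correct and follows essentially the same route as the paper: the trivial upper bound $\widetilde Z_{N,\beta_N}^{\varepsilon}\le e^{-\beta_N e_\varepsilon^\star}$, and a lower bound obtained by restricting to an open neighborhood of a near-minimizer and using Lemma~\ref{lem:Sanov-negligible-beta} to make the term $\beta_N^{-1}\log Q_N(U)$ vanish. Your separate slack $\delta$ is cleaner than the paper's reuse of the truncation parameter $\varepsilon$ for the slack. Also, Lemma~\ref{lem:Sanov-negligible-beta} already applies to every nonempty open set, so you do not need a separate finite-entropy check on $U_\delta$.
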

    
    \begin{proof}
        \emph{Upper bound.} Since $\mathcal H_{s}^{\varepsilon}(\mu)\ge e_\varepsilon^\star$ for all $\mu$,
        \[
        \widetilde Z_{N,\beta_N}^{\varepsilon}
        =
        \mathbb E_{\rho^{\otimes N}}\!\Big[e^{-\beta_N \mathcal H_{s}^{\varepsilon}(\mu_N)}\Big]
        \le
        e^{-\beta_N e_\varepsilon^\star},
        \]
        so $\limsup_{N\to\infty}\frac{1}{\beta_N}\log \widetilde Z_{N,\beta_N}^{\varepsilon}\le -e_\varepsilon^\star$.
        
        \emph{Lower bound.} Fix $\varepsilon>0$ and choose $\mu_\varepsilon\in\mathcal P(\Omega)$ such that
        $\mathcal H_{s}^{\varepsilon}(\mu_\varepsilon)\le e_\varepsilon^\star+\varepsilon$.
        By continuity of $\mathcal H_{s}^{\varepsilon}$ there exists a nonempty open neighborhood $U$ of $\mu_\varepsilon$
        such that
        \[
        \sup_{\mu\in U}\mathcal H_{s}^{\varepsilon}(\mu)\le e_\varepsilon^\star+2\varepsilon.
        \]
        Then
        \[
        \widetilde Z_{N,\beta_N}^{\varepsilon}
        \ge
        \mathbb E_{\rho^{\otimes N}}\!\Big[\mathbf 1_{\{\mu_N\in U\}}\,e^{-\beta_N\mathcal H_{s}^{\varepsilon}(\mu_N)}\Big]
        \ge
        e^{-\beta_N(e_\varepsilon^\star+2\varepsilon)}\,Q_N(U).
        \]
        Taking $\log$ and dividing by $\beta_N$ yields
        \[
        \frac{1}{\beta_N}\log \widetilde Z_{N,\beta_N}^{\varepsilon}
        \ge
        -(e_\varepsilon^\star+2\varepsilon)+\frac{1}{\beta_N}\log Q_N(U).
        \]
        By Lemma~\ref{lem:Sanov-negligible-beta}, $\frac{1}{\beta_N}\log Q_N(U)\to 0$, hence
        \[
        \liminf_{N\to\infty}\frac{1}{\beta_N}\log \widetilde Z_{N,\beta_N}^{\varepsilon}
        \ge
        -(e_\varepsilon^\star+2\varepsilon).
        \]
        Letting $\varepsilon\downarrow 0$ gives the lower bound $\ge -e_M^\star$ and completes the proof.
    \end{proof}
    
    Let $F\subset\mathcal P(\Omega)$ be closed. Using the ratio representation,
    \[
    \mathbb P_{N,\beta_N}^{\varepsilon}(\mu_N\in F)
    =
    \frac{\mathbb E_{\rho^{\otimes N}}\!\big[\mathbf 1_{\{\mu_N\in F\}}\exp\left(-\beta_N\mathcal H_{s}^{\varepsilon}(\mu_N)\right)\big]}
    {\widetilde Z_{N,\beta_N}^{\varepsilon}}.
    \]
    On the event $\{\mu_N\in F\}$ we have
    $\exp\big(-\beta_N\mathcal H_{s}^{\varepsilon}(\mu_N)\big)\le
    \exp\big(-\beta_N\inf_{\mu\in F}\mathcal H_{s}^{\varepsilon}(\mu)\big)$, hence
    \[
    \mathbb E_{\rho^{\otimes N}}\!\big[\mathbf 1_{\{\mu_N\in F\}}\exp\left(-\beta_N\mathcal H_{s}^{\varepsilon}(\mu_N)\right)\big]
    \le
    \exp\!\Big(-\beta_N\inf_{\mu\in F}\mathcal H_{s}^{\varepsilon}(\mu)\Big).
    \]
    Therefore,
    \[
    \limsup_{N\to\infty}\frac{1}{\beta_N}\log \mathbb P_{N,\beta_N}^{\varepsilon}(\mu_N\in F)
    \le
    -\inf_{\mu\in F}\mathcal H_{s}^{\varepsilon}(\mu)
    -
    \liminf_{N\to\infty}\frac{1}{\beta_N}\log \widetilde Z_{N,\beta_N}^{\varepsilon}.
    \]
    By Lemma~\ref{lem:Z-beta-asymptotics}, $\lim_{N}\frac{1}{\beta_N}\log \widetilde Z_{N,\beta_N}^{\varepsilon}=-e_\varepsilon^\star$,
    so
    \begin{align}
        \label{Eq:Take_a_sup}
    \limsup_{N\to\infty}\frac{1}{\beta_N}\log \mathbb P_{N,\beta_N}^{\varepsilon}(\mu_N\in F)
    \le
    -\Big(\inf_{\mu\in F}\mathcal H_{s}^{\varepsilon}(\mu)-e_\varepsilon^\star\Big).
    \end{align}
    
    Now, let $G\subset\mathcal P(\Omega)$ be open and fix $\mu\in G$. By continuity of $\mathcal H_{s}^{\varepsilon}$,
    there exists a nonempty open neighborhood $U\subset G$ of $\mu$ such that
    \[
    \sup_{\nu\in U}\mathcal H_{s}^{\varepsilon}(\nu)\le \mathcal H_{s}^{\varepsilon}(\mu)+\varepsilon.
    \]
    Then
    \begin{align}
    \mathbb P_{N,\beta_N}^{\varepsilon}(\mu_N\in G)
    &\ge
    \mathbb P_{N,\beta_N}^{\varepsilon}(\mu_N\in U)\\
    &=
    \frac{\mathbb E_{\rho^{\otimes N}}\!\big[\mathbf 1_{\{\mu_N\in U\}}e^{-\beta_N\mathcal H_{s}^{\varepsilon}(\mu_N)}\big]}
    {\widetilde Z_{N,\beta_N}^{\varepsilon}}\\
    &\ge
    \frac{e^{-\beta_N(\mathcal H_{s}^{\varepsilon}(\mu)+\varepsilon)}\,Q_N(U)}{\widetilde Z_{N,\beta_N}^{\varepsilon}}.
    \end{align}
    Using the simple bound $\widetilde Z_{N,\beta_N}^{\varepsilon}\le e^{-\beta_N e_\varepsilon^\star}$ 
    we obtain
    \[
    \mathbb P_{N,\beta_N}^{\varepsilon}(\mu_N\in G)
    \ge
    \exp\!\Big(-\beta_N(\mathcal H_{s}^{\varepsilon}(\mu)-e_\varepsilon^\star+\varepsilon)\Big)\,Q_N(U).
    \]
    Taking $\log$ and dividing by $\beta_N$ gives
    \[
    \liminf_{N\to\infty}\frac{1}{\beta_N}\log \mathbb P_{N,\beta_N}^{\varepsilon}(\mu_N\in G)
    \ge
    -(\mathcal H_{s}^{\varepsilon}(\mu)-e_\varepsilon^\star+\varepsilon)
    +
    \liminf_{N\to\infty}\frac{1}{\beta_N}\log Q_N(U).
    \]
    By Lemma~\ref{lem:Sanov-negligible-beta}, the last term is $0$, hence
    \[
    \liminf_{N\to\infty}\frac{1}{\beta_N}\log \mathbb P_{N,\beta_N}^{\varepsilon}(\mu_N\in G)
    \ge
    -(\mathcal H_{s}^{\varepsilon}(\mu)-e_\varepsilon^\star+\varepsilon).
    \]
    Letting $\varepsilon\downarrow 0$ and then taking the infimum over $\mu\in G$ yields
    \begin{align}
        \label{Eq:Take_a_inf}
    \liminf_{N\to\infty}\frac{1}{\beta_N}\log \mathbb P_{N,\beta_N}^{\varepsilon}(\mu_N\in G)
    \ge
    -\Big(\inf_{\mu\in G}\mathcal H_{s,M}(\mu)-e_\varepsilon^\star\Big).
    \end{align}

    Combining Eqs. \eqref{Eq:Take_a_sup} and \eqref{Eq:Take_a_inf} and applying Definition \eqref{Definition: Large Deviation Principle} establishes that $(\mu_N)_{N\ge1}$ satisfies an LDP under $\mathbb P_{N,\beta_N}^{\varepsilon}$ with speed $\beta_N$
    and good rate function
    \[
    \mathcal J^{\varepsilon}_s(\mu)
    \;\defeq\;
    \mathcal H_{s}^{\varepsilon}(\mu)-e_\varepsilon^\star
    =
    \mathcal H_{s}^{\varepsilon}(\mu)-\inf_{\nu\in\mathcal P(\Omega)}\mathcal H_{s}^{\varepsilon}(\nu).
    \]
    
    Finally, let $\varepsilon\downarrow 0$. Since $g_{s}^{\varepsilon}\uparrow g_s$, we have $\mathcal H_{s}^{\varepsilon}(\mu)\uparrow \mathcal H_s(\mu)$
    pointwise (possibly $+\infty$), and
    \[
    e_\varepsilon^\star=\inf_{\mu}\mathcal H_{s}^{\varepsilon}(\mu)\uparrow \inf_{\mu}\mathcal H_s(\mu)\defeq e^\star.
    \]
    Thus $\mathcal J^{\varepsilon}_s(\mu)\uparrow \mathcal J_s(\mu)$ pointwise, where
    \[
    \mathcal J_s(\mu)
    \;\defeq\;
    \mathcal H_s(\mu)-e^\star
    =
    \mathcal H_s(\mu)-\inf_{\nu\in\mathcal P(\Omega)}\mathcal H_s(\nu),
    \]
    which is the claimed rate function in the regime $\beta_N/N\to\infty$.

    \medskip
    \noindent\subsubsection{goodness of rate function.}
    In the first regime, $\Ent(\cdot\mid \ell)$ has compact level sets (Sanov rate is good),
    and $\mathcal E_s$ is lower semicontinuous, so $\mathcal E_s+\Ent(\cdot\mid\ell)$ has compact sublevel sets,
    hence $\mathcal J_s$ is good.
    In the second regime, $\mathcal E_s$ is lower semicontinuous and the underlying space is bounded,
    so the sublevel sets of $\mathcal E_s$ are tight; together with lower semicontinuity this yields goodness.


\end{document}